\title[Wasserstein variational inference]{Variational inference via Wasserstein gradient flows}
\def\set@curr@file#1{\def\@curr@file{#1}} %temp workaround for 2019 latex release
\begin{document}

\maketitle

\begin{abstract}
Along with Markov chain Monte Carlo ({\sc mcmc}) methods, variational inference ({\sc vi}) has emerged as a central computational approach to large-scale Bayesian inference. Rather than sampling from the true posterior $\pi$, {\sc vi} aims at producing a simple but effective approximation $\hat \pi$ to $\pi$ for which summary statistics are easy to compute. However, unlike the well-studied {\sc mcmc} methodology, algorithmic guarantees for {\sc vi} are still relatively less well-understood. In this work, we propose principled methods for {\sc vi}, in which $\hat \pi$ is taken to be a Gaussian or a mixture of Gaussians, which rest upon the theory of gradient flows on the Bures--Wasserstein space of Gaussian measures. Akin to {\sc mcmc}, it comes with strong theoretical guarantees when $\pi$ is  log-concave.
\end{abstract}

% \tableofcontents

% \newpage 

\section{Introduction}

This work brings together three active research areas: variational inference, variational Kalman filtering, and gradient flows on the Wasserstein space.

\paragraph{Variational inference.}
The development of large-scale Bayesian methods has fueled the need for fast and scalable methods to approximate complex distributions. More specifically, Bayesian methodology typically generates a high-dimensional posterior distribution $\pi \propto \exp (-V)$ that is known only up to normalizing constants, making the computation even of simple summary statistics such as the mean and covariance a major computational hurdle. To overcome this limitation, two distinct computational approaches are largely favored. The first approach consists of Markov chain Monte Carlo (\mcmc{}) methods that rely on carefully constructed Markov chains which (approximately) converge to $\pi$. For example,  the \emph{Langevin diffusion}
\begin{align}\label{eq:langevin}
    \D X_t
    &= - \nabla V(X_t) \, \D t + \sqrt 2 \, \D B_t \,,
\end{align}
 where ${(B_t)}_{t\ge 0}$ denotes standard Brownian motion on $\R^d$, admits $\pi$ as a stationary distribution.
 Crucially, the Langevin diffusion can be discretized and implemented without knowledge of the normalizing constant of $\pi$, leading to practical algorithms for Bayesian inference. 
Recent theoretical efforts have produced sharp non-asymptotic convergence guarantees for algorithms based on the Langevin diffusion (or variants thereof), with many results known when $\pi$ is strongly log-concave or satisfies isoperimetric assumptions~\citep[see, e.g.,][]{durmusmajewski2019lmcconvex, shenlee2019randomizedmidpoint, vempala2019ulaisoperimetry, chenetal2020hmc, dalalyanrioudurand2020underdamped, chewietal2021lmcpoincare, leeshentian2021rgo, maetal2021nesterovmcmc, wuschche2022minimaxmala}.

More recently, Variational Inference (\vi{}) has emerged as a viable alternative to \mcmc{}~\citep{JorGhaJaa99, WaiJor08,BleKucMcA17}. The goal of \vi{} is to approximate the posterior~$\pi$ by a more tractable distribution $\hat \pi \in \cP$ such that
\begin{align}\label{eq:vi}
    \hat \pi
    &\in \argmin_{p \in \mc P} \KL(p \mmid \pi)\,.
\end{align}
A common example arises when $\mc P$ is the class of product distributions, in which case $\hat \pi$ is called the \emph{mean-field} approximation of $\mc P$. Unfortunately, by definition, mean-field approximations fail to capture important correlations present in the posterior $\pi$, and various remedies have been proposed, with varied levels of success.  In this paper, we largely focus on obtaining a Gaussian approximation to $\pi$, that is, we take $\mc P$ to be the class of non-degenerate Gaussian distributions on $\R^d$~\citep{Barber97, seeger1999bayesianmodelselection, honkelavalpola2004variationalbayes, Opper09, ZhaSunDuv18, XuCam22computationalgvi}.
The expressive power of the variational model may then be further increased by considering mixture distributions~\citep{linkhanschmidt2019mixturevi, daudeldouc2021mixtureopt, daudeldoucportier2021alphadiv}.

Although the solution $\hat \pi$ of~\eqref{eq:vi} is no longer equal to the true posterior, variational inference remains heavily used in practice because the problem~\eqref{eq:vi} can be solved for simple models $\mc P$ via scalable optimization algorithms.
In particular, \vi{} avoids many of the practical hurdles associated with \mcmc{} methods---such as the potentially long ``burn-in'' period of samplers and the lack of effective stopping criteria for the algorithm---while still producing informative summary statistics.
In this regard, we highlight the fact that obtaining an approximation for the covariance matrix of $\pi$ via \mcmc{} methods requires drawing potentially many samples, whereas for many choices of $\mc P$ (e.g., the Gaussian approximation) the covariance matrix of $\hat\pi$ can be directly obtained from the solution to the \vi{} problem~\eqref{eq:vi}.

\begin{wrapfigure}[14]{r}{0.5\textwidth}
\vspace{-1.5em}
  \centering
    \includegraphics[width=0.24\textwidth]{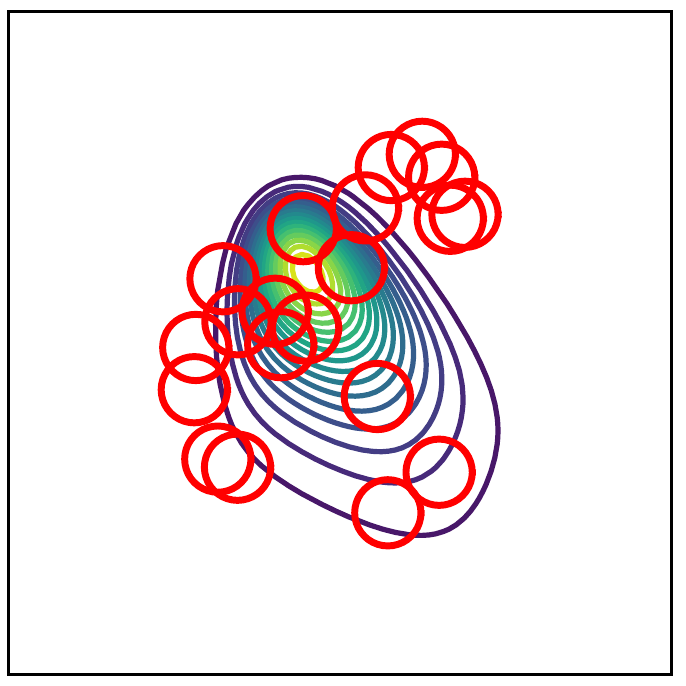}
    \includegraphics[width=0.24\textwidth]{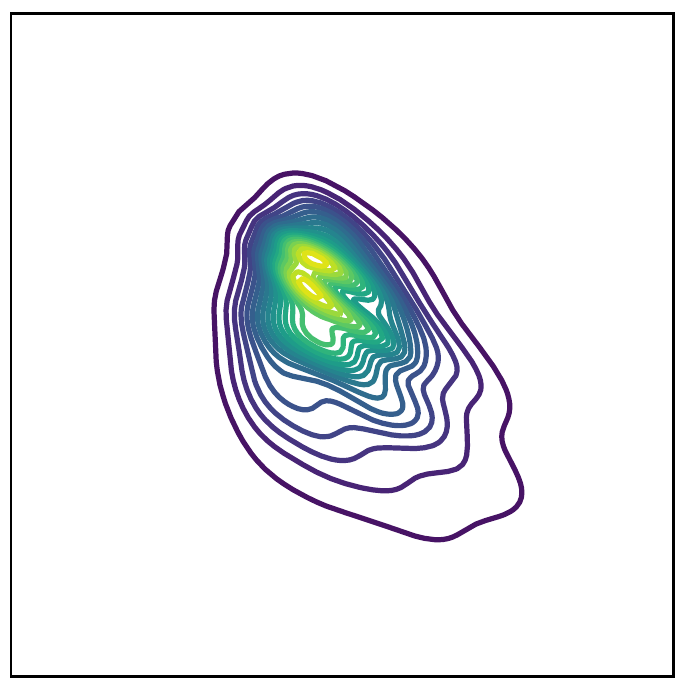}
    \caption{Left: randomly initialized mixture of $20$ Gaussians (the initial covariances are depicted as red circles) and contour plot of a logistic target~$\pi$. Right: contour lines of a mixture of Gaussians approximation $\hat \pi$ obtained from the gradient flow in Section~\ref{scn:mixtures}.}
    \label{fig:intro}
\end{wrapfigure}

However, in contrast with \mcmc{} methods, to date there have not been many theoretical guarantees for \vi{}, even when $\pi$ is strongly log-concave and $\mc P$ is taken to be the class of Gaussians $\normal(m, \Sigma)$.
The problem stems from the fact that the objective in~\eqref{eq:vi} is typically non-convex in the pair $(m,\Sigma)$.
Obtaining such guarantees remains a pressing challenge for the field.

\paragraph{Variational Kalman filtering.}
There is also considerable interest in extending ideas behind variational inference to dynamical settings of Bayesian inference. 
Consider a general framework where ${(\pi_t)}_{t}$ represents the marginal laws of a stochastic process indexed by time $t$, which can be discrete or continuous.
The goal is to recursively build a Gaussian approximation to ${(\pi_t)}_{t}$.

As a concrete example, suppose that ${(\pi_t)}_{t\ge 0}$ denotes the marginal law of the solution to the Langevin diffusion~\eqref{eq:langevin}.
In the context of Bayesian optimal filtering and smoothing,~\cite{Sarkka07}  proposed the following heuristic. 
Let $(m_t,\Sigma_t)$ denote the mean and covariance matrix of $\pi_t$.
Then, it can be checked (see Section~\ref{scn:mean_cov_fp}) that
\begin{align}\label{eq:mean_cov_fp}
   {\begin{aligned}
        \dot m_t
        &= -\E \nabla V(X_t) \\
        \dot \Sigma_t
        &= 2I - \E[\nabla V(X_t) \otimes (X_t - m_t) + (X_t - m_t) \otimes \nabla V(X_t)]
    \end{aligned}}
\end{align}
where $X_t \sim \pi_t$.
These ordinary differential equations (ODEs) are intractable because they involve expectations under the law of $X_t\sim \pi_t$, which is not available to the practitioner.
However, if we replace   $X_t\sim \pi_t$ with a Gaussian $Y_t \sim p_t = \normal(m_t,\Sigma_t)$ with the same mean and covariance as $X_t$,
then the system of ODEs
\begin{align}\label{eq:sarkka}
    \boxed{\begin{aligned}
        \dot m_t
        &= -\E  \nabla V(Y_t) \\
        \dot \Sigma_t
        &= 2I - \E [\nabla V(Y_t) \otimes (Y_t - m_t) + (Y_t - m_t) \otimes \nabla V(Y_t)]
    \end{aligned}}
\end{align}
yields a well-defined evolution of Gaussian distributions
${(p_t)}_{t\ge 0}$, which we may optimistically believe to be a good approximation of ${(\pi_t)}_{t\ge 0}$.
Moreover, the system of ODEs can be numerically approximated efficiently in practice using Gaussian quadrature rules to compute the above expectations. This is the principle behind the unscented Kalman filter~\citep{julieruhlmanndurrantwhyte2000ukf}.

In the context of the Langevin diffusion, S\"arkk\"a's heuristic \eqref{eq:sarkka} provides a promising avenue towards computational \vi{}.
Indeed, since $\pi \propto \exp(-V)$ is the unique stationary distribution of the Langevin diffusion~\eqref{eq:langevin}, an algorithm to approximate ${(\pi_t)}_{t\ge 0}$ is expected to furnish an algorithm to solve the VI problem~\eqref{eq:vi}.
However, at present there is little theoretical understanding of how the system~\eqref{eq:sarkka} approximates~\eqref{eq:mean_cov_fp}; moreover, S\"arkk\"a's heuristic only provides Gaussian approximations, and it is unclear how to extend the system~\eqref{eq:sarkka} to more complex models (e.g., mixtures of Gaussians).

\paragraph{Our contributions: bridging the gap via Wasserstein gradient flows.}
We show that the approximation ${(p_t)}_{t\ge 0}$ in S\"arkk\"a's heuristic~\eqref{eq:sarkka} arises precisely as the gradient flow of the Kullback--Leibler (KL) divergence $\KL(\cdot \mmid \pi)$ on the Bures{--}Wasserstein space of Gaussian distributions on $\R^d$ endowed with the $2$-Wasserstein distance from optimal transport~\citep{villani2003topics}. This perspective allows us to not only understand its convergence but also to extend it to the richer space of  mixtures of Gaussian distributions, and propose an implementation as a novel system of interacting ``Gaussian particles''. Below, we proceed to describe our contributions in greater detail.

Our framework builds upon the seminal work of~\cite{Jordan98}, which introduced the celebrated \emph{JKO scheme} in order to give meaning to the idea that the evolving marginal law of the Langevin diffusion~\eqref{eq:langevin} is a gradient flow of $\KL(\cdot \mmid \pi)$ on the Wasserstein space $\mc P_2(\R^d)$ of probability measures with finite second moments. Subsequently, in order to emphasize the Riemannian geometry underlying this result,~\citet{otto2001porousmedium} developed his eponymous calculus on $\mc P_2(\R^d)$, a framework which has had tremendous impact in analysis, geometry, PDE, probability, and statistics.

Inspired by this perspective, we show in Theorem~\ref{thm:main} that S\"arkk\"a's approximation ${(p_t)}_{t\ge 0}$ is also a gradient flow of $\KL(\cdot \mmid \pi)$, with the main difference being that it is \emph{constrained} to lie on the submanifold $\BW(\R^d)$ of $\mc P_2(\R^d)$ consisting of Gaussian distributions, known as the Bures--Wasserstein manifold. In turn,
our result paves the way for new theoretical understanding via the powerful theory of gradient flows.
As a first step, using well-known results about convex functionals on the Wasserstein space, we show in Corollary~\ref{COR:CONT_TIME_GUARANTEE} that ${(p_t)}_{t\ge 0}$ converges
rapidly to the solution of the \vi{} problem~\eqref{eq:vi} with $\mc P = \BW(\R^d)$ as soon as $V$ is convex. Moreover, in Section~\ref{scn:numerical_integration}, we apply numerical integration based on cubature rules for Gaussian integrals to the system of ODEs~\eqref{eq:sarkka}, thus arriving at a fast method with robust empirical performance (details in Sections~\ref{scn:xp-vi} and \ref{scn:xp-gmmvi}).

This combination of results brings \vi{} closer to Langevin-based \mcmc{} both on the practical and theoretical fronts, but still falls short of achieving non-asymptotic discretization guarantees as pioneered by~\citet{Dal17a} for \mcmc{}.
To further close the theoretical gap between \vi{} and the state of the art for \mcmc{}, we propose in Section~\ref{scn:bwsgd} a stochastic gradient descent (SGD) algorithm as a time discretization of   the Bures{--}Wasserstein  gradient flow.
This algorithm comes with convergence guarantees that establish \vi{} as a solid competitor to \mcmc{} not only from a practical standpoint but also from a theoretical one.
Both have their relative merits; whereas \mcmc{} targets the true posterior, \vi{} leads to fast computation of summary statistics of the approximation $\hat\pi$ to $\pi$.

In Section~\ref{scn:mixtures}, we consider an
extension of these ideas to the substantially more flexible class of mixtures of Gaussians. Namely, the space of mixtures of Gaussians can be identified as a Wasserstein space over $\BW(\R^d)$ and hence inherits Otto's differential calculus. Leveraging this viewpoint, in Theorem~\ref{THM:MIXTURE_GF} we derive the gradient flow of $\KL(\cdot \mmid \pi)$ over the space of mixtures of Gaussians and propose to implement it via a system of interacting particles. Unlike typical particle-based algorithms,
here our particles correspond to Gaussian distributions, and the collection thereof to a Gaussian mixture which is better equipped to approximate a continuous measure. We validate the empirical performance of our method with promising experimental results (see Section~\ref{scn:xp-gmmvi}). Although we focus on the \vi{} problem in this work, we anticipate that our notion of ``Gaussian particles'' may be a broadly useful extension of classical particle methods for PDEs.

\paragraph{Related work.}
Classical \vi{} methods define a parametric family $\mc P=\{p_\theta\,:\, \theta \in \Theta\}$ and minimize $\theta \mapsto \KL(p_\theta\mmid \pi)$ over $\theta \in \Theta$ 
using off-the-shelf optimization algorithms~\citep{PaiBleJor12,RanGerBle14}. Since~\eqref{eq:vi} is an optimization problem over the space of probability distributions, we argue for methods that respect a natural geometric structure on this space.
In this regard, previous approaches to \vi{} using natural gradients implicitly employ a different geometry~\citep{Wu19, huangetal2022derivfree, Khan22}, namely the reparameterization-invariant Fisher--Rao geometry~\citep{AmaNag00}.
The application of Wasserstein gradient flows to \vi{} was introduced earlier in work on normalizing flows and Stein Variational Gradient Descent (SVGD)~\citep{liuwang2016svgd, liu2017svgdgf}.

Our work falls in line with a number of recent papers aiming to place \vi{} on a solid theoretical footing~\citep{alqridcho2016vi, wangblei2019vbayesconsistency, domke2020vismooth, knojewdam2022vi, XuCam22computationalgvi}.
Some of these works in particular have obtained non-asymptotic algorithmic guarantees for specific examples, see, e.g.,~\citet{chabar2013gaussiankl}.
We also mention that the approach we take in this paper is closely related to the algorithms and analysis arrived at in~\citet{alquierridgway2020variational, domke2020vismooth, GalPerOpp21vargauss}.
In particular,~\citet{GalPerOpp21vargauss} derive an algorithm for low-rank Gaussian \vi{} by seeking a descent condition for the KL divergence, yielding a method resembling Algorithm~\ref{alg:bwsgd} albeit without quantitative convergence guarantees.
Also,~\citet{alquierridgway2020variational, domke2020vismooth} show that parametrizing the Gaussian by the \emph{square root} of the covariance matrix yields convexity and smoothness properties for the Gaussian \vi{} objective, which in turn allows for applying Euclidean gradient methods. This choice of parametrization is closely related to the Bures{--}Wasserstein geometry approach we take, see Appendix~\ref{scn:bures_wasserstein} for background.
However, we note that these works do not analyze the effect of stochastic gradients, which is crucial for implementation.

The connection between \vi{} and Kalman filtering was studied in the static case by~\cite{Lambert21,Lambert22}, and extended to the dynamical case by \cite{Lambert22b}, providing a first justification of S\"arkk\"a's heuristic in terms of local variational Gaussian approximation. In particular, the closest linear process to the Langevin diffusion~\eqref{eq:langevin} is a Gaussian process governed by a McKean--Vlasov equation whose Gaussian marginals have parameters evolving according to S\"arkk\"a's ODEs.

Constrained gradient flows on the Wasserstein space have also been extensively studied~\citep{carlengangbo2003constrained, cagliotietal2009constrainedns, tudorascuwunsch2011nonlocal, eberleniethammerschlichting2017constrainedfp}, although our interpretation of S\"arkk\"a's heuristic is, to the best of our knowledge, new.

\section{Background}

In order to define gradient flows on the space of probability measures, we must first endow this space with a geometry; see Appendix~\ref{scn:otto} for more details. Given probability measures $\mu$ and $\nu$ on $\R^d$, define the \emph{$2$-Wasserstein distance}
\begin{align*}
    W_2(\mu,\nu)
    &= \Bigl[\inf_{\gamma \in \eu C(\mu,\nu)} \int \norm{x-y}^2 \, \D \gamma(x,y)\Bigr]^{1/2}\,,
\end{align*}
where $\eu C(\mu,\nu)$ is the set of \emph{couplings} of $\mu$ and $\nu$, that is, joint distributions on $\R^d\times\R^d$ whose marginals are $\mu$ and $\nu$ respectively.
This quantity is finite as long as $\mu$ and $\nu$ belong to the space $\mc P_2(\R^d)$ of probability measures over $\R^d$ with finite second moments.
The $2$-Wasserstein distance has the interpretation of measuring the smallest possible mean squared displacement of mass required to \emph{transport} $\mu$ to $\nu$; we refer to~\citet{villani2003topics, villani2009ot, santambrogio2015ot} for textbook treatments on optimal transport. Unlike other notions of distance between probability measures, such as the total variation distance, the $2$-Wasserstein distance respects the geometry of the underlying space $\R^d$, leading to numerous applications in modern data science~\citep[see, e.g.,][]{peyre2019computational}.

The space $(\mc P_2(\R^d), W_2)$ is a metric space~\citep[Theorem 7.3]{villani2003topics}, and we refer to it as the \emph{Wasserstein space}. However, as shown by Otto~\citep{otto2001porousmedium}, it has a far richer geometric structure: formally, $(\mc P_2(\R^d), W_2)$ can be viewed as a Riemannian manifold, a fact which allows for considering gradient flows of functionals on $\mc P_2(\R^d)$. A fundamental example of such a functional is the KL divergence $\KL(\cdot \mmid \pi)$ to a target density $\pi  \propto \exp(-V)$ on $\R^d$, for which~\citet{Jordan98} showed that the Wasserstein gradient flow is the same as the evolution of the marginal law of the Langevin diffusion~\eqref{eq:langevin}.
This optimization perspective has had tremendous impact on our understanding and development of \mcmc{} algorithms~\citep{wibisono2018samplingoptimization}.

\section{Variational inference with Gaussians} \label{sec2}

In this section we describe our  problem using two equivalent approaches: a variational approach based on a modified version of the JKO scheme of \citet{Jordan98} (Section    \ref{pb1}), and a Wasserstein gradient flow approach based on Otto calculus (Section \ref{pb2}). Both lead to the same result (Section \ref{scn:gradient_flow_kl_bw}). While the former is more accessible to readers who are unfamiliar with gradient flows on the
Wasserstein space, the latter leads to strong convergence guarantees (Section \ref{Convergence}). 

\subsection{Variational approach: the Bures--JKO  scheme} \label{pb1}
The space of non-degenerate Gaussian distributions on $\R^d$ equipped with the $W_2$ distance forms the \emph{Bures--Wasserstein space} $\BW(\R^d) \subseteq \mc P_2(\R^d)$. On $\BW(\R^d)$, the  Wasserstein distance $W_2^2(p_0, p_1)$ between two Gaussians   $p_0=\mathcal{N}(m_0,\Sigma_0)$ and  $p_1=\mathcal{N}(m_1,\Sigma_1)$ admits the following closed form:
 \begin{align}
 &W_2^2(p_0, p_1)= \norm{m_0-m_1}^2 + \mathcal{B}^2(\Sigma_0,\Sigma_1)\,,\label{distance:BW:eq}
    \end{align}
where  $\mathcal{B}^2(\Sigma_0,\Sigma_1)=\tr(\Sigma_0+ \Sigma_1 - 2 \, (\Sigma_0^{\frac{1}{2}} \Sigma_1 \Sigma_0^{\frac{1}{2}})^{\frac{1}{2}})$ is the squared Bures metric~\citep{Bures69}.

Given a target density $\pi \propto \exp(-V)$ on $\R^d$, and with a step size $h > 0$, we may define the iterates of the proximal point algorithm
\begin{align}\label{BJKO}
    p_{k+1,h}
    &\deq \argmin_{p\in \BW(\R^d)}\Bigl\{ \KL(p \mmid \pi) + \frac{1}{2h} \, W_2^2(p, p_{k,h})\Bigr\}\,.
\end{align}
Using~\eqref{distance:BW:eq}, this is an explicit optimization problem involving the mean and covariance matrix of $p$.
Although~\eqref{BJKO} is not solvable in closed form, by letting $h\searrow 0$ we obtain a limiting curve ${(p_t)}_{t\ge 0}$ via $p_t = \lim_{h\searrow 0} p_{\lfloor t/h \rfloor, h}$, which can be interpreted as the Bures--Wasserstein gradient flow of the KL divergence $\KL(\cdot \mmid \pi)$.
This procedure mimics the JKO scheme~\citep{Jordan98} with the additional constraint that the iterates lie in $\BW(\R^d)$, and we therefore call it the Bures--JKO scheme.

\subsection{Geometric approach: the Bures--Wasserstein gradient flow of the KL divergence} \label{pb2}

In the formal sense of Otto described above,  $\BW(\R^d)$ is a submanifold of $\mc P_2(\R^d)$. Moreover, since Gaussians can be parameterized by their mean and covariance, $\BW(\R^d)$ can be identified with the manifold $\R^d\times \mb S_{++}^d$, where $\mb S_{++}^d$ is the cone of symmetric positive definite $d\times d$ matrices. Hence, $\BW(\R^d)$ is a genuine Riemannian manifold in its own right~\citep[see][]{modin2017matrixdecomposition, malmonpis2018bw, Bhatia19},  and gradient flows can be defined using Riemannian geometry~\citep{docarmo1992riemannian}. See Section~\ref{scn:bures_wasserstein} for more details. Since the functional $\mu \mapsto \eu F(\mu)=\KL(\mu \mmid \pi)$ defined over $\mc P_2(\R^d)$ restricts to a functional over $\BW(\R^d)$, we can also consider the gradient flow of $\eu F$ over the Bures{--}Wasserstein space; note that this latter gradient flow is necessarily a curve ${(p_t)}_{t\ge 0}$ such that each $p_t$ is a Gaussian measure.

\subsection{Variational inference via the Bures--Wasserstein gradient flow}\label{scn:gradient_flow_kl_bw}
 
Using either approach, we can prove the following theorem.

\begin{theorem}\label{thm:main}
Let $\pi \propto \exp(-V)$ be the target density on $\R^d$.
Then, the limiting curve ${(p_t)}_{t\ge 0}$ where $p_t=\normal(m_t, \Sigma_t)$ is obtained via the Bures--JKO scheme \eqref{BJKO}, or equivalently, the Bures--Wasserstein gradient flow ${(p_t)}_{t\ge 0}$ of the KL divergence $\KL(\cdot \mmid \pi)$, satisfies S\"arkk\"a's system of ODEs~\eqref{eq:sarkka}.
\end{theorem}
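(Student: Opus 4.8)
The plan is to establish the equivalence in two pieces: first, to compute the Bures--Wasserstein gradient of $\eu F = \KL(\cdot\mmid\pi)$ at a Gaussian $p = \normal(m,\Sigma)$ in the coordinates $(m,\Sigma)$, and second, to verify that the resulting ODE system coincides with \eqref{eq:sarkka}. For the first piece, I would use the identification of $\BW(\R^d)$ with $\R^d\times\mb S_{++}^d$ and the known form of the Bures--Wasserstein metric: at a point $(m,\Sigma)$, the metric on the $\Sigma$-component pairs tangent vectors $U,V\in\mb S^d$ via $\langle U,V\rangle = \tr(U\Sigma V')$ where $\Sigma V' + V'\Sigma = V$ (the Lyapunov operator), and the $m$-component is just the Euclidean inner product. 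Computing the ordinary (Euclidean) partial derivatives of $(m,\Sigma)\mapsto \KL(\normal(m,\Sigma)\mmid\pi)$ is a standard Gaussian calculus exercise: writing $\KL = \E_{\normal(m,\Sigma)}[V] - \tfrac12\log\det\Sigma + \text{const}$, one gets $\partial_m \eu F = \E\nabla V(Y)$ and $\partial_\Sigma\eu F = \tfrac12\,\E\nabla^2 V(Y) - \tfrac12\Sigma^{-1}$ where $Y\sim\normal(m,\Sigma)$. The Bures--Wasserstein gradient is then obtained from these Euclidean derivatives by applying the inverse metric tensor, and the gradient flow is $\dot m_t = -\grad_m\eu F$, $\dot\Sigma_t = -\grad_\Sigma\eu F$.

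The second piece is to reconcile the expression involving $\nabla^2 V$ with the expression in \eqref{eq:sarkka} involving $\nabla V(Y)\otimes(Y-m)$. Here the key tool is Stein's identity (Gaussian integration by parts): for $Y\sim\normal(m,\Sigma)$,
\begin{align*}
    \E[\nabla V(Y)\otimes (Y-m)]
    &= \Sigma\,\E\nabla^2 V(Y)\,,
\end{align*}
and symmetrizing gives $\E[\nabla V(Y)\otimes(Y-m) + (Y-m)\otimes\nabla V(Y)] = \Sigma\,\E\nabla^2 V(Y) + \E\nabla^2 V(Y)\,\Sigma$. Feeding this into the covariance ODE, together with the effect of the Bures metric (whose inverse precisely converts the ``$\Sigma^{-1}$'' and Hessian terms into the symmetrized first-order form after multiplying through by $\Sigma$ on both sides), should produce exactly $\dot\Sigma_t = 2I - \E[\nabla V(Y_t)\otimes(Y_t-m_t) + (Y_t-m_t)\otimes\nabla V(Y_t)]$. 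The mean equation $\dot m_t = -\E\nabla V(Y_t)$ falls out directly since the metric is Euclidean in $m$.

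For the Bures--JKO route, I would instead expand the proximal problem \eqref{BJKO} to first order in $h$: write $p = \normal(m_{k,h}+\delta m,\ \Sigma_{k,h}+\delta\Sigma)$, use the closed form \eqref{distance:BW:eq} to get $W_2^2(p,p_{k,h}) = \norm{\delta m}^2 + \tfrac14\,\langle\delta\Sigma,\delta\Sigma\rangle_{\text{Bures}} + o(\norm{\delta\Sigma}^2)$ (the Bures metric is the Hessian of $\mathcal B^2$ at coincident points), set the gradient of the objective to zero, and take $h\searrow 0$ to recover the same ODE system; the rigorous justification that the discrete iterates converge to this limiting curve can be invoked from the general theory of gradient flows / minimizing movements, or cited to the Bures--Wasserstein literature. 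The main obstacle I anticipate is purely bookkeeping: correctly tracking the Lyapunov/metric operator through the computation so that the Hessian-based gradient and the first-order-Stein form of \eqref{eq:sarkka} match on the nose, including the constant $2I$ coming from the $\sqrt2$ in the Langevin diffusion and the $-\tfrac12\log\det\Sigma$ entropy term. A secondary point requiring care is justifying the interchange of differentiation and expectation (and existence of $\E\nabla^2 V$), which needs mild regularity/growth assumptions on $V$ that should be stated alongside the theorem.
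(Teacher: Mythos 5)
Your proposal is correct and follows essentially the same route as the paper: the direct Bures--Wasserstein gradient computation (Euclidean partials of the KL divergence, the factor-of-two/Lyapunov conversion coming from the metric, and Gaussian integration by parts to pass to the Hessian-free form of~\eqref{eq:sarkka}) is exactly the paper's Otto-calculus proof, and your first-order expansion of the proximal objective is the paper's Bures--JKO argument. One small correction: the $2I$ term arises solely from symmetrizing the entropy gradient $-\tfrac{1}{2}\Sigma^{-1}$ against $\Sigma$ after the metric doubles it; the $\sqrt{2}$ in the Langevin diffusion plays no role in this proof.
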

\begin{proof}
The proof using the Bures--JKO scheme is given in Section~\ref{direct:proof:sec} and the proof using Otto calculus is presented in Section~\ref{Proof2}.
\end{proof}

This theorem shows that  S\"arkk\"a's heuristic~\eqref{eq:sarkka} precisely yields the Wasserstein gradient flow of the KL divergence over the submanifold $\BW(\R^d)$.
Equipped with this interpretation, we are now able to obtain information about the asymptotic behavior of the approximation ${(p_t)}_{t\ge 0}$.
Namely, we can hope that it converges to constrained minimizer $\hat \pi = \argmin_{p \in \BW(\R^d)} \KL(p\mmid \pi)$, i.e., precisely the solution to the \vi{} problem~\eqref{eq:vi}. In the next section, we show that this convergence in fact holds as soon as $V$ is convex, and moreover with quantitative rates.

The solution $\hat\pi$ to~\eqref{eq:vi}, and consequently the limit point of S\"arkk\"a's approximation, is well-studied in the variational inference literature~\citep[see,  e.g.,][]{Opper09}, and we recall standard facts about $\hat\pi$ here for completeness.
It is known that $\hat\pi$ satisfies the equations
\begin{align}\label{eq:score_matching}
    \E_{\hat \pi} \nabla V = 0 \qquad\text{and}\qquad \E_{\hat \pi}\nabla^2 V = \hat\Sigma^{-1},
\end{align}
where $\hat\Sigma$ is the covariance matrix of $\hat\pi$ (these equations can also be derived as first-order necessary conditions by setting the Bures{--}Wasserstein gradient derived in Section~\ref{Proof2} to zero).
In particular, it follows from~\eqref{eq:score_matching} that if $\nabla^2 V$ enjoys the bounds $\alpha I \preceq \nabla^2 V \preceq \beta I$ for some $-\infty \le \alpha \le \beta \le \infty$, then any solution $\hat\pi$ to the constrained problem also satisfies $\beta^{-1} \, I \preceq \hat\Sigma \preceq (\alpha\vee 0)^{-1}\, I$.

\subsection{Continuous-time convergence} \label{Convergence}

Besides providing an intuitive interpretation of S\"arkk\"a's heuristic, Theorem~\ref{thm:main} readily yields convergence criteria for the system~\eqref{eq:sarkka} which rest upon general principles for gradient flows. We begin with a key observation.
For a functional $\eu F : \BW(\R^d)\to\R\cup\{\infty\}$ and $\alpha\in\R$, we say that $\eu F$ is \emph{$\alpha$-convex} if for all constant-speed geodesics ${(p_t)}_{t\in [0,1]}$ in $\BW(\R^d)$,
\begin{align*}
    \eu F(p_t)
    &\le (1-t) \, \eu F(p_0) + t \, \eu F(p_1) - \frac{\alpha \, t \, (1-t)}{2} \, W_2^2(p_0, p_1)\,, \qquad t\in [0,1]\,.
\end{align*}

\begin{lemma}\label{lem:strong_cvxty_kl_bw}
    For any $\alpha \in \R$, if $\nabla^2 V \succeq \alpha I$, then $\KL(\cdot \mmid \pi)$ is $\alpha$-convex on $\BW(\R^d)$.
\end{lemma}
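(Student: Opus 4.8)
The plan is to exploit the well-known fact that $\alpha$-convexity of $\KL(\cdot\mmid\pi)$ along Wasserstein geodesics in the full space $\mc P_2(\R^d)$ is equivalent (via the Otto/McCann displacement-convexity theory) to $\nabla^2 V \succeq \alpha I$, and then observe that this property is inherited by the submanifold $\BW(\R^d)$. The subtlety is that $\alpha$-convexity on a submanifold must be checked along the submanifold's \emph{own} constant-speed geodesics, which need not coincide with the ambient Wasserstein geodesics. So the first step is to pin down what these geodesics are: I would recall (citing the Bures--Wasserstein background in Appendix~\ref{scn:bures_wasserstein}) that the $W_2$-geodesic between two Gaussians $p_0 = \normal(m_0,\Sigma_0)$ and $p_1 = \normal(m_1,\Sigma_1)$ is given by the family of pushforwards $p_t = (T_t)_\# p_0$ where $T_t = (1-t)\,\id + t\,T$ and $T$ is the (linear) Monge map from $p_0$ to $p_1$; each $p_t$ is again Gaussian, hence lies in $\BW(\R^d)$. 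This shows that $\BW(\R^d)$ is \emph{geodesically convex} in $(\mc P_2(\R^d), W_2)$, so its intrinsic geodesics are exactly the ambient displacement-interpolation geodesics restricted to Gaussians.

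Given that, the second step is immediate: McCann's theorem on displacement convexity of the relative entropy (see, e.g., \citealt{villani2009ot}) states that if $\pi\propto\exp(-V)$ with $\nabla^2 V \succeq \alpha I$, then along \emph{any} constant-speed $W_2$-geodesic ${(p_t)}_{t\in[0,1]}$ in $\mc P_2(\R^d)$ one has
\[
    \KL(p_t\mmid\pi) \le (1-t)\,\KL(p_0\mmid\pi) + t\,\KL(p_1\mmid\pi) - \frac{\alpha\,t\,(1-t)}{2}\,W_2^2(p_0,p_1)\,.
\]
Specializing to $p_0, p_1 \in \BW(\R^d)$, the interpolating curve stays in $\BW(\R^d)$ by the first step, and so the inequality defining $\alpha$-convexity of $\KL(\cdot\mmid\pi)$ on $\BW(\R^d)$ holds verbatim. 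This completes the argument.

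The main obstacle — really the only place any care is needed — is the first step: making sure the relevant geodesic structure is the one I claimed, i.e., that restricting the ambient $W_2$-geodesic to a curve of Gaussians genuinely produces a constant-speed geodesic of the Riemannian manifold $\BW(\R^d) \cong \R^d\times\mb S_{++}^d$ (with the Bures--Wasserstein metric), and that there are no \emph{other} curves one ought to test. This is where I would lean on the identification of the Bures--Wasserstein geodesics recalled in Appendix~\ref{scn:bures_wasserstein}: because the Monge map between two Gaussians is affine, the displacement interpolation never leaves $\BW(\R^d)$, and the induced length metric on $\BW(\R^d)$ coincides with the restriction of $W_2$; hence the two notions of geodesic agree and testing $\alpha$-convexity along Bures--Wasserstein geodesics is the same as testing it along ambient geodesics with Gaussian endpoints. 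Alternatively, one can bypass geometry entirely and verify the displacement-convexity inequality directly using the explicit form of $W_2^2$ in~\eqref{distance:BW:eq} together with the Gaussian change-of-variables formula for $\KL(\cdot\mmid\pi)$, but the McCann-theorem route is cleaner and is the one I would present.
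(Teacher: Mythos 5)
Your proposal is correct and follows essentially the same route as the paper's proof: invoke displacement ($\alpha$-)convexity of $\KL(\cdot \mmid \pi)$ on all of $(\mc P_2(\R^d), W_2)$ under the hypothesis $\nabla^2 V \succeq \alpha I$ (McCann/Villani), and then observe that $\BW(\R^d)$ is a geodesically convex subset---since the Monge map between Gaussians is affine---so its intrinsic geodesics coincide with the ambient ones and the convexity inequality restricts verbatim. The only difference is that you spell out the geodesic-convexity step in more detail than the paper, which simply cites its Bures--Wasserstein appendix for that fact.
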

\begin{proof}
    The assumption that $\nabla^2 V \succeq \alpha I$ entails that the functional $\KL(\cdot \mmid \pi)$ is $\alpha$-convex on the entire Wasserstein space $(\mc P_2(\R^d), W_2)$ \citep[see, e.g.,][Theorem 17.15]{villani2009ot}. Since $\BW(\R^d)$ is a geodesically convex subset of $\mc P_2(\R^d)$ (see Section~\ref{scn:bures_wasserstein}), then the geodesics in $\BW(\R^d)$ agree with the geodesics in $\mc P_2(\R^d)$, from which it follows that $\KL(\cdot \mmid \pi)$ is $\alpha$-convex on $\BW(\R^d)$.
\end{proof}

Consequently, we obtain the following corollary. Its proof is postponed to Section~\ref{scn:pf_cont_time_guarantee}.

\begin{corollary}\label{COR:CONT_TIME_GUARANTEE}
    Suppose that $\nabla^2 V \succeq \alpha I$ for some $\alpha \in \R$. Then, for any $p_0 \in \BW(\R^d)$, there is a unique solution to the $\BW(\R^d)$ gradient flow of $\KL(\cdot \mmid \pi)$ started at $p_0$. Moreover:

\noindent 1.  If $\alpha > 0$, then for all $t\ge 0$, $
            W_2^2(p_t, \hat \pi)
            \le \exp(-2\alpha t) \, W_2^2(p_0, \hat \pi)
        $.
        
\noindent 2.  If $\alpha > 0$, then for all $t\ge 0$, 
        $
            \KL(p_t \mmid \pi) - \KL(\hat \pi \mmid \pi)
            \le \exp(-2\alpha t) \, \{ \KL(p_0 \mmid \pi) - \KL(\hat \pi \mmid \pi)\}
       $.

\noindent 3. If $\alpha = 0$, then for all $t > 0$,
        $
            \KL(p_t \mmid \pi) - \KL(\hat\pi \mmid \pi)
            \le \frac{1}{2t}\, W_2^2(p_0, \hat\pi)
        $.
\end{corollary}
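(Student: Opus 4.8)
The plan is to invoke the standard theory of gradient flows of geodesically $\alpha$-convex functionals on a (formal) Riemannian manifold, applied to $\eu F = \KL(\cdot \mmid \pi)$ restricted to $\BW(\R^d)$. The key input is Lemma \ref{lem:strong_cvxty_kl_bw}, which gives $\alpha$-convexity of $\eu F$ along geodesics of $\BW(\R^d)$; since $\BW(\R^d)$ is geodesically convex in $\mc P_2(\R^d)$, all of Otto's calculus and the AGS (Ambrosio--Gigli--Savar\'e) machinery for gradient flows of $\lambda$-convex functionals transfers verbatim. Existence and uniqueness of the gradient flow started at any $p_0 \in \BW(\R^d)$ then follows from the general theory (using that $\eu F$ is proper, lower semicontinuous, and $\alpha$-convex, and that $\BW(\R^d) \cong \R^d \times \mb S_{++}^d$ is a complete geodesic space in $W_2$), with the flow characterized as the limit of the Bures--JKO scheme \eqref{BJKO} by Theorem \ref{thm:main}.

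For the three quantitative bounds, I would first record the \emph{evolution variational inequality} (EVI): for a $\lambda$-convex functional, the gradient flow ${(p_t)}_{t \ge 0}$ satisfies, for every $q \in \BW(\R^d)$,
\begin{align*}
    \frac12 \, \frac{\D}{\D t} W_2^2(p_t, q) + \frac{\alpha}{2} \, W_2^2(p_t, q) + \eu F(p_t) \le \eu F(q)\,.
\end{align*}
Taking $q = \hat\pi$ (the constrained minimizer, so $\eu F(\hat\pi) \le \eu F(p_t)$) and dropping the nonnegative term $\eu F(p_t) - \eu F(\hat\pi)$ gives $\frac{\D}{\D t} W_2^2(p_t,\hat\pi) \le -2\alpha\, W_2^2(p_t,\hat\pi)$, and Gr\"onwall yields item 1. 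For item 2, I would use that along the gradient flow $\frac{\D}{\D t}\eu F(p_t) = -\abs{\nabla \eu F(p_t)}^2$ (with $\abs{\nabla\eu F}$ the metric slope), combined with the $\alpha$-convexity gradient-domination inequality $\eu F(p_t) - \eu F(\hat\pi) \le \frac{1}{2\alpha}\abs{\nabla \eu F(p_t)}^2$ when $\alpha > 0$ (a Polyak--\L ojasiewicz-type bound, standard for $\alpha$-convex functionals), giving $\frac{\D}{\D t}\{\eu F(p_t) - \eu F(\hat\pi)\} \le -2\alpha\{\eu F(p_t) - \eu F(\hat\pi)\}$ and then Gr\"onwall. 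For item 3 ($\alpha = 0$), I would use the EVI with $\alpha = 0$ together with the fact that $t \mapsto \eu F(p_t)$ is nonincreasing: integrating $\frac12 \frac{\D}{\D t} W_2^2(p_t,\hat\pi) \le \eu F(\hat\pi) - \eu F(p_t)$ over $[0,t]$ and bounding $\eu F(\hat\pi) - \eu F(p_s) \le \eu F(\hat\pi) - \eu F(p_t)$ for $s \le t$ gives $\frac12 W_2^2(p_0,\hat\pi) \ge \frac12 W_2^2(p_t,\hat\pi) + t\,\{\eu F(p_t) - \eu F(\hat\pi)\} \ge t\,\{\eu F(p_t) - \eu F(\hat\pi)\}$, which rearranges to the claim.

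The main obstacle — or rather the one point requiring care rather than genuine difficulty — is justifying that the abstract gradient-flow theory applies on $\BW(\R^d)$ with the same constants, i.e., that restricting to the submanifold does not spoil the EVI or the slope identity. This is where geodesic convexity of $\BW(\R^d)$ in $\mc P_2(\R^d)$ is essential: it ensures the geodesics (hence the notion of $\alpha$-convexity) and the metric slope computed intrinsically on $\BW(\R^d)$ coincide with the ambient ones restricted appropriately, so one may either cite the AGS theory directly on the complete geodesic space $(\BW(\R^d), W_2)$ or work in the explicit finite-dimensional Riemannian chart $\R^d \times \mb S_{++}^d$ from Section \ref{scn:bures_wasserstein}, where all the above manipulations are elementary ODE arguments. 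I would present the finite-dimensional Riemannian viewpoint as the cleanest route, citing standard references for $\alpha$-convex gradient flows on Riemannian manifolds, and remark that it agrees with the Bures--JKO limit by Theorem \ref{thm:main}.
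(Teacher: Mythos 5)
Your proposal is correct and follows essentially the same route as the paper: both rest on the $\alpha$-convexity from Lemma~\ref{lem:strong_cvxty_kl_bw} together with geodesic convexity of $\BW(\R^d)$, both prove item 2 via the slope identity $\partial_t \eu F(p_t) = -\norm{\nabla \eu F(p_t)}_{p_t}^2$ combined with the Polyak--{\L}ojasiewicz-type bound, and your integrated EVI argument for item 3 is just the integral form of the paper's Lyapunov functional $t\,\eu F(p_t) + \frac{1}{2}\,W_2^2(p_t,\hat\pi)$. The only cosmetic difference is that the paper derives uniqueness and item 1 simultaneously from a self-contained two-flow contraction estimate (adding the convexity inequalities at $p_t$ and $q_t$ and applying Gr\"onwall), whereas you delegate existence/uniqueness to the abstract theory and obtain item 1 from the EVI against the fixed point $\hat\pi$.
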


The assumption that $\nabla^2 V \succeq \alpha I$ for some $\alpha > 0$, i.e., that $\pi$ is \emph{strongly log-concave}, is a standard assumption in the \mcmc{} literature.
Under this same assumption, Corollary~\ref{COR:CONT_TIME_GUARANTEE} yields convergence for the Bures--Wasserstein gradient flow of $\KL(\cdot \mmid \pi)$; however, the flow must first be discretized in time for implementation.
If we assume additionally that the smoothness condition $\nabla^2 V \preceq \beta I$ holds, then a surge of recent research has succeeded in obtaining precise non-asymptotic guarantees for discretized \mcmc{} algorithms. In Section~\ref{scn:bwsgd} below, we will show how to do the same for \vi{}.

\section{Time discretization of the Bures--Wasserstein gradient flow} \label{sec3}

We are now equipped with dual perspectives on a dynamical solution to Gaussian \vi{}:
ODE and gradient flow. Each perspective leads to a different implementation. On the one hand, we discretize the system of ODEs defined in~\eqref{eq:sarkka} using numerical integration. On the other, we discretize the gradient flow using stochastic gradient descent in the Bures{--}Wasserstein space.

\begin{figure}
\centering
\includegraphics[scale=0.4]{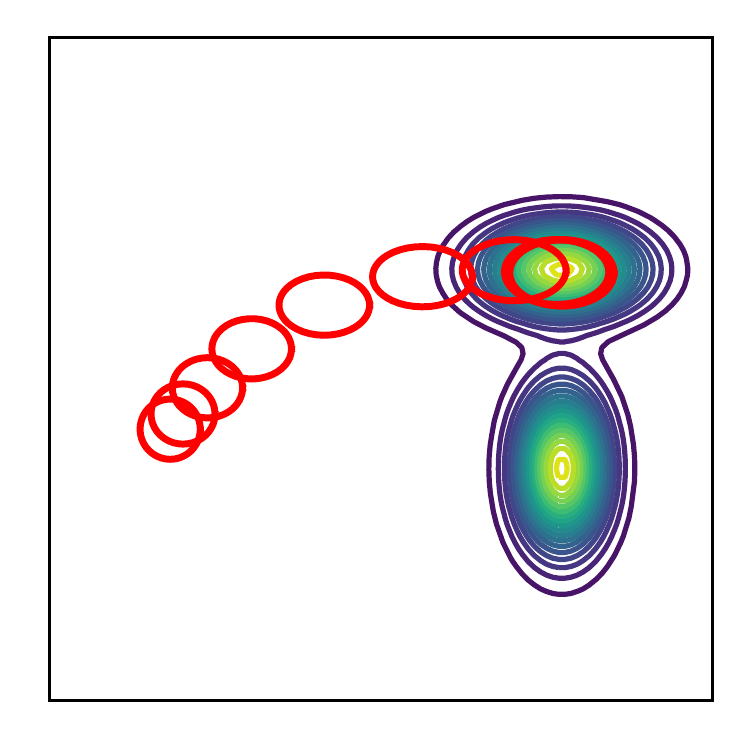}
\includegraphics[scale=0.4]{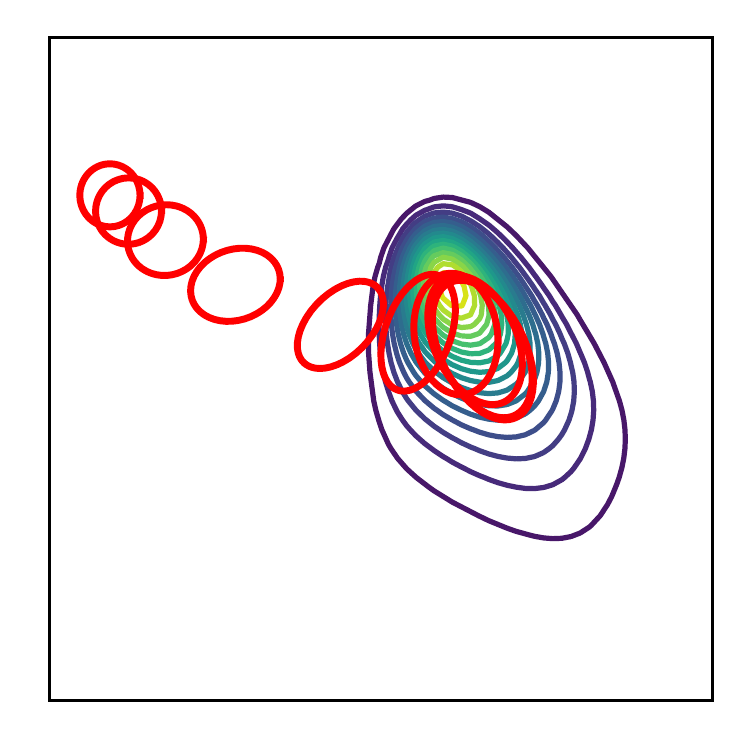}
\includegraphics[scale=0.4]{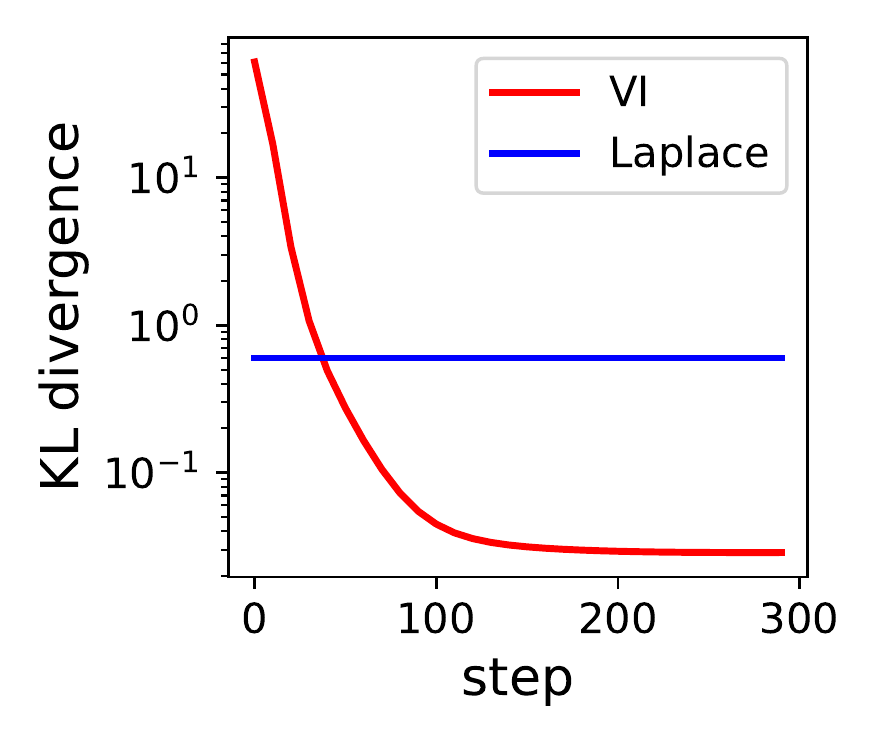}
\includegraphics[scale=0.4]{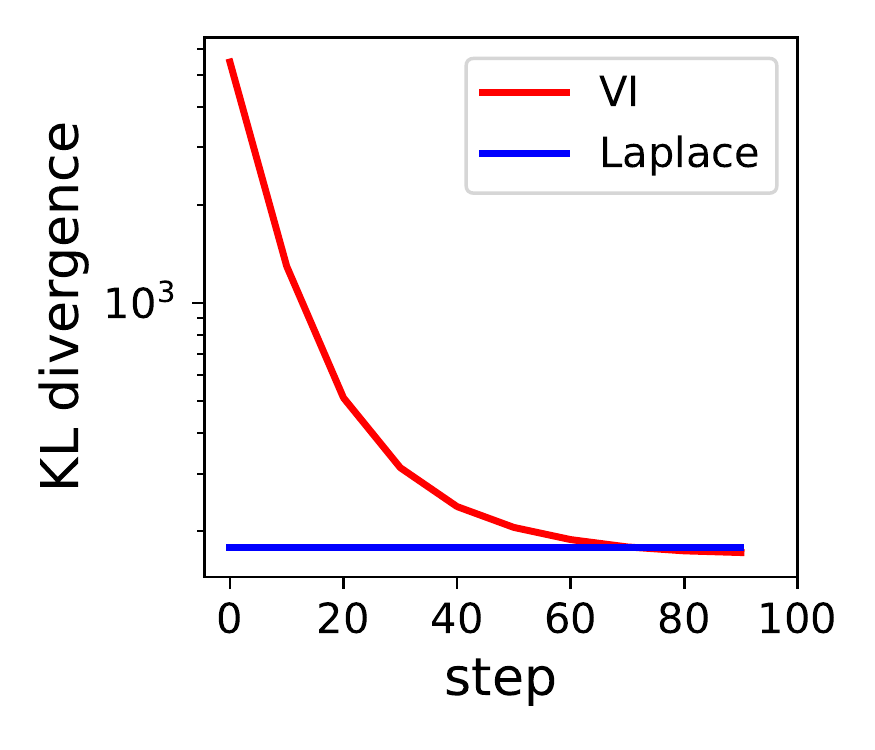}
\caption{Two left plots: approximation of a bimodal target and a logistic target. Two right plots: convergence of the KL in dimension $2$ and $100$ for the logistic target. Our algorithm yields better approximation in KL than the Laplace approximation (see Appendix  \ref{scn:xphd} for details).
}
\label{GPxp0}
\end{figure}

\subsection{Numerical integration of the ODEs}\label{scn:numerical_integration}

The system of ODEs~\eqref{eq:sarkka} can be integrated in time using a classical Runge--Kutta scheme. The expectations under a Gaussian support are approximated by cubature rules used in Kalman filtering~\citep{Arasaratnam09}.
Moreover, a square root version of the ODE is also considered to ensure that covariance matrices remain symmetric and positive. See Appendix~\ref{scn:numerical} for more details. We have tested our method on a bimodal distribution and on a posterior distribution arising from a logistic regression problem. We observe fast convergence as shown in Figure~\ref{GPxp0}.

\subsection{Bures--Wasserstein SGD and theoretical guarantees for VI}\label{scn:bwsgd}

Although the ODE discretization proposed in the preceding section enjoys strong empirical performance, it is unclear how to quantify its impact on the convergence rates established in Corollary~\ref{COR:CONT_TIME_GUARANTEE}.
Therefore, we now propose a stochastic gradient descent algorithm over the Bures--Wasserstein space, for which useful analysis tools have been developed~\citep{CheMauRig20a,altschuleretal2023bwbarycenter}. This approach bypasses the use of the system of ODEs~\eqref{eq:sarkka}, and instead discretizes the Bures--Wasserstein gradient flow directly.
Under the standard assumption of strong log-concavity and log-smoothness,
it leads to an algorithm (Algorithm~\ref{alg:bwsgd}) for approximating $\hat \pi$ with provable convergence guarantees.

\begin{wrapfigure}[10]{r}{0.52\textwidth}
\vspace{-0.5em}
\hfill
\begin{minipage}{0.5\textwidth}
\vspace{-1em}
\begin{algorithm}[H]
\caption{Bures{--}Wasserstein SGD} \label{alg:bwsgd}
\LinesNotNumbered
\KwData{strong convexity parameter $\alpha > 0$; step size $h > 0$; mean $m_0$ and covariance matrix $\Sigma_0$}
\For{$k=1,\dotsc,N$}{
draw a sample $\hat X_k \sim p_k$\;

set $m_{k+1} \gets m_k - h \, \nabla V(\hat X_k)$\;

set $M_k \gets I - h \, (\nabla^2 V(\hat X_k) - \Sigma_k^{-1})$\;

set $\Sigma_k^+ \gets M_k \Sigma_k M_k$\;

set $\Sigma_{k+1} \gets \clip^{1/\alpha} \Sigma_k^+$\;
}
\end{algorithm}
\end{minipage}
\end{wrapfigure}

Algorithm~\ref{alg:bwsgd} maintains a sequence of Gaussian distributions ${(p_k)}_{k\in\N}$; here $(m_k, \Sigma_k)$ denote the mean vector and covariance matrix at iteration~$k$ (see Section~\ref{Proof3} for a derivation of the algorithm as SGD in the Bures{--}Wasserstein space).
The clipping operator $\clip^\tau$, which is introduced purely for the purpose of theoretical analysis, simply truncates the eigenvalues from above; see Section~\ref{Proof3}.
Our theoretical result for \vi{} is given as the following theorem, whose proof is deferred to Section~\ref{Proof3}.

\begin{theorem} \label{THEOREM2}
    Assume that $0 \prec \alpha I \preceq \nabla^2 V \preceq I$.
    Also, assume that $h \le \frac{\alpha^2}{60}$ and that we initialize Algorithm~\ref{alg:bwsgd} at a matrix satisfying $\frac{\alpha}{9} \, I \preceq \Sigma_{\mu_0} \preceq \frac{1}{\alpha} \, I$.
    Then, for all $k\in\N$,
    \begin{align*}
        \E W_2^2(p_k, \hat\pi)
        &\le \exp(-\alpha kh) \, W_2^2(p_0, \hat \pi) + \frac{36dh}{\alpha^2}\,.
    \end{align*}
    In particular, we obtain $\E W_2^2(p_k,\hat \pi) \le \varepsilon^2$ provided we set $h \asymp \frac{\alpha^2 \varepsilon^2}{d}$ and the number of iterations to be $k \gtrsim \frac{d}{\alpha^3 \varepsilon^2} \log(W_2(p_0,\hat \pi)/\varepsilon)$.
\end{theorem}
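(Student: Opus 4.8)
The plan is to analyze Algorithm~\ref{alg:bwsgd} as a stochastic gradient descent scheme on the Bures--Wasserstein manifold, following the contraction-plus-noise template familiar from Euclidean SGD analysis but carried out in the Riemannian geometry of $\BW(\R^d)$. First I would identify the update rule for the pair $(m_k,\Sigma_k)$ as an unbiased stochastic estimate of the Bures--Wasserstein gradient of $\KL(\cdot\mmid\pi)$: by~\eqref{eq:sarkka} (or by the explicit gradient computation referenced after~\eqref{eq:score_matching}), the deterministic gradient flow moves $m$ by $-\E\nabla V$ and $\Sigma$ by $-(\E\nabla^2 V - \Sigma^{-1})\Sigma - \Sigma(\E\nabla^2 V - \Sigma^{-1})$ (up to the symmetrization), so replacing the expectations with a single sample $\hat X_k\sim p_k$ gives exactly the maps $m_{k+1} = m_k - h\nabla V(\hat X_k)$ and $\Sigma_k^+ = M_k\Sigma_k M_k$ with $M_k = I - h(\nabla^2 V(\hat X_k) - \Sigma_k^{-1})$, which is the ``exponential map'' style one-step update on the BW manifold. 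I would record that $\E_{\hat X_k}[M_k] = I - h(\E_{p_k}\nabla^2 V - \Sigma_k^{-1})$, so the stochastic gradient is unbiased.

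The core of the argument is a one-step contraction estimate: letting $\Delta_k^2 := W_2^2(p_k,\hat\pi)$ and conditioning on $p_k$, I would show
\begin{align*}
    \E[\Delta_{k+1}^2 \mid p_k]
    &\le (1 - \alpha h)\,\Delta_k^2 + C d h^2
\end{align*}
for an absolute constant $C$. This splits into two ingredients. The deterministic (mean) part uses $\alpha$-strong convexity of $\KL(\cdot\mmid\pi)$ along BW geodesics (Lemma~\ref{lem:strong_cvxty_kl_bw}) together with the smoothness bound $\nabla^2 V\preceq I$, which controls the Lipschitz constant of the BW gradient, to yield the contraction factor $1-\alpha h$ for a gradient step of size $h$; this is the Riemannian analogue of the standard ``$\|x - h\nabla f(x) - x^*\|^2 \le (1-\alpha h)\|x-x^*\|^2$'' inequality and is where the tools of~\citet{CheMauRig20a, altschuleretal2023bwbarycenter} enter. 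The stochastic part requires bounding the variance of the one-step noise, $\E[W_2^2(p_k^+, \bar p_{k+1}) \mid p_k]$ where $\bar p_{k+1}$ is the deterministic step; here $W_2^2$ decomposes as $\|m_{k+1}-\bar m_{k+1}\|^2 + \mc B^2(\Sigma_k^+,\bar\Sigma_{k+1})$, the mean part is $h^2\Var_{p_k}(\nabla V(\hat X_k)) \lesssim h^2\,\tr\Sigma_k$ since $\nabla V$ is $1$-Lipschitz, and the covariance part is controlled by $h^2$ times a polynomial in $\|\nabla^2 V(\hat X_k)\|$, $\|\Sigma_k^{-1}\|$, and $\|\Sigma_k\|$. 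This is precisely why the clipping operator $\clip^{1/\alpha}$ and the invariant-set conditions $\frac{\alpha}{9}I\preceq\Sigma_k\preceq\frac1\alpha I$ are imposed: they keep all these operator norms bounded by absolute multiples of powers of $\alpha$, so the noise term is $\lesssim dh^2/\alpha^2$ uniformly.

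A separate lemma I would need to establish is that the region $\{\frac{\alpha}{9}I\preceq\Sigma\preceq\frac1\alpha I\}$ is (with the clipping) forward-invariant under the update, given $h\le\alpha^2/60$: the upper bound is enforced directly by $\clip^{1/\alpha}$, while the lower bound must be shown to be preserved by $\Sigma\mapsto M\Sigma M$ using $\nabla^2 V(\hat X_k) - \Sigma_k^{-1}\preceq I - \frac{\alpha}{1}\cdot(\text{something})$-type bounds and the smallness of $h$; the constants $9$ and $60$ are chosen exactly to make this work with room to spare. Once the one-step inequality and invariance are in place, the theorem follows by unrolling the recursion: $\E\Delta_k^2 \le (1-\alpha h)^k\Delta_0^2 + \frac{Cdh^2}{\alpha^2}\sum_{j\ge 0}(1-\alpha h)^j \le \exp(-\alpha kh)\,\Delta_0^2 + \frac{Cdh}{\alpha^3}$, matching the stated bound $\exp(-\alpha kh)W_2^2(p_0,\hat\pi) + \frac{36dh}{\alpha^2}$ after tracking constants (note the paper's exponent $\alpha^2$ versus $\alpha^3$ in this sketch is a matter of where factors of $\alpha$ are absorbed, to be reconciled in the details), and the final sample-complexity statement comes from balancing the two terms: set $h\asymp\alpha^2\varepsilon^2/d$ to make the bias term $\le\varepsilon^2/2$, then $k\gtrsim\frac{1}{\alpha h}\log(\Delta_0/\varepsilon) = \frac{d}{\alpha^3\varepsilon^2}\log(W_2(p_0,\hat\pi)/\varepsilon)$ iterations make the transient term small. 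I expect the main obstacle to be the one-step contraction bound for the \emph{mean} step on the manifold --- unlike Euclidean SGD, the covariance update $\Sigma\mapsto M\Sigma M$ is not literally $\Sigma$ minus a gradient, so relating it to a genuine Riemannian gradient step and extracting the factor $1-\alpha h$ requires care with the Bures geometry (geodesic convexity, the form of the exponential map, and self-concordance-type control of how the metric varies), together with carefully propagating the a priori bounds on $\Sigma_k$ through every term so that all error constants are genuinely dimension-free and $\alpha$-explicit.
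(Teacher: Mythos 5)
Your overall architecture --- interpret Algorithm~\ref{alg:bwsgd} as SGD on $\BW(\R^d)$, prove a one-step inequality of the form $\E[W_2^2(p_{k+1},\hat\pi)\mid p_k] \le (1-\alpha h)\,W_2^2(p_k,\hat\pi) + O(dh^2)$, show the eigenvalue window for $\Sigma_k$ is forward-invariant under the update, and unroll --- matches the paper. But the way you propose to obtain the one-step inequality has a genuine gap. You split the analysis into a deterministic contraction plus a ``variance'' term $\E[W_2^2(p_k^+,\bar p_{k+1})\mid p_k]$ measuring the $W_2$ distance between the stochastic and deterministic iterates. Unlike the Euclidean identity $\norm{a+b}^2=\norm{a}^2+2\langle a,b\rangle+\norm{b}^2$, the squared Wasserstein distance admits no exact cosine rule, so combining these two pieces forces a triangle inequality and leaves a cross term of order $\E[W_2(p_k^+,\bar p_{k+1})]\cdot W_2(\bar p_{k+1},\hat\pi) \lesssim h\sqrt{d}\,W_2(p_k,\hat\pi)/\alpha$. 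This term is first order in $h$ and is \emph{not} killed by unbiasedness of the stochastic gradient, since unbiasedness only helps through a linear functional of $\hat g_k$, which a distance-level decomposition never produces. Absorbing it by Young's inequality degrades the noise floor to a quantity that does not vanish as $h\searrow 0$, so the stated bound cannot be reached this way. The paper's fix is the particle-level representation~\eqref{eq:w2_sgd}: since the Bures{--}Wasserstein exponential map is an affine pushforward, one can write $X_k^+ = X_k - h\,\hat g_k(X_k)$ with $X_k\sim p_k$ optimally coupled to $Z\sim\hat\pi$ and independent of $\hat X_k$, bound $W_2^2(p_k^+,\hat\pi)$ by the cost of this explicit coupling, and expand the resulting Euclidean square. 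The cross term is then literally $-2h\,\E\langle \hat g_k(X_k), X_k - Z\rangle$, whose conditional expectation equals $-2h\,\E\langle g_k(X_k), X_k - Z\rangle \le -2\alpha h\, W_2^2(p_k,\hat\pi)$ by Lemma~\ref{lem:strong_cvxty_kl_bw} and first-order optimality of $\hat\pi$. This coupling step is the missing idea that makes the whole scheme behave like Euclidean SGD.

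Two smaller points. First, your claim that the second moment of the stochastic gradient is uniformly $\lesssim d/\alpha^2$ is not justified: the contribution $\E\norm{\nabla V(\hat X_k)}^2$ depends on how far $m_k$ is from the optimum, and the paper only bounds it by $\frac{2d}{\alpha} + 2\,W_2^2(p_k,\hat\pi)$, using $\E_{\hat\pi}\nabla V=0$, a coupling of $\hat X_k$ with $\hat Z\sim\hat\pi$, and the Poincar\'e inequality for $\hat\pi$; the resulting $W_2^2(p_k,\hat\pi)$ term must then be absorbed into the contraction factor using $h\le\alpha^2/60$, which is exactly how $1-2\alpha h$ degrades to $1-\alpha h$ and how the final constant $36dh/\alpha^2$ (rather than your $dh/\alpha^3$) arises. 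Second, the invariance of $\{\Sigma\succeq\frac{\alpha}{9}\,I\}$ is not a routine perturbation estimate: the paper proves it by rewriting $\Sigma_k^+=M_k\Sigma_kM_k$ as a generalized Bures{--}Wasserstein barycenter of $\Sigma_k$, $\Sigma_k^{-1}$, and $(2I-\nabla^2V(\hat X_k))\,\Sigma_k\,(2I-\nabla^2V(\hat X_k))$ and invoking the barycenter eigenvalue bounds of \citet{altschuleretal2023bwbarycenter}; you would need to supply an argument of comparable strength to obtain the constants $9$ and $60$.
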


The upper bound $\nabla^2 V \preceq I$ is notationally convenient for our proof but not necessary; in any case, any strongly log-concave and log-smooth density $\pi$ can be rescaled so that the assumption holds.

Theorem~\ref{THEOREM2} is similar in flavor to modern results for \mcmc{}, both in terms of the assumptions (Hessian bounds and query access to the derivatives\footnote{A notable downside of Algorithm~\ref{alg:bwsgd} is the requirement of a Hessian oracle for $V$, which results in a higher per-iteration cost than typical \mcmc{} samplers.} of $V$) and the conclusion (a non-asymptotic polynomial-time algorithmic guarantee). We hope that such an encouraging result for 
\vi{} will prompt more theoretical studies aimed at closing the gap between the two approaches.

\section{Variational inference with mixtures of Gaussians}\label{scn:mixtures}

Thus far, we have shown that the tractability of Gaussians can be readily exploited in the context of Bures--Wasserstein gradient flows and translated into useful results for variational inference. Nevertheless, these results are limited by the lack of expressivity of Gaussians, namely their inability to capture complex features such as multimodality and, more generally, heterogeneity. To overcome this limitation, mixtures of Gaussians arise as a natural and powerful alternative; indeed, universal approximation of arbitrary probability measures by mixtures of Gaussians is well-known~\citep[see, e.g.,][]{delon2020mixture}. As we show next, the space of mixtures of Gaussians can also be equipped with a Wasserstein structure which gives rise to implementable gradient flows.

\subsection{Geometry of the space of mixtures of Gaussians}

We begin with the key observation already made by~\cite{chen2019mixture}, that any mixture of Gaussians can be canonically identified with a probability distribution (the mixing distribution) over the parameter space $\Theta = \R^d\times \mb S_{++}^d$ (the space of means and covariance matrices). Explicitly a probability measure $\mu \in \mc P(\Theta)$  corresponds to a Gaussian mixture as follows:
\begin{align}
    \mu \qquad\qquad \leftrightarrow \qquad\qquad \p_\mu \deq \int p_\theta \, \D \mu(\theta)\,,\label{infinite_mixture:eq}
\end{align}
where $p_\theta$ is the Gaussian distribution with parameters $\theta \in \Theta$.
Equivalently, $\mu$ can be thought of as a probability measure over $\BW(\R^d)$, and hence the space of Gaussian mixtures on $\R^d$ can be identified with the Wasserstein space $\mc P_2(\BW(\R^d))$ over the Bures{--}Wasserstein space which is endowed with the distance \eqref{distance:BW:eq} between Gaussian measures. Indeed, the theory of optimal transport can be developed with any Riemannian manifold (rather than $\R^d$) as the base space~\citep{villani2009ot}. As before, the space $\mc P_2(\BW(\R^d))$ is endowed with a formal Riemannian structure, which respects the geometry of the base space $\BW(\R^d)$, and we can consider Wasserstein gradient flows over $\mc P_2(\BW(\R^d))$.

Note that this framework encompasses both discrete mixtures of Gaussians (when $\mu$ is a discrete measure) and continuous mixtures of Gaussians. In the case when the mixing distribution $\mu$ is discrete, the geometry of $\mc P_2(\BW(\R^d))$ was studied by~\citet{chen2019mixture,delon2020mixture}. An important insight of our work, however, is that it is fruitful to consider the full space $\mc P_2(\BW(\R^d))$ for deriving gradient flows, even if we eventually develop algorithms which propagate a finite number of mixture components.

\subsection{Gradient flow of the KL divergence and particle discretization}\label{scn:mixture_gf_and_discretization}

We consider the gradient flow of the KL divergence functional
\begin{align}\label{eq:kl_mixture}
    \mu \mapsto \eu F(\mu)
    &\deq \KL(\p_\mu \mmid \pi)
\end{align}
over the space $\mc P_2(\BW(\R^d))$. The proof of the following theorem is given in Section~\ref{scn:pf_mixtures}.

\begin{theorem}\label{THM:MIXTURE_GF}
    The gradient flow ${(\mu_t)}_{t\ge 0}$ of the functional $\eu F$ defined in~\eqref{eq:kl_mixture} over $\mc P_2(\BW(\R^d))$ can be described as follows. Let $\theta_0 = (m_0, \Sigma_0) \sim \mu_0$, and let $\theta_t = (m_t,\Sigma_t)$ evolve according to the   ODE
    \begin{align}\label{eq:mixture_ode}
        \boxed{\begin{aligned}
        \dot m_t
        &= -\E \nabla \ln \frac{\p_{\mu_t}}{\pi}(Y_t) \\
        \dot \Sigma_t
        &= -\E \nabla^2 \ln \frac{\p_{\mu_t}}{\pi}(Y_t) \, \Sigma_t  - \Sigma_t \E \nabla^2 \ln \frac{\p_{\mu_t}}{\pi}(Y_t)
        \end{aligned}}
    \end{align}
  where $Y_t \sim \normal(m_t, \Sigma_t)$. Then $\theta_t \sim \mu_t$.
\end{theorem}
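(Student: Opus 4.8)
The plan is to derive the gradient flow of $\eu F(\mu) = \KL(\p_\mu \mmid \pi)$ over $\mc P_2(\BW(\R^d))$ by first identifying the Wasserstein gradient of $\eu F$ as an element of the tangent space at $\mu$, i.e.\ a vector field on the base manifold $\BW(\R^d)$, and then recognizing that the associated continuity equation on $\mc P_2(\BW(\R^d))$ is transported by the flow of that vector field. The key object is the first variation $\delta \eu F / \delta \mu$ of the functional: since $\eu F(\mu) = \int \ln (\p_\mu/\pi) \, \D \p_\mu = \int \bigl(\ln \p_\mu(x) - \ln\pi(x)\bigr) \p_\mu(x) \,\D x$ and $\p_\mu = \int p_\theta \,\D\mu(\theta)$ is linear in $\mu$, a standard computation (differentiating along $\mu_s = \mu + s(\nu-\mu)$ and using that the entropy term contributes $\int \p_{\nu}\ln\p_\mu$ plus the derivative of $\p_\mu$ inside the log, whose total mass contribution vanishes) gives
\begin{align*}
    \frac{\delta \eu F}{\delta \mu}(\theta)
    &= \int \ln \frac{\p_\mu}{\pi}(x) \, p_\theta(x) \,\D x
    = \E_{Y\sim p_\theta}\Bigl[\ln \frac{\p_\mu}{\pi}(Y)\Bigr]\,.
\end{align*}
This is a smooth function on $\Theta = \R^d \times \mb S_{++}^d$; call it $\psi_\mu(\theta)$.

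The Wasserstein gradient of $\eu F$ at $\mu$ over $\mc P_2(\BW(\R^d))$ is then $\nabla_{\BW} \psi_\mu$, the Bures--Wasserstein gradient of the scalar function $\psi_\mu : \BW(\R^d) \to \R$ (here I use that OT calculus carries over verbatim with $\BW(\R^d)$ as base manifold, as invoked in the excerpt). So the next step is to compute $\nabla_{\BW}\psi_\mu$ at a point $\theta = (m,\Sigma)$. This is the same type of computation already used to establish Theorem~\ref{thm:main}: for a function of the form $\theta \mapsto \E_{Y\sim p_\theta} f(Y)$ on $\BW(\R^d)$, the Bures--Wasserstein gradient has mean-component $\E \nabla f(Y)$ and covariance-component $\E \nabla^2 f(Y)$ (symmetrized), via the Gaussian integration-by-parts / Stein identities $\partial_m \E f(Y) = \E\nabla f(Y)$ and $\partial_\Sigma \E f(Y) = \tfrac12 \E \nabla^2 f(Y)$, composed with the metric tensor of the Bures--Wasserstein manifold which turns the $\Sigma$-gradient $S$ into the flow field $S\Sigma + \Sigma S$. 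Applying this with $f = \ln(\p_\mu/\pi)$ yields exactly the right-hand sides in~\eqref{eq:mixture_ode}. Finally, the gradient flow on $\mc P_2(\BW(\R^d))$ is by definition the solution of the continuity equation $\partial_t \mu_t + \operatorname{div}_{\BW}(\mu_t \, v_t) = 0$ with $v_t = -\nabla_{\BW}\psi_{\mu_t}$; by the standard characterization of solutions to continuity equations (the superposition / method of characteristics principle, Ambrosio--Gigli--Savar\'e), $\mu_t$ is the pushforward of $\mu_0$ under the flow map of $v_t$, which is precisely the statement that $\theta_t \sim \mu_t$ when $\theta_0 \sim \mu_0$ and $\theta_t$ solves the ODE driven by $v_t(\theta_t)$.

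The main obstacle I anticipate is making the first-variation computation fully rigorous: $\eu F$ involves $\ln \p_{\mu_t}$, which depends on $\mu_t$ nonlinearly and whose gradient and Hessian $\nabla \ln \p_\mu$, $\nabla^2 \ln\p_\mu$ must be shown to be well-defined and to have enough integrability against $p_\theta$ for the differentiation under the integral sign and the Gaussian integration-by-parts to be justified --- this is the nonlocal, McKean--Vlasov feature of the flow (the drift of each particle depends on the law of all particles through $\p_{\mu_t}$) that is absent in the single-Gaussian case of Theorem~\ref{thm:main}. A secondary point is that $\BW(\R^d)$ is an incomplete manifold (the $\Sigma$ can degenerate), so existence/uniqueness of the characteristic ODE and the validity of the superposition principle on this base space need a word; but for the purpose of \emph{identifying} the gradient flow these are the kind of regularity caveats one states rather than belabors, and the computational heart of the proof is the two displayed derivative identities above.
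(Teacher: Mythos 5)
Your proposal follows essentially the same route as the paper's proof in Section~\ref{scn:pf_mixtures}: compute the first variation $\delta\eu F(\mu)(\theta)=\int \ln(\p_\mu/\pi)\,\D p_\theta$ (up to an additive constant), take its Bures--Wasserstein gradient via the Gaussian identities $\nabla_m p_{m,\Sigma}=-\nabla_x p_{m,\Sigma}$ and $\nabla_\Sigma p_{m,\Sigma}=\tfrac12\nabla_x^2 p_{m,\Sigma}$ together with the metric correspondence $\dot\Sigma=S\Sigma+\Sigma S$, and conclude by the particle interpretation of the continuity equation on $\mc P_2(\BW(\R^d))$. The computations and the regularity caveats you flag are consistent with the paper, which likewise treats these points at the formal level.
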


The gradient flow in Theorem~\ref{THM:MIXTURE_GF} describes the evolution of a particle $\theta_t$ which describes the parameters of a Gaussian measure, hence the name \emph{Gaussian particle}. 
The intuition behind this evolution is as follows. Suppose we draw infinitely many initial particles (each being a Gaussian) from $\mu_0$. By evolving all those  particles through  \eqref{eq:mixture_ode}, which interact with each other via the term $\p_{\mu_t}$,  they tend to aggregate in some parts of the space of Gaussian parameters and spread out in others. This distribution of Gaussian particles is precisely the mixing measure $\mu_t$, which, in turn, corresponds to a Gaussian mixture. Since an infinite number of Gaussian particles  is impractical, consider initializing this evolution at a finitely supported distribution $\mu_0$, thus  corresponding to a more familiar Gaussian mixture model with a finite number of components:
\begin{align*}
    \mu_0
    &= \frac{1}{N} \sum_{i=1}^N \delta_{\theta_0^{(i)}}
    = \frac{1}{N} \sum_{i=1}^N \delta_{(m_0^{(i)}, \Sigma_0^{(i)} )} \qquad \leftrightarrow \qquad \p_{\mu_0} \deq  \frac{1}{N} \sum_{i=1}^N p_{(m_0^{(i)}, \Sigma_0^{(i)} )}\,.
\end{align*}
Interestingly, it can be readily checked that the system of ODEs~\eqref{eq:mixture_ode} thus initialized maintains a finite mixture distribution:
$$
\mu_t= \frac{1}{N} \sum_{i=1}^N \delta_{\theta_t^{(i)}}
    = \frac{1}{N} \sum_{i=1}^N \delta_{(m_t^{(i)}, \Sigma_t^{(i)} )}\,,
$$
where the parameters $\theta_t^{(i)}=(m_t^{(i)}, \Sigma_t^{(i)} )$ evolve according to the following interacting particle system, for   $i\in [N]$
\begin{align}
    \dot m_t^{(i)}
    &= -\E \nabla \ln \frac{\p_{\mu_t}}{\pi}(Y_t^{(i)})\,, \label{mean:particle:eq}\\
    \dot \Sigma_t^{(i)}
    &= -\E\nabla^2 \ln \frac{\p_{\mu_t}}{\pi} (Y_t^{(i)})\, \Sigma_t^{(i)} - \Sigma_t^{(i)} \E\nabla^2 \ln \frac{\p_{\mu_t}}{\pi}(Y_t^{(i)})\,,
\label{cov:particle:eq}\end{align}
where $Y_t^{(i)}\sim p_{\theta_t^{(i)}}$. This finite system of particles can now be implemented using the same numerical tools as for Gaussian~\vi{}, see Section~\ref{scn:xp-gmmvi}.
Note that due to this property of the dynamics, we can hope at best to converge to the best mixture of $N$ Gaussians approximating $\pi$, but this approximation error is expected to vanish as $N\to\infty$.
Also, similarly to~\eqref{eq:sarkka}, it is possible to write down Hessian-free updates using integration by parts, see Appendix~\ref{gmm-vi:proof:sec}.
   
The above system of particles may also be derived using a proximal point method similar to the Bures--JKO scheme, see Section~\ref{gmm-vi:proof:sec}. Indeed, infinitesimally, it has the variational interpretation
\begin{align*}
    (\theta_{t+h}^{(1)},\dotsc,\theta_{t+h}^{(N)})
    &\approx \argmin_{\theta^{(1)},\dotsc,\theta^{(N)} \in \Theta}\biggl\{\KL\Bigl( \frac{1}{N} \sum_{i=1}^N p_{\theta^{(i)}} \Bigm\Vert \pi\Bigr) + \frac{1}{2Nh} \sum_{i=1}^N W_2^2(p_{\theta^{(i)}}, p_{\theta_t^{(i)}})\biggr\}\,.
\end{align*}

Reassuringly,  Equations \eqref{mean:particle:eq}-\eqref{cov:particle:eq} reduce to~\eqref{eq:sarkka} when $\mu_0=\delta_{(m_0, \Sigma_0)}$ is a point mass,   indicating that the theorem provides a natural extension of our previous results. However, although the model~\eqref{infinite_mixture:eq} is substantially more expressive than the Gaussian \vi{} considered in Section~\ref{sec2}, it has the downside that we lose many of the theoretical guarantees. For example, even when $V$ is convex, the objective functional $\eu F$ considered here need not be convex; see Section~\ref{scn:lack_of_cvxty}.
We nevertheless validate the practical utility of our approach in experiments (see Figure~\ref{xpgmm} and  Section~\ref{scn:xp-gmmvi}).

Unlike typical interacting particle systems which arise from discretizations of Wasserstein gradient flows, at each time $t$, the distribution $\p_{\mu_t}$ is continuous.
This extension provides considerably more flexibility---from a mixture of point masses to a mixture of Gaussians---compared to
interacting particle-based algorithms hitherto considered for either sampling~\citep{liuwang2016svgd, liu2017svgdgf, duncan2019geometrysvgd, chewietal2020svgd}, or solving partial differential equations~\citep{carrilloetal2011aggregation, carilloetal2012confinement, bonaschietal2015nonlocal, craigbertozzi2016blob, carrillocraigpatacchini2019blob, craigetal2022blob}. 

\begin{figure}[!h]
\centering
\includegraphics[scale=0.45]{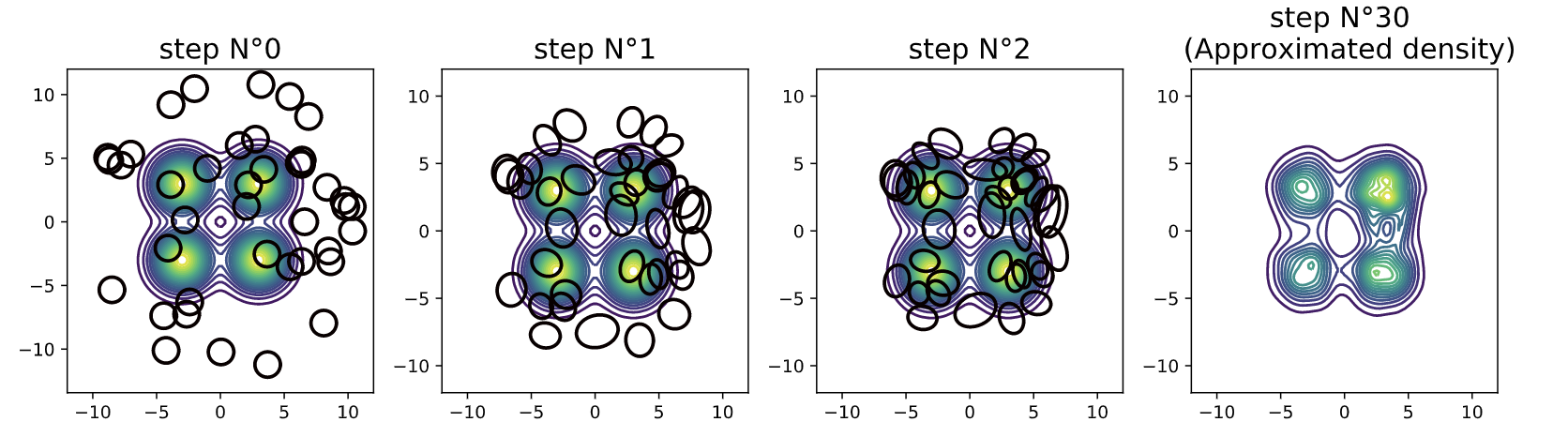}
\caption{Approximation of a Gaussian mixture target $\pi$ with $40$ Gaussian particles. The particles are represented by their covariance ellipsoids shown at Steps $0$, $1$, and $2$. The right figure shows the final step with the approximated density in contour-lines. More figures are available in Appendix~\ref{scn:xp-gmmvi}.}
\label{xpgmm}
\end{figure}

\section{Conclusion} \label{sec5}

Using the powerful theory of Wasserstein gradient flows, we derived  new algorithms for \vi{} using either Gaussians or mixtures of Gaussians as approximating distributions. The consequences are twofold. On the one hand, strong convergence guarantees under classical conditions contribute markedly to closing the theoretical gap between \mcmc{} and Gaussian \vi{}. On the other hand, discretization of the Wasserstein gradient flow for mixtures of Gaussians yields a new \emph{Gaussian particle method} for time discretization which, unlike classical particle methods, maintains a continuous probability distribution at each time.

We conclude by briefly listing some possible directions for future study. For Gaussian variational inference, our theoretical result (Theorem~\ref{THEOREM2}) can be strengthened by weakening the assumption that~$\pi$ is strongly log-concave, or by developing algorithms which do not require Hessian information for $V$. For mixtures of Gaussians, it is desirable to design a principled algorithm which also allows for the mixture weights to be updated.

Towards the latter question, in Section~\ref{scn:wfr} we derive the gradient flow of the KL divergence with respect to the Wasserstein{--}Fisher{--}Rao geometry~\citep{lieromielkesavare2016wfr1, chizatetal2018wfr, lieromielkesavare2018wfr2}, which yields an interacting system of Gaussian particles with changing weights. The equations are given as follows: at each time $t$, the mixing measure is the discrete measure
\begin{align*}
    \mu_t
    &= \sum_{i=1}^N w_t^{(i)} \delta_{(m_t^{(i)}, \Sigma_t^{(i)})}\,.
\end{align*}
Let $Y_t^{(i)} \sim \mc N(m_t^{(i)}, \Sigma_t^{(i)})$, and let $r_t^{(i)} = \sqrt{w_t^{(i)}}$.
Then, the system of ODEs is given by
\begin{align*}
    \dot m_t^{(i)}
    &= -\E \nabla \ln \frac{\p_{\mu_t}}{\pi}(Y_t^{(i)})\,, \\
    \dot \Sigma_t^{(i)}
    &= -\E\nabla^2 \ln \frac{\p_{\mu_t}}{\pi} (Y_t^{(i)})\, \Sigma_t^{(i)} - \Sigma_t^{(i)} \E\nabla^2 \ln \frac{\p_{\mu_t}}{\pi}(Y_t^{(i)})\,, \\
    \dot r_t^{(i)}
    &= -\Bigl(\E\ln \frac{\p_{\mu_t}}{\pi}(Y_t^{(i)}) - \frac{1}{N} \sum_{j=1}^N \E\ln \frac{\p_{\mu_t}}{\pi}(Y_t^{(j)})\Bigr) \, r_t^{(i)}\,.
\end{align*}
We have implemented these equations and their empirical performance is encouraging. However, a fuller investigation of algorithms for \vi{} with changing weights is beyond the scope of this work and we leave it for future research.

\medskip{}

\noindent{}Code for the experiments is available at \url{https://github.com/marc-h-lambert/W-VI}.

\acks{We thank Yian Ma for helpful discussions, as well as anonymous reviewers for useful references and suggestions.
ML acknowledges support from the French Defence procurement agency (DGA). SC is supported by the Department of Defense (DoD) through the National Defense Science \& Engineering Graduate Fellowship (NDSEG) Program. FB and ML acknowledge support from the French government under the management of the Agence Nationale de la Recherche as part of the “Investissements d’avenir” program,
reference ANR-19-P3IA-0001 (PRAIRIE 3IA Institute), as well as from the European Research Council (grant SEQUOIA 724063). PR is supported by NSF grants IIS-1838071, DMS-2022448, and CCF-2106377.}

\newpage
\appendix

\section{Proofs via the Bures--JKO scheme}\label{scn:bures_jko_pfs}

\subsection{Proof of Theorem~\ref{thm:main}}\label{direct:proof:sec}

Given a Gaussian distribution at time $t$ written $p_t=\mathcal{N}(m_t,\Sigma_t)$ and a target distribution $\pi$, we seek the solution $p$ at time $t+h$ of the following JKO scheme, where $p=\mathcal{N}(m,\Sigma)$ is constrained to lie on the space $\BW(\R^d)$ of Gaussians equipped with the Wasserstein distance: 
\begin{align}
    & \min_{p \in \BW(\R^d)} L(p) = \KL(p \mmid \pi) + \frac{1}{2h} \, W_2^2(p, p_t)\,.\label{JKO}
\end{align}
Using the expression for the Wasserstein distance $W_2^2(p_t, p)$ given in~\eqref{distance:BW:eq} 
it is equivalent to finding the Gaussian parameters which solve:
\begin{align}
    & \min_{m, \Sigma} L(m,\Sigma) = \KL(\mathcal{N}(m,\Sigma) \mmid \pi) +  \frac{1}{2h} \,  \norm{m_t-m}^2 + \frac{1}{2h} \,  \mathcal{B}^2(\Sigma_t,\Sigma)\, .\label{alternate:L:eq}
\end{align}
We  first compute the critical points of $L$  and then take the limit as $h \searrow 0$ to get the desired differential equations (ODEs) for the parameters. This boils down to computing the Wasserstein gradient flow of $\KL(\cdot \mmid \pi)$ over the Bures--Wasserstein manifold $\BW(\R^d)$. 

The left KL divergence is a sum of two terms $\KL(p \mmid \pi)=H(p)-\E_p[ \ln \pi]$, where $H(p)$ is the negative entropy of a Gaussian. It satisfies
\begin{align*}
    \nabla_m H(p)=0\qquad\text{and}\qquad \nabla_\Sigma H(p)=\nabla_\Sigma \bigl(-\frac{1}{2}  \ln \det \Sigma\bigr)= -\frac{1}{2} \, \Sigma^{-1}\,.
\end{align*}

To alleviate notation, for any function $f$ we both let $x$ denote its argument  and  $\mathbb{E}_p[ f(x)]$ denote expectation over  $x\sim p = \mathcal{N}(m,\Sigma)$   throughout the present proof, depending on the context. 

The gradient of the left KL divergence with respect to $m$ is given by: 
\begin{align*}
&\nabla_m \KL(p \mmid \pi) =-\nabla_m \mathbb{E}_p[\ln \pi(x)]=-\E_p[\nabla_x \ln \pi(x)], 
\end{align*}
where we have used integration by parts (assuming $\pi$ is continuously differentiable) and the property of Gaussian densities $\nabla_m \mathcal{N}(x \mid m,\Sigma)=-\nabla_x \mathcal{N}(x\mid m,\Sigma)$ to get a derivative with respect to $x$.
The critical point of $L$ given by \eqref{alternate:L:eq} w.r.t.\ the mean parameter $m$ thus writes:     
\begin{align}\label{eq:jko_mean_opt}
\nabla_m L(m,\Sigma)=\frac{1}{h}\,(m - m_t )-  \E_p[\nabla_x \ln \pi(x)]=0\,.
\end{align}
 Taking the limit as $h \searrow 0$, we find that $m_t$ must satisfy the following ODE:
 \begin{align*}
\dot{m_t}=\mathbb{E}_{p_t}[\nabla_x \ln \pi(x)]=-\E_{p_t}[\nabla_x V(x)]\,,
\end{align*}
where we recall that $\pi \propto \exp (-V)$. This recovers the first line of \eqref{eq:sarkka}.

The gradient of the left KL divergence with respect to $\Sigma$ is given by: 
\begin{align*}
&\nabla_\Sigma \KL(p \mmid \pi) =-\frac{1}{2}  \,\Sigma^{-1}-\nabla_\Sigma \mathbb{E}_p[\ln \pi(x)]=-\frac{1}{2} \,\Sigma^{-1}- \frac{1}{2} \E_p[\nabla_x^2 \ln\pi(x)]\,,
\end{align*}
 where we have used two integrations by parts (supposing $\pi$ is twice continuously differentiable) and the property of Gaussian densities $\nabla_\Sigma \mathcal{N}(x\mid m,\Sigma)=\frac{1}{2}\, \nabla^2_x \mathcal{N}(x\mid m,\Sigma)$ to let  a Hessian w.r.t.\ $x$ appear. The Bures derivative is given by~\citep[see][]{Bhatia19}:
\begin{align*}
&\nabla_\Sigma \mathcal{B}^2(\Sigma_t,\Sigma) = I - T^{\Sigma,\Sigma_t}\,, 
\end{align*}
where $T^{A,B}$ is the optimal transport map from $\mc N(0,A)$ to $\mc N(0, B)$, with the explicit expression $T^{A,B} = A^{-\frac{1}{2}} (A^{\frac{1}{2}} B A^{\frac{1}{2}})^{\frac{1}{2}} A^{-\frac{1}{2}}=(T^{B,A})^{-1}$.
The gradient of the variational loss $L$ in~\eqref{alternate:L:eq}  is thus:  
\begin{align*}
&\nabla_\Sigma L(m,\Sigma)=\frac{1}{2h}\, I - \frac{1}{2h}\, T^{\Sigma,\Sigma_t} -\frac{1}{2}  \, \Sigma^{-1}-  \frac{1}{2} \E_p[\nabla_x^2 \ln \pi(x)]\,.
\end{align*}

Zeroing this equation gives:
 \begin{align}\label{eq:jko_cov_opt}
&  {I} =T^{\Sigma,\Sigma_t} +  h\, \Sigma^{-1}+ h \E_p[\nabla_x^2 \ln \pi(x)]\,.
\end{align}

Multiplying by $\Sigma$ on the left, as well as  on the right, yields the two following equations:
 \begin{align}
& \Sigma = \Sigma T^{\Sigma,\Sigma_t}+ h  I + h \,\Sigma \E_p[\nabla_x^2 \ln \pi(x)]\,, \\
& \Sigma = T^{\Sigma,\Sigma_t} \Sigma + h I + h \E_p[\nabla_x^2 \ln \pi(x)]\, \Sigma\,.
\end{align}

Adding them we obtain the  symmetrized form: 
 \begin{align}
& \Sigma = \frac{1}{2} \, T^{\Sigma,\Sigma_t} \Sigma + \frac{1}{2} \,\Sigma T^{\Sigma,\Sigma_t} + h I +  \frac{1}{2} \, h\, \Sigma \E_p[\nabla_x^2 \ln \pi(x)] +   \frac{1}{2} \, h \E_p[\nabla_x^2 \ln \pi(x)] \,\Sigma\,. \label{eqDerivP}
\end{align}

Let us denote  $T^{\Sigma,\Sigma_t}=T(\Sigma)$. Since $T(\Sigma)$ pushes forward $\Sigma$ to $\Sigma_t$, it follows that $T(\Sigma) \, \Sigma \, T(\Sigma) = \Sigma_t$ (which can be checked directly from the expression for $T^{\Sigma,\Sigma_t}$).
The first variation of this equality w.r.t.\ $\Sigma$ at $I$ gives
\begin{align}\label{eq:linearization_dt}
    dT \,\Sigma_t +d\Sigma+\Sigma_t \, dT=0\,.
\end{align}

Let us now term $\Sigma=\Sigma_{t+h}$ the solution to \eqref{eqDerivP}. Up to the first order in $h$ we have $ \Sigma_{t+h}=\Sigma_t+d\Sigma=\Sigma_t+h\dot\Sigma_t$. Let $dT$ denote the corresponding first variation of $T$, that is,   $T(\Sigma_{t+h})=T(\Sigma_t)+dT=I+dT$ up to the first order in $h$. Substituting into \eqref{eqDerivP},   using the previously found relation~\eqref{eq:linearization_dt}, dividing by $h$ and letting $h\searrow 0$ , we finally  obtain the desired ODE:
 \begin{align}
\dot \Sigma_t
&=2I + \Sigma_t \E_{p_t}[\nabla_x^2 \ln \pi(x)] +   \E_{p_t}[\nabla_x^2 \ln \pi(x)]\, \Sigma_t\\
&=2I + \E_{p_t}[\nabla_x \ln \pi(x) \otimes (x-m_t)] +   \E_{p_t}[(x-m_t) \otimes \nabla_x \ln \pi(x)]\,,
\end{align}
where the relation $\mathbb{E}_p[\nabla^2_x \ln \pi(x)]\, \Sigma=\mathbb{E}_p[\nabla_x \ln \pi(x) \otimes (x-m)]$  comes from Gaussian integration by parts and yields a Hessian-free form. Letting $\pi \propto \exp (-V)$ yields the second line of \eqref{eq:sarkka}.

\paragraph{Interpretation in terms of Wasserstein gradient flows.} Let $T_{t+h \to t}$ denote the optimal transport map from $p_{t+h}$ to $p_t$, so that $T_{t+h \to t} = m_t + T^{\Sigma_{t+h}, \Sigma_t} \, (x-m_{t+h})$. Combining the equations~\eqref{eq:jko_mean_opt} and~\eqref{eq:jko_cov_opt}, it reads
\begin{align*}
    \frac{T_{t+h \to t}(x) - x}{h}
    &= \frac{1}{h} \, \{m_t - m_{t+h} + (T^{\Sigma_{t+h}, \Sigma_t} - I) \, (x - m_{t+h})\} \\
    &= \E_{p_{t+h}} \nabla V + (\E_{p_{t+h}} \nabla^2 V - \Sigma_{t+h}^{-1}) \, (x - m_{t+h})\,.
\end{align*}
In Section~\ref{Proof2}, this equality will be written
\begin{align}\label{eq:interpretation_as_gf}
    \frac{T_{t+h \to t} - {\id}}{h}
    &= [\nabla_{\BW} \KL(\cdot \mmid \pi)](p_{t+h})\,,
\end{align}
where $\frac{1}{h} \, (T_{t+h\to t} - {\id})$ and $[\nabla_{\BW} \KL(\cdot \mmid \pi)](p_{t+h})$ are the Bures--Wasserstein gradients of the functionals $-\frac{1}{2h} \, W_2^2(\cdot, p_t)$ and $\KL(\cdot \mmid \pi)$ at $p_{t+h}$ respectively. The equation~\eqref{eq:interpretation_as_gf} is a first-order optimality condition for the Bures--JKO scheme~\eqref{BJKO} and mimics the known optimality condition for the original JKO scheme, see~\cite[equation (8.4)]{santambrogio2015ot}.

The quantity $\frac{1}{h} \, (T_{t+h\to t} - {\id})$ is a difference quotient which measures the infinitesimal displacement of a particle traveling along the gradient flow.
As $h \searrow 0$, we will interpret this quantity as $-v_t$, the negative of the \emph{tangent vector} to the curve at time $t$ (the negative sign appears because $T_{t+h \to t}$ is the transport map \emph{backwards} in time). Hence, the equation~\eqref{eq:interpretation_as_gf} states that as $h\searrow 0$, the tangent vector to the curve ${(p_t)}_{t\ge 0}$ is the negative Bures--Wasserstein gradient of the KL divergence, which is the definition of a gradient flow.

From this perspective, the computation of the linearization in~\eqref{eq:linearization_dt} is equivalent to computing the tangent vector to the
 Wasserstein geodesic, which is given in~\eqref{eq:evolution_cov}.
 
\subsection{Extension to mixtures of Gaussians} \label{gmm-vi:proof:sec}

We now consider a finite Gaussian mixture model $p=\frac{1}{N} \sum_{i=1}^N p_{\theta^{(i)}}$ where $\theta^{(i)}=(m^{(i)},\Sigma^{(i)})$. We consider the following  variational problem:
 \begin{align*}
    & \min_{\theta^{(1)},\dotsc,\theta^{(N)} \in \Theta} \frac{1}{2Nh} \sum_{i=1}^N W_2^2(p_{\theta^{(i)}}, p_{\theta_t^{(i)}})+\KL\Bigl( \frac{1}{N} \sum_{i=1}^N p_{\theta^{(i)}} \Bigm\Vert \pi\Bigr)\,,
\end{align*}
where, as before, $W_2$ is the Wasserstein distance between two Gaussians distribution: 
 \begin{align*}
 &W_2^2(p_{\theta^{(i)}}, p_{\theta_t^{(i)}})= \norm{m^{(i)}-m^{(i)}_t}^2 +  \mathcal{B}^2(\Sigma^{(i)},\Sigma^{(i)}_t)\,.
 \end{align*}
 
 The KL divergence is now written
 \begin{align*}
 &\KL\Bigl( \frac{1}{N} \sum_{i=1}^N p_{\theta^{(i)}} \Bigm\Vert \pi\Bigr)= \frac{1}{N} \sum_{i=1}^N  \int  p_{\theta^{(i)}} \ln p - \frac{1}{N} \sum_{i=1}^N    \int  p_{\theta^{(i)}}  \ln \pi\,.
 \end{align*}
 For $k\in [N]$, the derivative of this divergence with respect to $m^{(k)}$ gives:
 \begin{align*}
 \nabla_{m^{(k)}} \KL(p \mmid \pi)&=  \int  \frac{1}{N} \, \nabla_{m^{(k)}} p_{\theta^{(k)}}  \ln p + \int p \, \nabla_{m^{(k)}}  \ln p -   \int   \frac{1}{N} \,  \nabla_{m^{(k)}}  p_{\theta^{(k)}}  \ln \pi\\
 &=  \int   \frac{1}{N} \, p_{\theta^{(k)}} \nabla_x \ln \frac{p}{\pi}\,,
 \end{align*}
 where we have used the same integration by parts as in the Section \ref{direct:proof:sec}, i.e., $\int p_{\theta^{(k)}} \,  \nabla_{m^{(k)}}  \ln p=\int  p_{\theta^{(k)}} \, \nabla_x  \ln p$, and the Fisher score property $\int p \, \nabla_{m^{(k)}}  \ln p =0$. Mimicking Section \ref{direct:proof:sec}, see \eqref{eq:jko_mean_opt}, we obtain in  the limit $h \searrow 0$ 
 \begin{align*}
 &\dot m^{(k)}=-\mathbb{E}_{p_{\theta^{(k)}} }\bigl[\nabla_x  \ln \frac{p}{\pi}\bigr]\,,
 \end{align*}
 which is the desired equation \eqref{mean:particle:eq}. 
 
 The derivative of the $\KL$ divergence with respect to $\Sigma^{(k)}$ gives:
 \begin{align*}
 \nabla_{\Sigma^{(k)}} \KL(p \mmid \pi)
 &= \int \frac{1}{N}\, \nabla_{\Sigma^{(k)}} p_{\theta^{(k)}}\ln p + \int p \, \nabla_{\Sigma^{(k)}} \ln p  -    \int  \frac{1}{N}\,\nabla_{\Sigma^{(k)}}  p_{\theta^{(k)}} \ln \pi\\
 &= \frac{1}{N}\,\Bigl(\frac{1}{2} \int  p_{\theta^{(k)}} \,\nabla^2_x  \ln p - \frac{1}{2}   \int   p_{\theta^{(k)}}\,\nabla^2_x \ln \pi\Bigr)=  \frac{1}{2N} \E_{p_{\theta^{(k)}}}\bigl[ \nabla^2_x  \ln \frac{p}{\pi}\bigr]\,,
 \end{align*}
 where we have used a double integration by parts $\int \nabla_{\Sigma^{(k)}} p_{\theta^{(k)}}\ln p=  \frac{1}{2}   \int  p_{\theta^{(k)}}\, \nabla^2_x  \ln p$ as in Section~\ref{direct:proof:sec} and the Fisher score property $\int p \, \nabla_{\Sigma^{(k)}}  \ln p =0$.

Using the  Bures derivative, the critical points of the variational loss with respect to $\Sigma_k$ satisfy: 
 \begin{align*}
 &\frac{1}{2Nh}\,( I -  T^{\Sigma^{(k)},\Sigma^{(k)}_t})+  \frac{1}{2N}  \E_{p_{\theta^{(k)}} }\bigl[ \nabla^2_x  \ln \frac{p}{\pi}\bigr]=0\,.
 \end{align*}
 Multiplying on the left and  on the right by $\Sigma^{(k)}$ and taking the average as in Section \ref{direct:proof:sec}, we find:
 \begin{align*}
 &\frac{1}{ h}\,\bigl(\Sigma^{(k)}-  \frac{1}{2}\,(\Sigma^{(k)} T^{\Sigma^{(k)},\Sigma^{(k)}_t}+T^{\Sigma^{(k)},\Sigma^{(k)}_t}\Sigma^{(k)})\bigr)\\
 &\qquad =-\frac{1}{2} \,\bigl( \E_{p_{\theta^{(k)}} }\bigl[ \nabla^2_x  \ln \frac{p}{\pi}\bigr]\,\Sigma^{(k)}+\Sigma^{(k)} \E_{p_{\theta^{(k)}} }\bigl[ \nabla^2_x  \ln \frac{p}{\pi}\bigr]\bigr)\,.
 \end{align*}
 We can now use the   first-order approximation  $\frac{1}{2}\,(\Sigma^{(k)} T^{\Sigma^{(k)},\Sigma^{(k)}_t}+T^{\Sigma^{(k)},\Sigma^{(k)}_t}\Sigma^{(k)}) \approx \Sigma^{(k)} - \frac{h}{2}\, \dot\Sigma^{(k)}$   shown in Section~\ref{direct:proof:sec}  to obtain:
 \begin{align*}
 &\dot\Sigma^{(k)} =- \E_{p_{\theta^{(k)}} }\bigl[\nabla^2_x  \ln \frac{p}{\pi}\bigr]\,\Sigma^{(k)} - \Sigma^{(k)} \E_{p_{\theta^{(k)}} }\bigl[ \nabla^2_x  \ln \frac{p}{\pi}\bigr]\,.
 \end{align*}
 This yields the desired ODE~\eqref{cov:particle:eq}, which can be rewritten in a Hessian-free form:
 \begin{align*}
 &\dot\Sigma^{(k)}=- \E_{p_{\theta^{(k)}} }\bigl[\nabla_x  \ln \frac{p}{\pi} \otimes (x-m_k)\bigr] - \E_{p_{\theta^{(k)}} }\bigl[(x-m_k) \otimes \nabla_x  \ln \frac{p}{\pi}\bigr]\,.
 \end{align*}

\section{Background on Otto calculus}\label{scn:otto}

\subsection{Overview and history}

Historically, the connection between dissipative evolution equations and the theory of gradient flows on the Wasserstein space was discovered in~\cite{otto1998magnetic}. Subsequently, this link was further developed and strengthened in the seminal works~\cite{Jordan98, otto2001porousmedium}. Although the paper~\cite{Jordan98} chronologically precedes~\cite{otto2001porousmedium}, the intuition of the former is based heavily on the work of Otto in the latter paper, in which he develops the formal\footnote{Here, ``formal'' is not a synonym for ``rigorous''.} rules governing the calculus which now bears his name.

\emph{Otto calculus} endows the space $\mc P_2(\R^d)$ of probability measures over $\R^d$ with finite second moment with a formal Riemannian structure inspired by fluid dynamics. To describe the idea, suppose that ${(\mu_t)}_{t\ge 0}$ is a curve of probability measures, with $\mu_t$ representing the fluid density at time $t$. Also, let ${(v_t)}_{t\ge 0}$ denote the velocity vector fields governing the dynamics of the particles; this means that the trajectory $t\mapsto x_t$ of an individual particle evolves according to the ODE
\begin{align}\label{eq:particle_ode}
    \dot x_t = v_t(x_t)\,.
\end{align}
In probabilistic language, if $x_0$ is a random variable drawn from the density $\mu_0$ and it evolves according to~\eqref{eq:particle_ode}, then $x_t \sim \mu_t$ for all $t\ge 0$. From this, we can derive a partial differential equation (PDE) governing the evolution of ${(\mu_t)}_{t\ge 0}$ as follows: fix a test function $\varphi : \R^d\to\R$ (which is bounded, smooth, etc.). Formally, if the integration by parts is justified, then
\begin{align*}
    \int \varphi \, \partial_t \mu_t
    &= \partial_t \int \varphi \, \D \mu_t
    = \partial_t \E \varphi(x_t)
    = \int \langle \nabla \varphi, v_t \rangle \, \D \mu_t
    = -\int \varphi \divergence(\mu_t v_t)
\end{align*}
from which we deduce the \emph{continuity equation} of fluid dynamics:
\begin{align}\label{eq:cont_eq}
    \partial_t \mu_t + \divergence(\mu_t v_t) = 0\,.
\end{align}
Conversely, if ${(\mu_t)}_{t\ge 0}$ is a sufficiently nice curve, then it is always possible to find a family of vector fields ${(v_t)}_{t\ge 0}$ such that the equation~\eqref{eq:cont_eq} holds, i.e., we can interpret ${(\mu_t)}_{t\ge 0}$ as the evolution of a fluid density. However, the choice of vector fields is not unique, since we may always replace $v_t$ with another vector field $\tilde v_t$ such that $\divergence(\mu_t \, (v_t - \tilde v_t)) = 0$. This motivates the search for a \emph{distinguished} choice of vector fields to describe the evolution of the curve of measures.

To do so, we pick $v_t$ to minimize the \emph{kinetic energy},
\begin{align*}
    v_t
    &= \argmin{\Bigl\{ \int \norm{w_t}^2 \, \D \mu_t \Bigm\vert w_t : \R^d\to\R~~\text{satisfies}~\divergence(\mu_t w_t) = -\partial_t \mu_t\Bigr\}}\,.
\end{align*}
If $\mu_t$ is regular (admits a density w.r.t.\ Lebesgue measure), then the minimum is attained at a gradient vector field: $v_t = \nabla \psi_t$ for a function $\psi_t : \R^d\to\R$. We are led to define the tangent space
\begin{align*}
    T_\mu \mc P_2(\R^d)
    &= \{\nabla \psi \mid \psi : \R^d\to\R\}
\end{align*}
and endow it with the inner product
\begin{align*}
    \langle v, w\rangle_\mu
    &= \int \langle v, w \rangle \, \D \mu\,.
\end{align*}
This yields a formal Riemannian structure on $\mc P_2(\R^d)$. Moreover, the choice of picking the vector field with minimal kinetic energy is closely related to the idea of optimal transport of mass~\citep[see][]{villani2003topics}, and in fact~\cite{benamoubrenier1999} showed that
\begin{align}\label{eq:benamou_brenier}
    W_2^2(\mu_0, \mu_1)
    &= \inf\Bigl\{ \int \norm{v_t}_{\mu_t}^2 \, \D t \Bigm\vert {(\mu_t,v_t)}_{t\in [0,1]}~~\text{solves the continuity equation}~\eqref{eq:cont_eq}\Bigr\}\,.
\end{align}
From the lens of Riemannian geometry, this says that the notion of distance induced by the Riemannian structure is precisely the quadratic Wasserstein distance, and hence we refer to the space $\mc P_2(\R^d)$ equipped with this Riemannian structure as the \emph{Wasserstein space}.

This formal picture already allows one to compute gradients of functionals defined over $\mc P_2(\R^d)$ and hence to consider gradient flows, as well as to derive criteria which imply quantitative rates of convergence for these flows. However, it is a considerable technical undertaking to make the preceding formal considerations fully rigorous, and this was only accomplished later in the comprehensive monograph~\cite{ambrosio2008gradient}. Instead, in~\cite{Jordan98}, the authors sidestep this difficulty by considering an implicit time-discretization scheme which only requires the metric structure of $(\mc P_2(\R^d), W_2)$. For a step size $h > 0$, define the discrete updates
\begin{align}\label{eq:jko}
    \mu_{h,k+1}
    &:= \argmin_{\mu\in \mc P_2(\R^d)}\Bigl\{ \eu F(\mu) + \frac{1}{2h} \, W_2^2(\mu, \mu_{h,k})\Bigr\}\,,
\end{align}
where $\eu F : \mc P_2(\R^d)\to\R \cup \{\infty\}$ is the functional of interest defined over the Wasserstein space. Note that in optimization, this is known as the ``proximal point method'' for minimizing $\eu F$.

As $h \searrow 0$, one hopes that we have convergence $\mu_{h,\lfloor t/h\rfloor} \to \mu_t$ in a suitable sense, and then the limiting curve ${(\mu_t)}_{t\ge 0}$ can be interpreted as the Wasserstein gradient flow of $\eu F$. This is indeed what~\cite{Jordan98} showed in a particular, but important case.
Namely, if $\pi \propto\exp(-V)$ is a density on $\R^d$ obeying mild regularity conditions, and we take the functional to be the KL divergence, $\eu F(\mu) = \KL(\mu \mmid \pi)$, then the sequence of discrete approximations converges to the solution of the Fokker{--}Planck equation
\begin{align}\label{eq:fokker_planck}
    \partial_t \mu_t
    &= \divergence\bigl(\mu_t \,\nabla \ln \frac{\mu_t}{\pi}\bigr)\,.
\end{align}
It is well-known that the Fokker{--}Planck equation governs the evolution of the marginal law of the Langevin diffusion
\begin{align*}
    \D X_t
    &= -\nabla V(X_t) \, \D t + \sqrt 2 \, \D B_t\,,
\end{align*}
where ${(B_t)}_{t\ge 0}$ is a standard Brownian motion on $\R^d$. Hence, this celebrated result says that the Langevin diffusion can be interpreted as the Wasserstein gradient flow of the KL divergence.
The implicit discretization~\eqref{eq:jko} is now commonly known as the ``JKO scheme'' after the authors Jordan, Kinderlehrer, and Otto.

Although the Wasserstein space is not truly a Riemannian manifold, many of the formal calculations of~\cite{otto2001porousmedium} can now be justified rigorously, under appropriate technical conditions, due to the extensive theory developed in~\cite{ambrosio2008gradient, villani2009ot}. This perspective leads to intuitive derivations of gradient flows, as explained in Section~\ref{Proof2}, and much more.

\subsection{Geometry of the Wasserstein space}\label{scn:w2_space}

In this section, we provide further details about the geometry of $(\mc P_2(\R^d), W_2)$.

Let $\mu_0,\mu_1 \in \mc P_2(\R^d)$, and for simplicity assume that $\mu_0$ admits a density with respect to Lebesgue measure. Then, Brenier's theorem~\cite[Theorem 2.12]{villani2003topics} says that there exists a proper, convex, lower semicontinuous $\varphi : \R^d\to\R \cup \{\infty\}$ such that $\nabla \varphi$ solves the optimal transport problem from $\mu_0$ to $\mu_1$: namely, ${(\nabla \varphi)}_\# \mu_0 = \mu_1$ and $W_2^2(\mu_0,\mu_1) = \int \norm{\nabla \varphi(x) - x}^2 \, \D \mu_0(x)$. We refer to $\nabla \varphi$ as the \emph{optimal transport map} from $\mu_0$ to $\mu_1$.

The (unique) constant-speed geodesic ${(\mu_t)}_{t\in [0,1]}$ joining $\mu_0$ to $\mu_1$ is then described via
\begin{align}\label{eq:w2_geodesic}
    \mu_t
    &= {(\nabla \varphi_t)}_\# \mu_0\,, \qquad \nabla \varphi_t := (1-t) \, {\id} + t \, \nabla \varphi\,.
\end{align}
In view of the fluid dynamical perspective, the constant-speed geodesics in the Wasserstein space correspond to particle trajectories $t\mapsto x_t$ which are straight lines traversed at constant speed: indeed, $x_t = \nabla \varphi_t(x_0) =  (1-t) \, x_0 + t \, \nabla \varphi(x_0)$. Since $\dot x_t = \nabla \varphi(x_0) - x_0 = (\nabla \varphi - {\id}) \circ {(\nabla \varphi_t)}^{-1}(x_t)$, then along the geodesic we see that ${(\mu_t,v_t)}_{t\in [0,1]}$ solves the continuity equation~\eqref{eq:cont_eq}, where the vector field is $v_t = (\nabla \varphi - {\id}) \circ {(\nabla \varphi_t)}^{-1}$. This solution achieves the minimum in~\eqref{eq:benamou_brenier}.

Recall that on a Riemannian manifold $\mc M$, the Riemannian exponential map at $p$ is defined on a subset of the tangent space $T_p \mc M$, and it maps $v$ to the endpoint of the constant-speed geodesic at time $1$ which emanates from $p$ with velocity $v$ (at time $0$). The Riemannian logarithmic map $\log_p$ is the inverse mapping: it maps an element $q \in \mc M$ to the element $v \in T_p \mc M$ such that the constant-speed geodesic joining $p$ to $q$ in one unit of time has velocity $v$ at time $0$. In the previous paragraph, we have identified the logarithmic map: $\log_\mu \nu = \nabla \varphi_{\mu\to\nu} - {\id}$, where $\nabla \varphi_{\mu\to\nu}$ is the optimal transport map from $\mu$ to $\nu$. Thus, the Riemannian exponential map is $\exp_\mu v = {({\id} + v)}_\# \mu$.

\subsection{The Bures{--}Wasserstein space}\label{scn:bures_wasserstein}

The space of non-degenerate Gaussian distributions equipped with the $W_2$ metric is known as the Bures{--}Wasserstein space, after~\cite{Bures69}. We denote this space as $\BW(\R^d)$.

Given $m\in\R^d$ and $\Sigma \succ 0$, we denote by $p_{m,\Sigma}$ the Gaussian on $\R^d$ with mean $m$ and covariance $\Sigma$. Conversely, for a non-degenerate Gaussian $p$ we write $(m_p, \Sigma_p)$ for its mean and covariance. Via this correspondence, we can therefore identify the space of non-degenerate Gaussians with the manifold $\R^d\times \mb S_{++}^d$, where $\mb S_{++}^d$ denotes the cone of positive definite matrices. Abusing notation, we will do so whenever there is no danger of confusion.

Suppose that $p_{m_0,\Sigma_0}, p_{m_1,\Sigma_1} \in \BW(\R^d)$. Then, the optimal transport map from $p_0 := p_{m_0,\Sigma_0}$ to $p_1 := p_{m_1,\Sigma_1}$ is
\begin{align*}
    \nabla \varphi(x)
    &= m_1 + \Sigma_0^{-1/2} \, {(\Sigma_0^{1/2} \Sigma_1 \Sigma_0^{1/2})}^{1/2} \, \Sigma_0^{-1/2} \, (x-m_0)\,.
\end{align*}
Observe that $\nabla \varphi$ is an affine map. Since the pushforward of a Gaussian via an affine map is also Gaussian, it follows from~\eqref{eq:w2_geodesic} that the constant speed geodesic ${(p_t)}_{t\in [0,1]}$ joining $p_0$ to $p_1$ also lies in $\BW(\R^d)$. In other words, $\BW(\R^d)$ is a \emph{geodesically convex} subset of $\mc P_2(\R^d)$.

The tangent vector to the geodesic at time $0$ is always an affine map of the form $x \mapsto a + S \, (x-m_{p_0})$, where $a\in\R^d$ and $S$ is a symmetric matrix. The tangent space is
\begin{align*}
    T_p \BW(\R^d)
    &= \{x \mapsto a + S\, (x-m_p) \mid a \in \R^d, \; S \in \mb S^d\}\,,
\end{align*}
which can therefore be identified with pairs $(a,S) \in \R^d\times \mb S^d$.
With this abuse of notation, if $(a,S), (a',S') \in T_p \BW(\R^d)$, then
\begin{align}\label{eq:bures_metric}
    \langle (a,S), (a', S') \rangle_p
    &= \int \langle a + S \, (x-m_p), a' + S' \,(x-m_p)\rangle \, \D p(x)
    = \langle a, a'\rangle + \langle S, \Sigma_p S'\rangle\,.
\end{align}
Specializing the notions from the previous section, we obtain
\begin{align*}
    \log_p(q)
    &= \bigl(m_q - m_p, \; \Sigma_p^{-1/2} \, {(\Sigma_p^{1/2} \Sigma_q \Sigma_p^{1/2})}^{1/2} \, \Sigma_p^{-1/2} - I\bigr)\,, \\
    \exp_p(a, S)
    &= {\bigl(m_p + a + (S + I) \, (\cdot - m_p)\bigr)}_\# p = \normal\bigl(m_p + a, \;(S+I) \, \Sigma_p \, (S+I)\bigr)\,.
\end{align*}
Here, $\exp_p(a, S)$ is defined if $S \succ -I$.

This definition of the tangent space is consistent with the Wasserstein space, in that we have the inclusion $T_p \BW(\R^d) \hookrightarrow T_p \mc P_2(\R^d)$, but the abuse of notation $T_p \BW(\R^d) = \R^d\times \mb S^d$ can sometimes cause confusion.
Indeed, if ${(p_t = p_{m_t,\Sigma_t})}_{t\in [0,1]}$ is a constant-speed geodesic in $\BW(\R^d)$, and the tangent vector at time $0$ is $(a, S)$, then
\begin{align*}
    p_t
    &= \exp_{p_0}\bigl(t \, (a, S)\bigr)
    = \normal\bigl(m_p + ta,\; (tS + I) \, \Sigma_p \, (tS + I)\bigr)\,.
\end{align*}
In particular, $\Sigma_t \ne \Sigma_0 + t \, (S - I)$, and
\begin{align}
    \dot m_0
    &= a\,, \label{eq:evolution_mean}\\
    \dot \Sigma_0
    &= S\Sigma_0 + \Sigma_0 S\,. \label{eq:evolution_cov}
\end{align}
Although we derived the equations~\eqref{eq:evolution_mean} and~\eqref{eq:evolution_cov} for geodesic curves, they also hold for any curve ${(p_t)}_{t\ge 0}$ with tangent vector equal to $(a, S)$ at time $0$. Using this, we can derive an expression for the Bures{--}Wasserstein gradient $\nabla_{\BW} f$ of a function $f : \R^d\times \mb S_{++}^d \to \R$. By definition, this satisfies, for any curve ${(m_t,\Sigma_t)}_{t\ge 0}$ with tangent vector $(a, S)$ at time $0$,
\begin{align*}
    \langle \nabla_{\BW} f(m_0,\Sigma_0), (a, S) \rangle_{p_{m_0,\Sigma_0}}
    &= \partial_t\big|_{t=0} f(m_t,\Sigma_t)\,.
\end{align*}
Write $(\bar a, \bar S) = \nabla_{\BW} f(m_0,\Sigma_0)$.
Then, we want
\begin{align*}
    \langle \bar a, a \rangle + \langle \bar S, \Sigma_0 S\rangle
    &= \langle \nabla_m f(m_0,\Sigma_0), \dot m_0 \rangle + \langle \nabla_\Sigma f(m_0,\Sigma_0), \dot \Sigma_0 \rangle \\
    &= \langle \nabla_m f(m_0,\Sigma_0), a \rangle + 2\,\langle \nabla_\Sigma f(m_0,\Sigma_0), \Sigma_0 S \rangle\,,
\end{align*}
where $\nabla_m$, $\nabla_\Sigma$ denote the usual Euclidean gradients. Hence, by identification, we conclude that the Bures{--}Wasserstein gradient of $f$ is related to the Euclidean gradient of $f$ via
\begin{align}\label{eq:bw_gradient}
    \nabla_{\BW} f(m,\Sigma)
    &= \bigl(\nabla_m f(m,\Sigma), \; 2\,\nabla_\Sigma f(m,\Sigma)\bigr)\,.
\end{align}

See~\cite[Appendix A]{altschuleretal2023bwbarycenter} for further discussion.

\subsection{Evolution of the mean and covariance along the Fokker{--}Planck equation}\label{scn:mean_cov_fp}

It is known that the Wasserstein gradient of $\eu F := \KL(\cdot \mmid \pi)$ is
\begin{align}\label{eq:grad_kl}
    \nabla_{W_2} \eu F(\mu)
    &= \nabla \ln \frac{\mu}{\pi}\,.
\end{align}
\citep[See, e.g.,][Theorem 10.4.13.]{ambrosio2008gradient} Also, as shown by~\cite{Jordan98}, the Langevin diffusion is the gradient flow of $\KL(\cdot \mmid \pi)$. In Otto calculus, this means that the law ${(\pi_t)}_{t\ge 0}$ of the Langevin diffusion obeys the continuity equation~\eqref{eq:cont_eq} with velocity vector field $v_t = -\nabla_{W_2} \eu F(\pi_t) = -\nabla \ln(\pi_t/\pi)$, which is consistent with the Fokker--Planck equation~\eqref{eq:fokker_planck}.

According to the particle interpretation~\eqref{eq:particle_ode} of dynamics in the Wasserstein space, if  $x_0 \sim \pi_0$ and
\begin{align*}
    \dot x_t
    &= v_t(x_t)
    = -\nabla \ln \frac{\pi_t}{\pi}(x_t)\,,
\end{align*}
then $x_t \sim \pi_t$. Note that ${(x_t)}_{t\ge 0}$ is \emph{not} the Langevin diffusion~\eqref{eq:langevin} as it is the solution to a deterministic ODE (albeit with random initial condition), but the marginal law of ${(x_t)}_{t\ge 0}$ agrees with that of the Langevin diffusion. This provides a convenient tool for calculating the evolution of the mean and covariance along the Fokker{--}Planck equation, as we now demonstrate.

The evolution of the mean is
\begin{align*}
    \dot m_t
    &= \partial_t \E x_t
    = \E \dot x_t
    = -\E\nabla \ln \frac{\pi_t}{\pi}(x_t)\,.
\end{align*}
Since $\E\nabla \ln \pi_t(x_t) = 0$ (which is verified via integration by parts), and $\pi \propto e^{-V}$, this can also be written as
\begin{align*}
    \dot m_t
    &= -\E_{\pi_t}\nabla V\,.
\end{align*}
Next, for the evolution of the covariance,
\begin{align*}
    \partial_t \E(x_t \otimes x_t)
    &= \E(x_t \otimes \dot x_t + \dot x_t \otimes x_t)
    = -\E\bigl( x_t \otimes \nabla \ln \frac{\pi_t}{\pi}(x_t) + \nabla \ln \frac{\pi_t}{\pi}(x_t) \otimes x_t\bigr) \\
    \partial_t \E(x_t) \otimes \E(x_t)
    &= m_t \otimes \E(\dot x_t) + \E(\dot x_t) \otimes m_t
    = -\E\bigl( m_t \otimes \nabla \ln \frac{\pi_t}{\pi}(x_t) + \nabla \ln \frac{\pi_t}{\pi}(x_t) \otimes m_t\bigr)
\end{align*}
which yields
\begin{align*}
    \dot \Sigma_t
    &= -\E\bigl( (x_t-m_t) \otimes \nabla \ln \frac{\pi_t}{\pi}(x_t) + \nabla \ln \frac{\pi_t}{\pi}(x_t) \otimes (x_t-m_t)\bigr)\,.
\end{align*}
Integration by parts yields
\begin{align*}
    &\int (\bullet - m_t) \otimes \nabla \ln \pi_t \, \D \pi_t + \int \nabla \ln \pi_t \otimes (\bullet - m_t) \, \D \pi_t \\
    &\qquad = \int (\bullet - m_t) \otimes \nabla \pi_t + \int \nabla \pi_t \otimes (\bullet - m_t)
    = -2I\,.
\end{align*}
Hence,
\begin{align*}
    \dot \Sigma_t
    &= 2I - \E_{\pi_t}[\nabla V \otimes (\bullet - m_t) + (\bullet - m_t) \otimes \nabla V]\,.
\end{align*}
This verifies equation~\eqref{eq:mean_cov_fp}. The equations in this section can also be derived using It\^o calculus.

\section{Proofs via Otto calculus}\label{Proof2}

Our aim in this section is to derive the Wasserstein gradient flow of the KL divergence $\KL(\cdot \mmid \pi)$ \emph{constrained} to lie in the Bures--Wasserstein space of non-degenerate Gaussian measures.

Since the Bures--Wasserstein space can be formally viewed as a submanifold of the Wasserstein space, it leads to two natural approaches for computing the constrained gradient flow.
In the first approach, we take the Wasserstein gradient of $\KL(\cdot \mmid \pi)$ and we compute the orthogonal projection onto the tangent space of the Bures--Wasserstein space. In the second approach, we note that the geometry of the Bures--Wasserstein space has been studied in its own right~\citep[see, e.g.,][]{Bhatia19} and in particular, the explicit expression~\eqref{eq:bw_gradient} for the Bures{--}Wasserstein gradient is known.
We can therefore view $\KL(\cdot \mmid \pi)$ as a functional over $\BW(\R^d)$ and compute its gradient directly using~\eqref{eq:bw_gradient}.

\subsection{Orthogonal projection approach}\label{scn:orthogonal_proj}

First, we justify why computing the orthogonal projection of the $\mc P_2(\R^d)$ gradient gives the same result as computing the intrinsic gradient on $\BW(\R^d)$.
Let $\eu F$ be any functional on $\mc P_2(\R^d)$.
By definition, the Bures--Wasserstein gradient $\nabla_{\BW} \eu F$ satisfies
\begin{align}\label{eq:defn_bw_grad}
    \partial_t \eu F(p_t)
    &= \langle \nabla_{\BW} \eu F(p_t), v_t \rangle_{p_t}
\end{align}
for any curve ${(p_t)}_{t\in\R}$ in $\BW(\R^d)$ with tangent vectors ${(v_t)}_{t\in\R}$. Here, $\nabla_{\BW} \eu F(p_t) \in T_{p_t} \BW(\R^d)$. On the other hand, since ${(p_t)}_{t\ge 0}$ is also a curve in $\mc P_2(\R^d)$ and the Riemannian structure of $\BW(\R^d)$ is consistent with that of $\mc P_2(\R^d)$, the definition of the gradient in $\mc P_2(\R^d)$ yields
\begin{align*}
    \partial_t \eu F(p_t)
    &= \langle \nabla_{W_2} \eu F(p_t), v_t\rangle_{p_t}\,.
\end{align*}
Note that the orthogonal projection
\begin{align*}
    \proj_{T_{p_t} \BW(\R^d)} \nabla_{W_2} \eu F(p_t) = \argmin_{w \in T_{p_t} \BW(\R^d)}{\norm{w - \nabla_{W_2} \eu F(p_t)}_{p_t}^2}
\end{align*}
is characterized as the unique element of $T_{p_t} \BW(\R^d)$ satisfying
\begin{align*}
    \langle \proj_{T_{p_t} \BW(\R^d)} \nabla_{W_2} \eu F(p_t), v \rangle_{p_t}
    &= \langle \nabla_{W_2} \eu F(p_t), v \rangle_{p_t}
\end{align*}
for all $v \in T_{p_t} \BW(\R^d)$. Thus,~\eqref{eq:defn_bw_grad} holds with
\begin{align*}
    \nabla_{\BW} \eu F(p)
    &= \proj_{T_{p} \BW(\R^d)} \nabla_{W_2} \eu F(p)\,.
\end{align*}
This argument clearly works for arbitrary Riemannian submanifolds.

Next, we compute the projection of the $\mc P_2(\R^d)$ gradient of the KL divergence.

 Using the formula~\eqref{eq:grad_kl} for the $\mc P_2(\R^d)$ gradient of the KL divergence and the description of the tangent space to $\BW(\R^d)$ in Section~\ref{scn:bures_wasserstein} and~\eqref{eq:bures_metric}, the projected gradient $(\bar a, \bar S) \in \R^d\times \mb S^d$ is such that for all $(a,S) \in \R^d\times \mb S^d$,
\begin{align*}
    \int \bigl\langle \nabla \ln \frac{p}{\pi}(x), a + S \, (x-m_p) \bigr\rangle \, \D p(x)
    &= \langle (\bar a, \bar S), (a, S) \rangle_p
    = \langle \bar a, a \rangle + \langle \bar S, \Sigma_p S\rangle\,.
\end{align*}
Using $\nabla p(x) = -\Sigma_p^{-1} \, (x-m_p) \, p(x)$ and integration by parts,
\begin{align*}
    &\int \bigl\langle \nabla \ln \frac{p}{\pi}(x), a + S \, (x-m_p) \bigr\rangle \, \D p(x) \\
    &\qquad = \bigl\langle \E_p \nabla \ln \frac{p}{\pi}, a \bigr\rangle + \int \bigl\langle \Sigma_p S\, \nabla \ln \frac{p}{\pi}(x), \Sigma_p^{-1} \, (x-m_p) \bigr\rangle \, \D p(x) \\
    &\qquad = \bigl\langle \E_p \nabla \ln \frac{p}{\pi}, a \bigr\rangle - \int \bigl\langle \Sigma_p S\, \nabla \ln \frac{p}{\pi}(x), \nabla p(x) \bigr\rangle \, \D x \\
    &\qquad = \bigl\langle \E_p \nabla \ln \frac{p}{\pi}, a \bigr\rangle + \int \divergence\bigl(\Sigma_p S\, \nabla \ln \frac{p}{\pi}\bigr)(x) \, \D p(x) \\
    &\qquad = \bigl\langle \E_p \nabla \ln \frac{p}{\pi}, a \bigr\rangle + \bigl\langle \E_p \nabla^2 \ln \frac{p}{\pi}, \Sigma_p S \bigr\rangle\,.
\end{align*}
Hence,
\begin{align}\label{eq:bw_gradient_kl_proj}
    (\bar a, \bar S)
    &= \bigl(\E_p \nabla \ln \frac{p}{\pi}, \; \E_p \nabla^2 \ln \frac{p}{\pi}\bigr)\,.
\end{align}
Using the fact that $\E_p \nabla \ln p = 0$, this can also be written
\begin{align*}
    (\bar a, \bar S)
    &= (\E_p \nabla V, \; \E_p\nabla^2 V - \Sigma_p^{-1})
\end{align*}
which corresponds to the affine map
\begin{align}\label{eq:grad_kl_affine_map}
    x \mapsto \E_p \nabla V + (\E_p\nabla^2 V - \Sigma_p^{-1}) \, (x-m_p)\,.
\end{align}

If ${(p_t = p_{m_t, \Sigma_t})}_{t\ge 0}$ evolves according to the constrained gradient flow, then using the expression for the projected Wasserstein gradient together with~\eqref{eq:evolution_mean} and~\eqref{eq:evolution_cov},
\begin{align*}
    \boxed{\begin{aligned} \dot m_t &= -\E_{p_t} \nabla V\,, \\ \dot \Sigma_t &= 2I - \Sigma_t \E_{p_t} \nabla^2 V - \E_{p_t}\nabla^2 V \, \Sigma_t\,. \end{aligned}}
\end{align*}
The sign in the above equations comes from the fact that we perform steepest \emph{descent} in Bures{--}Wasserstein descent, i.e., the tangent vector to the curve at time $t$ is $-\proj_{T_{p_t} \BW(\R^d)} \nabla_{W_2} \eu F(p_t)$.

The system of equations we have derived here differs from the system~\eqref{eq:sarkka}, but we can check that they agree using integration by parts. Indeed,
\begin{align*}
    \dot \Sigma_t
    &= 2I - \Sigma_t \int \nabla^2 V \, \D p_t - \int \nabla^2 V \, \D p_t
    = 2I + \Sigma_t \int \nabla p_t \otimes \nabla V + \int \nabla V \otimes \nabla p_t \, \Sigma_t \\
    &= 2I + \Sigma_t \int \nabla \ln p_t \otimes \nabla V \, \D p_t + \int \nabla V \otimes \nabla \ln p_t \, \D p_t \, \Sigma_t \\
    &= 2I - \E_{p_t}[(\bullet - m_t) \otimes \nabla V + \nabla V \otimes (\bullet - m_t)]\,.
\end{align*}

\subsection{Alternate proof using direct Bures{--}Wasserstein calculation}\label{scn:direct_bw_calc}

In the second approach, we view $\eu F$ as a functional on the Bures{--}Wasserstein space.
Explicitly,
\begin{align*}
    \eu F(m,\Sigma)
    &= \int p_{m,\Sigma} \ln \frac{p_{m,\Sigma}}{\pi}\,.
\end{align*}
Using~\eqref{eq:bw_gradient},
\begin{align}
    \nabla_{\BW} \eu F(m,\Sigma)
    &= \bigl(\nabla_m \eu F(m,\Sigma), \; 2 \, \nabla_\Sigma \eu F(m,\Sigma)\bigr) \nonumber \\
    &=\Bigl(\int \nabla_m p_{m,\Sigma} \ln \frac{p_{m,\Sigma}}{\pi}, \; 2\int \nabla_\Sigma p_{m,\Sigma} \ln \frac{p_{m,\Sigma}}{\pi}\Bigr)\,.\label{eq:bw_gradient_kl_direct}
\end{align}
Furthermore, using the identities
\begin{align}\label{eq:gaussian_differential_identities}
    \nabla_m p_{m,\Sigma}(x)
    &= -\nabla_x p_{m,\Sigma}(x) \qquad\text{and}\qquad
    \nabla_\Sigma p_{m,\Sigma}(x)
    = \frac{1}{2} \, \nabla^2_x p_{m,\Sigma}(x)
\end{align}
for the Gaussian distribution, integration by parts verifies that~\eqref{eq:bw_gradient_kl_direct} agrees with~\eqref{eq:bw_gradient_kl_proj}.

\section{Proof of Corollary~\ref{COR:CONT_TIME_GUARANTEE}}\label{scn:pf_cont_time_guarantee}

Corollary~\ref{COR:CONT_TIME_GUARANTEE} is a consequence of general and well-known principles for gradient flows.
To emphasize this generality, we will consider an abstract $\alpha$-convex differentiable functional $\eu F$ defined over a geodesically convex subset of a Riemannian manifold; this ensures that the logarithmic map is well-defined in the following calculations. We assume that $\eu F$ is minimized at $p^\star$; by adding a constant to $\eu F$, we can assume $\inf \eu F = 0$. Let $\msf d$ denote the distance function on the manifold. If ${(p_t)}_{t\ge 0}$, ${(q_t)}_{t\ge 0}$ are two solutions to the gradient flow for $\eu F$, then
\begin{align*}
    \partial_t \msf d^2(p_t, q_t)
    &= 2 \,\langle \log_{p_t}(q_t), \nabla \eu F(p_t) \rangle_{p_t} + 2\, \langle \log_{q_t}(p_t), \nabla \eu F(q_t) \rangle_{q_t}\,.
\end{align*}
(The reader who is unfamiliar with Riemannian geometry should keep in mind that in Euclidean space, $\log_p(q) = q - p$.) Next, the $\alpha$-convexity of $\eu F$ implies
\begin{align*}
    \eu F(p_t)
    &\ge \eu F(q_t) + \langle \nabla \eu F(q_t), \log_{q_t}(p_t) \rangle_{q_t} + \frac{\alpha}{2} \, \msf d^2(p_t, q_t)\,, \\
    \eu F(q_t)
    &\ge \eu F(p_t) + \langle \nabla \eu F(p_t), \log_{p_t}(q_t) \rangle_{p_t} + \frac{\alpha}{2} \, \msf d^2(p_t, q_t)\,.
\end{align*}
Adding these equations and rearranging yields
\begin{align*}
    \partial_t \msf d^2(p_t, q_t)
    &\le -2\alpha \, \msf d^2(p_t, q_t)\,.
\end{align*}
By Gr\"onwall's inequality, it implies
\begin{align*}
    \msf d^2(p_t,q_t)
    &\le \exp(-2\alpha t) \, \msf d^2(p_0, q_0)\,.
\end{align*}
This inequality has two consequences.
First, for any $\alpha \in \R$, $p_0 = q_0$ implies $p_t = q_t$: the solution to the gradient flow is unique. Second, if $\alpha > 0$, then we can set $q_t = p^\star$ for all $t\ge 0$ to deduce exponential contraction of the gradient flow to the minimizer $p^\star$, which is the first statement of Corollary~\ref{COR:CONT_TIME_GUARANTEE}.

To obtain convergence in functional values, observe that by definition of the gradient flow, we have on the one hand that
\begin{align}\label{eq:time_deriv_gf}
    \partial_t \eu F(p_t)
    &= -\norm{\nabla \eu F(p_t)}_{p_t}^2\,.
\end{align}
On the other hand, if $\alpha > 0$, the convexity inequality and Young's inequality respectively, yield
\begin{align}
    0 = \eu F(p^\star)
    &\ge \eu F(p) + \langle \nabla \eu F(p), \log_p(p^\star) \rangle_p + \frac{\alpha}{2} \, \msf d^2(p, p^\star) \label{eq:cvxty_ineq} \\
    &\ge \eu F(p) - \frac{1}{2\alpha} \, \norm{\nabla \eu F(p)}_p^2 - \frac{\alpha}{2} \, {\underbrace{\norm{\log_p(p^\star)}_p^2}_{= \msf d^2(p, p^\star)}} + {\frac{\alpha}{2} \, \msf d^2(p, p^\star)} \nonumber
\end{align}
and hence $\norm{\nabla \eu F(p)}^2 \ge 2\alpha \, \eu F(p)$. Substituting this into~\eqref{eq:time_deriv_gf} and applying Gr\"onwall's inequality again, we deduce
\begin{align*}
    \eu F(p_t)
    &\le \exp(-2\alpha t) \, \eu F(p_0)\,.
\end{align*}

Finally, suppose $\alpha = 0$.
We consider the Lyapunov functional
\begin{align*}
    \eu L_t
    &\deq t \, \eu F(p_t) + \frac{1}{2} \, \msf d^2(p_t, p^\star)\,.
\end{align*}
Differentiating in time,
\begin{align*}
    \partial_t \eu L_t
    &= \eu F(p_t) - t \, \norm{\nabla \eu F(p_t)}_{p_t}^2 + \langle \log_{p_t}(p^\star), \nabla \eu F(p_t)\rangle_{p_t}\,.
\end{align*}
On the other hand, applying the convexity inequality in~\eqref{eq:cvxty_ineq} with $\alpha = 0$ yields $\partial_t \eu L_t \le 0$.
Hence, $\eu L_t \le \eu L_0$, and
\begin{align*}
    \eu F(p_t)
    &\le \frac{\msf d^2(p_0, p^\star)}{2t}\,.
\end{align*}

\section{Proof of Theorem~\ref{THEOREM2}} \label{Proof3}

In this section, we use the Riemannian exponential and logarithmic maps, as discussed in Section~\ref{scn:bures_wasserstein}. Also, let $\eu F := \KL(\cdot \mmid \pi)$ denote the KL divergence.

For $\tau > 0$, the eigenvalue clipping operation is defined as
\begin{align}\label{eq:clip}
    \clip^\tau :\qquad \Sigma = \sum_{i=1}^d \lambda_i u_i u_i^\T\quad \mapsto \quad\clip^\tau\Sigma \deq \sum_{i=1}^d (\lambda_i \wedge \tau) \, u_i u_i^\T\,.
\end{align}

In the proof of Theorem~\ref{thm:main} in Section~\ref{Proof2}, we showed that the Bures{--}Wasserstein gradient is
\begin{align}\label{eq:bw_gradient_of_kl}
    g_p
    &\deq \nabla_{\BW} \eu F(p)
    = \bigl(\E_p \nabla V, \; \E_p\nabla^2 V - \Sigma^{-1} \bigr)
\end{align}
where $\Sigma$ is the covariance matrix of $p$.
Here, the first component of the gradient governs the evolution of the mean, whereas the second component governs the evolution of the covariance; see Section~\ref{scn:bures_wasserstein}.
We propose to estimate the gradient in~\eqref{eq:bw_gradient_of_kl} via a sample,
\begin{align*}
    \hat g_p
    &\deq \bigl(\nabla V(\hat X), \; \nabla^2 V(\hat X) - \Sigma^{-1} \bigr)\,, \qquad \hat X \sim p\,.
\end{align*}
By comparing Algorithm~\ref{alg:bwsgd} and the definition of the exponential map in Section~\ref{scn:bures_wasserstein}, one can check that for $p_k^+ \deq p_{m_{k+1}, \Sigma_k^+}$ and\footnote{This latter requirement is needed because $\BW(\R^d)$ has a finite injectivity radius.} $h\le 1$
\begin{align*}
    p_k^+
    &= \exp_{p_k}(-h \hat g_k)\,,
\end{align*}
where $\hat g_k \in T_{p_k} \BW(\R^d)$ is the stochastic gradient
\begin{align*}
    \hat g_k(x)
    &= \nabla V(\hat X_k) + (\nabla^2 V(\hat X_k) - \Sigma_k^{-1}) \, (x-m_k)\,.
\end{align*}
Thus, aside from the eigenvalue clipping operation (which is harmless, due to Lemma~\ref{lem:clipping_contraction} below), Algorithm~\ref{alg:bwsgd} is exactly a stochastic gradient descent scheme on $\BW(\R^d)$. Note also that from the definition of the exponential map in Section~\ref{scn:w2_space}, the update can also be written at the particle level: if $X_k \sim p_k$ is independent of $\hat g_k$, then
\begin{align}\label{eq:w2_sgd}
    X_k^+
    \deq X_k - h \, \hat g_k(X_k)
    \sim p_k^+\,.
\end{align}

In the next lemma, we obtain a uniform control on the smallest eigenvalues of the covariance matrices of the iterates.

\begin{lemma}\label{lem:control_iterates}
    Assume that $0 \prec \alpha I \preceq \nabla^2 V \preceq I$ holds and $h\le \alpha^2/60$.
    Also, in Algorithm~\ref{alg:bwsgd}, assume that $\Sigma_k \succeq \frac{\alpha}{9} \, I$.
    Then, $\Sigma_k^+ \succeq \frac{\alpha}{9} \, I$.
\end{lemma}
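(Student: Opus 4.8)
The plan is to argue directly from the covariance update in Algorithm~\ref{alg:bwsgd}, namely $\Sigma_k^+ = M_k \Sigma_k M_k$ with the symmetric matrix $M_k = I - h\,(\nabla^2 V(\hat X_k) - \Sigma_k^{-1})$. Writing $H := \nabla^2 V(\hat X_k)$ and $A := I - hH$, so that $M_k = A + h\,\Sigma_k^{-1}$, the hypothesis $\alpha I \preceq H \preceq I$ yields $(1-h)\,I \preceq A \preceq (1 - h\alpha)\,I$; since $h \le \alpha^2/60 \le 1/60 < 1$, this makes $A$ positive definite with $A^2 \succeq (1-h)^2\,I$.

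The crux is the identity
\[
    \Sigma_k^+ \;=\; (A + h\Sigma_k^{-1})\,\Sigma_k\,(A + h\Sigma_k^{-1}) \;=\; A\,\Sigma_k\,A + 2hA + h^2\,\Sigma_k^{-1},
\]
whose cross terms collapse using only $\Sigma_k\,\Sigma_k^{-1} = I$. Each of the three summands can then be bounded below: congruence by the symmetric matrix $A$ preserves the Loewner order, so $\Sigma_k \succeq \frac{\alpha}{9}\,I$ gives $A\,\Sigma_k\,A \succeq \frac{\alpha}{9}\,A^2 \succeq \frac{\alpha}{9}\,(1-h)^2\,I$; from $A \succeq (1-h)\,I$ and $h > 0$ we get $2hA \succeq 2h(1-h)\,I$; and $h^2\,\Sigma_k^{-1} \succeq 0$. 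Combining, $\Sigma_k^+ \succeq \bigl(\tfrac{\alpha}{9}(1-h)^2 + 2h(1-h)\bigr)\,I$.

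It then remains to verify the scalar inequality $\tfrac{\alpha}{9}(1-h)^2 + 2h(1-h) \ge \tfrac{\alpha}{9}$. Dividing the rearranged form by $h>0$, this is equivalent to $2(1-h) \ge \tfrac{\alpha}{9}\,(2-h)$, and since $\alpha \le 1$ and $h \le 1/60$ the left-hand side exceeds $1.9$ while the right-hand side is at most $2/9$, so it holds with room to spare; this closes the proof.

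The argument is essentially a one-line matrix computation, so there is no real obstacle — the only points requiring care are (i) that $\alpha I \preceq \nabla^2 V \preceq I$ must be translated correctly into bounds on $A = I - h\nabla^2 V$ (the \emph{upper} bound on the Hessian governs the \emph{lower} eigenvalue bound on $A$), and (ii) that $A\Sigma_k A \succeq \tfrac{\alpha}{9}A^2$ relies on the symmetry of $A$. Note that only $h<1$ together with the trivial scalar bound above is actually used here; the stronger restriction $h \le \alpha^2/60$ is imposed for the later convergence analysis rather than for this lemma alone.
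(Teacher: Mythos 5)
Your proof is correct, and it takes a genuinely different and more elementary route than the paper's. The paper establishes this lemma by recognizing the update $\Sigma_k^+ = M_k \Sigma_k M_k$ as a step of a generalized Bures--Wasserstein barycenter iteration: using that $\Sigma_k^{-1}$ is the optimal transport map from $\normal(0,\Sigma_k)$ to $\normal(0,\Sigma_k^{-1})$, it writes $-h\,(\nabla^2 V(\hat X_k) - \Sigma_k^{-1}) = h \log_{\Sigma_k}(\Sigma_k^{-1}) + h\log_{\Sigma_k}(\tilde\Sigma)$ with $\tilde\Sigma = (2I - \nabla^2 V(\hat X_k))\,\Sigma_k\,(2I - \nabla^2 V(\hat X_k))$, and then invokes the eigenvalue-control theorem of \citet{altschuleretal2023bwbarycenter} for barycenters; that route requires the upper bound $\Sigma_k \preceq \alpha^{-1} I$ (supplied by the clipping step) and is where the particular constants arise ($\lambda = \alpha/9$ from solving $\lambda_-/4 = \lambda$, and the step size condition $h \le 2\alpha^2/117$). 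Your direct expansion $M_k\Sigma_k M_k = A\Sigma_k A + 2hA + h^2\Sigma_k^{-1}$ with $A = I - h\,\nabla^2 V(\hat X_k)$, followed by termwise Loewner lower bounds, is self-contained, avoids both the external barycenter theorem and any upper bound on $\Sigma_k$, and makes transparent that the invariance of $\{\Sigma \succeq cI\}$ holds for any threshold $c \le 2(1-h)/(2-h)$ --- so the specific constant $\alpha/9$ and the restriction $h \le \alpha^2/60$ are indeed dictated by the surrounding convergence analysis rather than by this lemma, as you note. What the paper's approach buys in exchange is a conceptual link between Algorithm~\ref{alg:bwsgd} and Bures--Wasserstein barycenter iterations, which reuses machinery already cited elsewhere in the analysis. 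The only boundary case worth a word in your argument is $h = 0$, where the division by $h$ in the scalar verification is vacuous but the conclusion is trivial since $\Sigma_k^+ = \Sigma_k$.
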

\begin{proof}
    Since the statement of the lemma only involves the covariance matrices, we can suppose that all of the mean vectors are zero.
    
    The key is to write $\Sigma_k^+$ as a generalized Bures{--}Wasserstein barycenter at $\Sigma_k$ for an appropriate distribution. Recall that
    \begin{align}
    \label{eq:update_rule}
        \Sigma_k^+
        &= \bigl(I + h \, \Sigma_k^{-1} - h \, \nabla^2 V(\hat X_k)\bigr)\, \Sigma_k\,\bigl(I + h \, \Sigma_k^{-1} - h \, \nabla^2 V(\hat X_k)\bigr)\,.
    \end{align}
    Note that $\Sigma_k^{-1}$ is the optimal transport map from the Gaussian $p_{0,\Sigma_k}$ to $p_{0,\Sigma_k^{-1}}$.\footnote{This observation was also used in the analysis of Bures{--}Wasserstein gradient descent for entropically regularized barycenters in~\cite{altschuleretal2023bwbarycenter}.}
    Hence,
    \begin{align*}
        h \, \Sigma_k^{-1} - h \, \nabla^2 V(\hat X_k)
        &= h \, (\Sigma_k^{-1} - I) + h \, (I - \nabla^2 V(\hat X_k)) \\
        &= h \log_{\Sigma_k}(\Sigma_k^{-1}) + h \log_{\Sigma_k}(\tilde \Sigma)
    \end{align*}
    where we defined the matrix $\tilde \Sigma = (2I - \nabla^2 V(\hat X_k)) \, \Sigma_k \, (2I - \nabla^2 V(\hat X_k))$.
    To check that this is valid, we need $2I - \nabla^2 V(\hat X_k) \succeq 0$, i.e., $\nabla^2 V(\hat X_k) \preceq 2I$, which follows from $\nabla^2 V \preceq I$.

    We have shown that
    \begin{align*}
        \Sigma_k^+
        &= \exp_{\Sigma_k}\Bigl(\int \log_{\Sigma_k}(\Sigma) \, \D P(\Sigma)\Bigr)
    \end{align*}
    where
    \begin{align*}
        P
        &= (1-2h) \, \delta_{\Sigma_k} + h\, \delta_{\Sigma_k^{-1}} + h \, \delta_{\tilde \Sigma}
        = (1-2h)\,\delta_{\Sigma_k} + 2h\,\bigl(\frac{1}{2}\,\delta_{\Sigma_k^{-1}} + \frac{1}{2}\,\delta_{\tilde\Sigma}\bigr)\,.
    \end{align*}
    This is precisely the definition of a generalized Bures{--}Wasserstein barycenter.
    
    Next, suppose that $\Sigma_k \succeq \lambda I$ for some $\lambda > 0$. Since $\Sigma_k \preceq \alpha^{-1} I$, and $I \preceq 2I - \nabla^2 V(\hat X_k) \preceq 2I$,
    \begin{align*}
        \alpha\,I
        &\preceq \Sigma_k^{-1} \preceq \frac{1}{\lambda}\,I\,, \qquad\text{and}\qquad
        \lambda \, I
        \preceq \tilde \Sigma \preceq \frac{4}{\alpha}\,I\,.
    \end{align*}
    Then,~\cite[Theorem 1]{altschuleretal2023bwbarycenter}\footnote{See the latest revision.} implies the following.
    If we define the quantities
    \begin{align*}
        \lambda_-
        &\deq \Bigl( \frac{1}{2}\,\sqrt\alpha + \frac{1}{2}\,\sqrt\lambda\Bigr)^2\,, \qquad
        \lambda_+
        \deq \frac{1}{2}\, \frac{1}{\lambda} + \frac{1}{2}\,\frac{4}{\alpha}\,,
    \end{align*}
    then for step sizes $2h\le \frac{\lambda_-}{2\lambda_+}$ and if $\Sigma_k \succeq \frac{\lambda_-}{4}\,I$, we also have $\Sigma_k^+ \succeq \frac{\lambda_-}{4}\, I$.
    To use this result, let us choose $\lambda$ such that $\frac{\lambda_-}{4} = \lambda$; it can be seen that this holds with $\lambda = \frac{\alpha}{9}$.
    Since $\lambda_+ = \frac{13}{2\alpha}$, the step size condition then translates into $h \le \frac{2\alpha^2}{117}$, for which it suffices to have $h \le \frac{\alpha^2}{60}$.
\end{proof}

We also recall an important fact about the eigenvalue clipping operation.

\begin{lemma}[{\citet[Proposition 3]{altschuleretal2023bwbarycenter}}]\label{lem:clipping_contraction}
    For any $m\in\R^d$, $\tau > 0$, and $\Sigma, \Sigma' \in \mb S_{++}^d$,
    \begin{align*}
        W_2(p_{m,\clip^\tau \Sigma}, \;p_{m,\clip^\tau \Sigma'})
        &\le W_2(p_{m,\Sigma}, \;p_{m,\Sigma'})\,.
    \end{align*}
\end{lemma}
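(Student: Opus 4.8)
Since the means on both sides of the claimed inequality coincide, and since \eqref{distance:BW:eq} shows that the $W_2$ distance between Gaussians with a common mean equals the Bures distance between their covariances, the statement reduces to the purely matricial inequality $\mathcal B^2(\clip^\tau\Sigma,\clip^\tau\Sigma') \le \mathcal B^2(\Sigma,\Sigma')$. The plan is to lift this inequality to the \emph{flat} Euclidean space of matrix square roots, in which eigenvalue clipping becomes a metric projection onto a convex set and is therefore automatically nonexpansive.

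First I would record the square-root (Procrustes) representation of the Bures metric: for $\Sigma,\Sigma'\in\mb S_{++}^d$,
\[
    \mathcal B(\Sigma,\Sigma')
    = \min\bigl\{\, \norm{R - R'}_F \;:\; R,R'\in\R^{d\times d},\ RR^\T = \Sigma,\ R'R'^\T = \Sigma' \,\bigr\}\,,
\]
which follows by writing $R = \Sigma^{1/2}O$ and $R' = \Sigma'^{1/2}O'$ with $O,O'$ orthogonal, expanding $\norm{R-R'}_F^2 = \tr\Sigma + \tr\Sigma' - 2\,\tr(\Sigma^{1/2}\Sigma'^{1/2}\,O'O^\T)$, and using $\max_{U\ \text{orthogonal}}\tr(\Sigma^{1/2}\Sigma'^{1/2}U) = \norm{\Sigma^{1/2}\Sigma'^{1/2}}_* = \tr(\Sigma^{1/2}\Sigma'\Sigma^{1/2})^{1/2}$. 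Next I would identify clipping at the level of square roots. Let $\mc K \deq \{R\in\R^{d\times d} : \norm{R}_{\mathrm{op}}\le\sqrt\tau\}$, a closed convex subset of the Hilbert space $(\R^{d\times d},\norm{\cdot}_F)$, and let $\Pi_{\mc K}$ be the Frobenius metric projection onto $\mc K$. A standard computation (the ``nearest matrix of bounded spectral norm'' problem) shows that $\Pi_{\mc K}$ acts by clipping singular values at $\sqrt\tau$: if $R = U\,\mathrm{diag}(\sigma_i)\,V^\T$ then $\Pi_{\mc K}(R) = U\,\mathrm{diag}(\sigma_i\wedge\sqrt\tau)\,V^\T$. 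When $RR^\T = \Sigma$ the left singular vectors of $R$ are the eigenvectors of $\Sigma$ and its singular values are $\{\sqrt{\lambda_i(\Sigma)}\}$, so $\Pi_{\mc K}(R)\,\Pi_{\mc K}(R)^\T$ has the same eigenvectors as $\Sigma$ with eigenvalues $\{\lambda_i(\Sigma)\wedge\tau\}$, i.e.\ $\Pi_{\mc K}(R)\,\Pi_{\mc K}(R)^\T = \clip^\tau\Sigma$. Thus $\Pi_{\mc K}(R)$ is a square root of $\clip^\tau\Sigma$, and likewise $\Pi_{\mc K}(R')$ is a square root of $\clip^\tau\Sigma'$.

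Finally I would combine these facts. Choosing $R,R'$ that attain the minimum in the square-root representation of $\mathcal B(\Sigma,\Sigma')$, and applying that representation to the clipped pair with the feasible square roots $\Pi_{\mc K}(R),\Pi_{\mc K}(R')$, I obtain
\[
    \mathcal B(\clip^\tau\Sigma,\clip^\tau\Sigma')
    \le \norm{\Pi_{\mc K}(R) - \Pi_{\mc K}(R')}_F
    \le \norm{R - R'}_F
    = \mathcal B(\Sigma,\Sigma')\,,
\]
where the middle inequality is the nonexpansiveness of the metric projection onto a convex set in a Hilbert space. Squaring and reinstating the (unchanged) means gives the claim; this is essentially the argument behind the cited \citet[Proposition 3]{altschuleretal2023bwbarycenter}.

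The only genuinely non-routine ingredient is the identification ``Frobenius projection onto the operator-norm ball $=$ singular-value clipping,'' together with the observation that after the map $R\mapsto RR^\T$ this projection is exactly eigenvalue clipping; once this is in place the result is immediate from the elementary $1$-Lipschitzness of Euclidean projections. I expect the only point needing mild care to be bookkeeping around the (attained but non-unique) minimizing square roots, and checking that the singular-value description of $\Pi_{\mc K}$ and the resulting value of $\Pi_{\mc K}(R)\Pi_{\mc K}(R)^\T$ do not depend on the choice of SVD when $\Sigma$ has repeated eigenvalues.
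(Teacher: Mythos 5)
The paper does not actually prove this lemma---it is imported verbatim as a citation to \citet[Proposition 3]{altschuleretal2023bwbarycenter}---so there is no internal proof to compare against; what you have written is a self-contained argument, and it is correct. Each ingredient checks out: the Procrustes representation $\mathcal B(\Sigma,\Sigma')=\min\{\norm{R-R'}_F : RR^\T=\Sigma,\ R'R'^\T=\Sigma'\}$ is the standard square-root formulation of the Bures metric (the minimum is attained by compactness of the orthogonal group, and $\max_U \tr(\Sigma^{1/2}\Sigma'^{1/2}U)=\norm{\Sigma^{1/2}\Sigma'^{1/2}}_*=\tr(\Sigma^{1/2}\Sigma'\Sigma^{1/2})^{1/2}$ is von Neumann's trace inequality); the operator-norm ball $\mc K$ is convex, so $\Pi_{\mc K}$ is $1$-Lipschitz in Frobenius norm; singular-value clipping is indeed the Frobenius-nearest point of $\mc K$ (and the uniqueness of the metric projection disposes of any ambiguity in the SVD when singular values repeat); and since the left singular vectors of a square root $R$ of $\Sigma$ are eigenvectors of $\Sigma$ with $\sigma_i=\sqrt{\lambda_i}$, one gets $\Pi_{\mc K}(R)\,\Pi_{\mc K}(R)^\T=\clip^\tau\Sigma$, so the clipped square roots are feasible for the clipped Procrustes problem. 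The chain of inequalities then delivers the claim, and the means drop out via \eqref{distance:BW:eq}. This is a clean way to see the result: lifting to the flat space of square roots turns a non-obvious contraction property of the curved Bures geometry into the elementary nonexpansiveness of a Euclidean projection, and it matches in spirit the square-root parametrization discussed elsewhere in the paper (Appendix~\ref{scn:bures_wasserstein} and the remarks on \citet{alquierridgway2020variational, domke2020vismooth}).
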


We now turn towards the proof of Theorem~\ref{THEOREM2}.
In the proof, we let
\begin{align*}
    \ms F_k \deq \sigma(\hat X_0,\hat X_1,\hat X_2,\dotsc,\hat X_{k-1})
\end{align*}
be the $\sigma$-algebra generated by the random samples up until iteration $k$.

\medskip{}

\begin{proof}[Proof of Theorem~\ref{THEOREM2}]
    Conditioned on $\ms F_k$, and independently of $\hat X_k$, let $X_k \sim p_k$ and $Z \sim \hat \pi$ be optimally coupled; let $\bar\E$ denote the expectation taken w.r.t.\ $(X_k, Z)$.
    Using Lemma~\ref{lem:clipping_contraction}, the fact that $\hat \Sigma \preceq \frac{1}{\alpha} \, I$ (see discussion in Section~\ref{scn:gradient_flow_kl_bw}), and~\eqref{eq:w2_sgd}, we have
    \begin{align*}
        &\E[W_2^2(p_{k+1},\hat \pi)\mid \ms F_k]
        \le \E[W_2^2(p_k^+, \hat \pi) \mid \ms F_k] \\
        &\qquad \le \E\bigl[ \bar\E[\norm{X_k - h \, \hat g_k(X_k) - Z}^2] \bigm\vert \ms F_k\bigr] \\
        &\qquad = \E\bigl[\bar\E[\norm{X_k - Z}^2 - 2h \, \langle \hat g_k(X_k), X_k - Z\rangle + h^2 \, \norm{\hat g_k(X_k)}^2] \bigm\vert \ms F_k\bigr] \\
        &\qquad = W_2^2(p_k, \hat\pi) - 2h\, \bar\E\langle g_k(X_k), X_k - Z\rangle + h^2 \E\bigl[\bar\E[\norm{\hat g_k(X_k)}^2] \bigm\vert \ms F_k\bigr]\,,
    \end{align*}
    where we abbreviated $g_k \deq g_{p_k}$.
    From strong convexity of $\KL(\cdot \mmid \pi)$ on $\BW(\R^d)$ (Lemma~\ref{lem:strong_cvxty_kl_bw}),
    \begin{align*}
        \bar\E\langle g_k(X_k), X_k - Z\rangle
        &\ge \KL(p_k \mmid \pi) - \KL(\hat \pi \mmid \pi) + \frac{\alpha}{2} \, W_2^2(p_k, \hat \pi) \\
        &\ge \alpha \, W_2^2(p_k, \hat \pi)\,.
    \end{align*}
    Thus,
    \begin{align*}
        \E[W_2^2(p_{k+1},\hat \pi)\mid \ms F_k]
        &\le (1-2\alpha h) \, W_2^2(p_k,\hat\pi) + h^2 \underbrace{\E\bigl[\bar\E[\norm{\hat g_k(X_k)}^2] \bigm\vert \ms F_k\bigr]}_{=: \msf{err}}\,.
    \end{align*}
    It remains to bound the error term.

    Recall that
    \begin{align*}
        \hat g_k(X_k)
        &= (\nabla^2 V(\hat X_k) - \Sigma_k^{-1}) \, (X_k - m_k) + \nabla V(\hat X_k)\,.
    \end{align*}
    We bound the terms one by one.
    First,
    \begin{align*}
        \bar \E[\norm{\Sigma_k^{-1} \, (X_k - m_k)}^2]
        = \tr(\Sigma_k^{-1})
        \le \frac{9d}{\alpha}
    \end{align*}
    where we used Lemma~\ref{lem:control_iterates}.
    Next, since $\nabla^2 V \preceq I$ by assumption,
    \begin{align*}
        \bar\E[\norm{\nabla^2 V(\hat X_k) \, (X_k - m_k)}^2]
        &\le \bar\E[\norm{X_k - m_k}^2]
        = \tr(\Sigma_k)
        \le \frac{d}{\alpha}\,.
    \end{align*}
    Lastly, let $\hat Z \sim \hat \pi$ be optimally coupled with $\hat X_k$.
    By the optimality condition for $\hat \pi$ (Section~\ref{scn:gradient_flow_kl_bw}), we know that $\E\nabla V(\hat Z) = 0$.
    Applying the Poincar\'e inequality for $\hat \pi$ (which holds because $\hat \pi$ is strongly log-concave, see~\cite[Theorem 4.8.4]{bakrygentilledoux2014})
    \begin{align*}
        \bar\E[\norm{\nabla V(\hat X_k)}^2]
        &\le 2\, \bar\E[\norm{\nabla V(\hat Z)}^2] + 2\,\bar \E[\norm{\hat X_k - \hat Z}^2] \\
        &\le \frac{2}{\alpha} \E_{\hat \pi}[\norm{\nabla^2 V}_{\rm HS}^2] + 2 \, W_2^2(p_k, \hat \pi) \\
        &\le \frac{2 d}{\alpha} + 2 \, W_2^2(p_k, \hat \pi)\,.
    \end{align*}
    Collecting the terms,
    \begin{align*}
        \msf{err}
        &\le \frac{36d}{\alpha} + 6 \, W_2^2(p_k, \hat \pi)\,.
    \end{align*}
    From the assumption $h \le \frac{\alpha^2}{60}$.
    \begin{align*}
        \E[W_2^2(p_{k+1},\hat \pi) \mid \ms F_k]
        &\le (1-\alpha h) \, W_2^2(p_k, \hat \pi) + \frac{36dh^2}{\alpha}\,.
    \end{align*}
    Iterating this bound proves the result.
\end{proof}

\section{Proof of Theorem~\ref{THM:MIXTURE_GF}}\label{scn:pf_mixtures}

In order to present the proof of Theorem~\ref{THM:MIXTURE_GF}, we first review relevant facts about the Wasserstein space over a Riemannian manifold $(\mc M, \mf g)$. We refer readers to~\citet{villani2009ot} for an in-depth treatment.

Similarly to the Euclidean setting, we can define the space of probability measures over $\mc M$ with finite second moment,
\begin{align*}
    \mc P_2(\mc M)
    &\deq \Bigl\{\mu \in \mc P(\mc M) \Bigm\vert \int \msf d^2(p_0, \cdot) \, \D \mu < \infty~\text{for some}~p_0 \in \mc M\Bigr\}\,,
\end{align*}
where $\msf d$ denotes the induced distance on $\mc M$. We equip $\mc P_2(\mc M)$ with the $2$-Wasserstein metric
\begin{align*}
    W_2^2(\mu,\nu)
    &\deq \Bigl[\inf_{\gamma \in \eu C(\mu,\nu)} \int \msf d^2(x,y) \, \D \gamma(x,y) \Bigr]^{1/2}\,,
\end{align*}
which makes $(\mc P_2(\mc M), W_2)$ into a metric space.
Moreover, at each regular measure $\mu \in \mc P_2(\mc M)$, we can define the tangent space
\begin{align*}
    T_\mu \mc P_2(\mc M)
    &\deq \overline{\{\nabla \psi \mid \psi \in \mc C_{\rm c}^\infty(\mc M)\}}^{L^2(\mu)}
\end{align*}
equipped with the inner product
\begin{align*}
    \langle v, w \rangle_\mu
    &\deq \int \mf g_p\bigl(v(p), w(p)\bigr) \, \D \mu(p)\,,
\end{align*}
which endows $(\mc P_2(\mc M), W_2)$ with the structure of a formal Riemannian manifold. Curves ${(\mu_t)}_{t\ge 0}$ in $\mc P_2(\mc M)$ are still described by the continuity equation
\begin{align}\label{eq:mfld_cont_eq}
    \partial_t \mu_t
    + \divergence(\mu_t v_t) = 0
\end{align}
where now $v_t$ is an element of the tangent bundle $T\mc M$ and $\divergence$ denotes the divergence operator on the Riemannian manifold. Equation~\eqref{eq:mfld_cont_eq} is to be interpreted in the weak sense, i.e., for any test function $\varphi : \mc M \to \R$,
\begin{align}\label{eq:cont_eq_weak}
    \partial_t \int \varphi \, \D \mu_t
    &= \int \mf g(\nabla \varphi, v_t) \, \D \mu_t\,.
\end{align}
If ${(\mu_t)}_{t\ge 0}$ is a smooth curve such that $\mu_t$ admits a density $\rho_t$ w.r.t.\ the Riemannian volume measure, then this is equivalent to the partial differential equation (PDE)
\begin{align*}
    \partial_t \rho_t
    &= \divergence(\rho_t v_t)\,.
\end{align*}

As before, the continuity equation admits a particle interpretation: if $p_0 \sim \mu_0$ and ${(p_t)}_{t\ge 0}$ evolves via the ODE
\begin{align}\label{eq:mfld_cont_eq_particle}
    \dot p_t
    &= v_t(p_t)\,,
\end{align}
then $p_t \sim \mu_t$ for all $t\ge 0$.

Given a functional $\eu F : \mc P_2(\mc M) \to \R \cup \{\infty\}$ defined over the Wasserstein space, its gradient at $\mu$ is, by definition, the element $\nabla_{W_2} \eu F(\mu) \in T_\mu \mc P_2(\mc M)$ such that: for all curves ${(\mu_t)}_{t\in \R}$ satisfying the continuity equation~\eqref{eq:mfld_cont_eq} with $\mu_0 = \mu$, it holds that
\begin{align*}
    \partial_t\big|_{t=0} \eu F(\mu_t)
    &= \langle \nabla_{W_2} \eu F(\mu), v_0 \rangle_\mu
    = \int \mf g\bigl(\nabla_{W_2} \eu F(\mu), v_0\bigr) \, \D \mu\,.
\end{align*}
Using the continuity equation~\eqref{eq:cont_eq_weak}, it follows by direct identification that 
\begin{align*}
    \nabla_{W_2} \eu F(\mu)
    &= \nabla \delta \eu F(\mu)\,,
\end{align*}
where $\delta \eu F(\mu) : \mc M\to\R$, the first variation of $\eu F$ at $\mu$, is defined up to an additive constant and satisfies
\begin{align*}
    \partial_t\big|_{t=0} \eu F(\mu)
    &= \int \delta \eu F(\mu) \, \partial_t\big|_{t=0} \mu_t\,.
\end{align*}
A gradient flow of $\eu F$ is a curve ${(\mu_t)}_{t\ge 0}$ which satisfies the continuity equation~\eqref{eq:mfld_cont_eq} with velocity vector field $v_t = -\nabla_{W_2} \eu F(\mu_t)$, which in turn admits the particle interpretation~\eqref{eq:mfld_cont_eq_particle}.

We now consider the functional
\begin{align*}
    \eu F(\mu)
    &\deq \KL(\p_\mu \mmid \pi)
\end{align*}
and compute its first variation. Let $\mf m$ denote the Riemannian volume measure; let ${(\rho_t)}_{t\in\R}$ be a smooth curve of densities $\rho_t = \frac{\D\mu_t}{\D\mf m}$. Since
\begin{align*}
    \eu F(\mu)
    &= \int V \, \D \p_\mu + \int \p_\mu \ln \p_\mu \\
    &= \iint V \, \D p_\theta \, \rho(\theta) \, \D \mf m(\theta) + \iint \ln \Bigl( \int p_{\theta'} \, \rho(\theta') \, \D \mf m(\theta')\Bigr) \, \D p_\theta \, \rho(\theta) \, \D \mf m(\theta)
\end{align*}
then
\begin{align*}
    \partial_t \eu F(\mu_t)
    &= \iint V \, \D p_\theta \, \dot\rho_t(\theta) \, \D \mf m(\theta) + \iint \frac{\int p_{\theta'} \, \dot \rho_t(\theta') \, \D \mf m(\theta')}{\int p_{\theta'} \, \rho_t(\theta') \, \D \mf m(\theta')} \, \D p_\theta \, \rho_t(\theta) \, \D \mf m(\theta) \\
    &\qquad{} + \iint \ln \Bigl( \int p_{\theta'} \, \rho(\theta') \, \D \mf m(\theta')\Bigr) \, \D p_\theta \, \dot\rho_t(\theta) \, \D \mf m(\theta) \\
    &= \iint (V + \ln \p_{\mu_t} + 1) \, \D p_\theta \, \dot\rho_t(\theta) \, \D \mf m(\theta)\,.
\end{align*}
From this,
\begin{align*}
    \delta \eu F(\mu) : \theta \mapsto \int (V + \ln \p_\mu + 1) \, \D p_\theta
    = \int \ln \frac{\p_\mu}{\pi} \, \D p_\theta + 1\,.
\end{align*}
Next, we compute the Bures{--}Wasserstein gradient using~\eqref{eq:bw_gradient} and~\eqref{eq:gaussian_differential_identities}:
\begin{align*}
    \nabla_{\BW} \delta \eu F(\mu)(m,\Sigma)
    &= \Bigl( \int \ln \frac{\p_\mu}{\pi} \, \nabla_m p_{m,\Sigma},\; 2\int \ln \frac{\p_\mu}{\pi} \,\nabla_\Sigma p_{m,\Sigma}\Bigr) \\
    &= \Bigl( \int \nabla \ln \frac{\p_\mu}{\pi} \, \D p_{m,\Sigma},\; \int \nabla^2 \ln \frac{\p_\mu}{\pi} \,\D p_{m,\Sigma}\Bigr)\,.
\end{align*}

Finally, to derive the system of ODEs~\eqref{eq:mixture_ode}, we combine the above expression for the Wasserstein gradient of $\eu F$ together with the particle interpretation~\eqref{eq:mfld_cont_eq_particle} and the equations~\eqref{eq:evolution_mean} and~\eqref{eq:evolution_cov} for dynamics on the Bures{--}Wasserstein space.

\section{Lack of convexity of the KL divergence for mixtures of Gaussians}\label{scn:lack_of_cvxty}

In this section, we provide counterexamples for the lack of convexity of the objective functional $\mu\mapsto \eu F(\mu) = \KL(\p_\mu \mmid \pi)$ on the space $\mc P_2(\BW(\R^d))$.

First, we point out that even when $\pi$ is strongly log-concave, the functional $\eu F$ can be badly behaved.
For example, if $\pi = p_{0,1} = \mc N(0, 1)$ is a Gaussian of variance $1$, then we can write it as a Gaussian mixture in many ways: $\pi = \int \mc N(m, a) \, \D \nu_{1-a}(m)$ for any $a \in [0,1]$, where $\nu_a = \mc N(0, a)$. In particular, the set of minimizers of $\eu F$ is not a singleton, and includes all of the measures $\nu_a \otimes \delta_a$ ($(m,\sigma^2)$ is a random pair with independent components, where $m \sim \normal(0, 1-a)$ and $\sigma^2=a$ almost surely) for $a \in [0,1]$ (as well as all convex combinations---i.e., mixtures---thereof).

Next, we give an explicit example which demonstrates the lack of convexity of the entropy functional $\mu \mapsto \eu H(\p_\mu) \deq \int \p_\mu \ln \p_\mu$. This can be understood as the KL divergence with zero potential ($V = 0$).
Note that the entropy functional $\eu H$ is convex on $\mc P_2(\R^d)$~\citep[Section 9.4]{ambrosio2008gradient}, but our claim is that its composition with the map $\mu\mapsto \p_\mu$ is not convex on $\mc P_2(\BW(\R^d))$.

In one dimension let $\mu_0 = \mc N(0, 1) \otimes \delta_1$ and $\mu_1 = \mc N(0, \tau^2) \otimes \delta_1$. In words, a random pair $(m_0, \sigma_0^2)$ drawn from $\mu_0$ satisfies $m_0 \sim \mc N(0, 1)$ and $\sigma_0^2 = 1$, and similarly for $\mu_1$. What is the optimal coupling of $\mu_0$ and $\mu_1$? Clearly $\sigma_0^2 = \sigma_1^2 = 1$ is the trivial coupling, and since the Bures--Wasserstein distance over the means is the same as the Euclidean distance between the means, we want the usual $W_2$ optimal coupling between $\normal(0, 1)$ and $\normal(0, \tau^2)$; it follows that $m_1 = \tau m_0$. Hence, the Bures geodesic between is $\{(m_t, \sigma_t^2) = ((1-t + t\tau) \, m_0, 1)\}_{t \in [0,1]}$; equivalently the (Bures--)Wasserstein geodesic between $\mu_0$ and $\mu_1$ is $\{\mu_t = \normal(0, {(1-t+t\tau)}^2) \otimes \delta_1\}_{t \in [0,1]}$.

Next, recall that the Gaussian mixture $\p_{\mu_t}$ is the law of $X$ drawn in the two-stage procedure: first we draw $(m_t,\sigma_t^2) \sim \mu_t$, and given $(m_t,\sigma_t^2)$ we draw $X \sim p_{m_t,\sigma_t^2}$. Thus, 
$$
\p_{\mu_t} = \int \normal(m, \sigma^2)\,\D \mu_t(m, \sigma^2)=\int \normal(m, 1)\,\D \nu_{{(1-t+t\tau)}^2}(m)
= \normal(0, 1+ {(1-t+t\tau)}^2)\,.
$$
Hence,
$$
 \eu H(\p_{\mu_t})=  \int \p_{\mu_t} \ln \p_{\mu_t} =-\frac12 \ln(2\pi e) - \frac12 \ln\bigl(1+{(1-t+t\tau)}^2\bigr)\,.
$$
Then, the convexity of $t\mapsto \eu H(\p_{\mu_t})$ is equivalent to the convexity of $t\mapsto -\ln(1+ {(1-t+t\tau)}^2)$, which fails when, e.g., $\tau = 1/2$; in that case, the function is, in fact, concave on the interval $[0,1]$.

\section{The Wasserstein--Fisher--Rao gradient flow}\label{scn:wfr}

Similarly to the setting in Section~\ref{scn:mixtures}, here we identify probability measures $\mu$ over the Bures{--}Wasserstein space with the corresponding Gaussian mixture $\p_\mu$. The aim of this section is to derive the gradient flow of the KL divergence $\mu\mapsto \KL(\p_\mu \mmid \pi)$, except we now equip the space $\mc P_2(\BW(\R^d))$ with the Wasserstein--Fisher--Rao geometry~\citep{lieromielkesavare2016wfr1, chizatetal2018wfr, lieromielkesavare2018wfr2}. Deriving the gradient flow with respect to this geometry leads to dynamics for a system of interacting Gaussian particles in which the weight of each particle is also updated at each iteration.

\subsection{Background on Wasserstein--Fisher--Rao geometry}

Here we briefly summarize the relevant background on the Wasserstein--Fisher--Rao (WFR) geometry. The WFR metric is also called the \emph{Hellinger--Kantorovich} metric by some authors.

\paragraph{The Fisher{--}Rao metric.} The Fisher{--}Rao metric is a metric on the space $\mc M_+(\R^d)$ of positive measures (not necessarily probability measures). It is the induced metric on $\mc M_+(\R^d)$ if we enforce that the mapping $\mu\mapsto \sqrt\mu$ (defined for smooth probability densities $\mu$) is an isometry into $L^2(\R^d)$. This means that
\begin{align*}
    \msf d_{\msf{FR}}^2(\mu_0,\mu_1)
    &= \int (\sqrt{\mu_0} - \sqrt{\mu_1})^2\,,
\end{align*}
and if $\mu_0$ and $\mu_1$ are probability measures then this is known to statisticians (up to a constant factor) as the squared Hellinger distance.
(If we apply the analogous procedure to discrete probability measures, then this amounts to identifying the simplex with a subset of the unit sphere.) The Fisher{--}Rao metric is well-studied in the field of information geometry~\citep{AmaNag00, ayetal2017infogeometry}.

Next, we describe the Riemannian geometry underlying the Fisher{--}Rao metric.
Consider a curve $t\mapsto \mu_t$ of positive measures with time derivative $\dot \mu$. Since the Fisher{--}Rao metric endows the square root of the density with a Hilbert metric, we place endow the time derivative of the square root, $\dot{\sqrt \mu} = \dot \mu/(2\sqrt\mu)$, with the Hilbert norm $\norm{\dot \mu/(2\sqrt \mu)}_{L^2(\R^d)}$. Thus, the norm at the tangent space $T_\mu \mc M_+(\R^d)$ is given by
\begin{align*}
    \norm{\dot \mu}_\mu^2
    &= \int \frac{\dot \mu^2}{4\mu}\,.
\end{align*}

Actually, because we are working with positive measures (called \emph{unbalanced} measures to distinguish from the usual optimal transport problem which requires the measures to have the same total mass), this kind of geometry is useful for studying problems in which the total mass changes over time.
For example, PDEs of the form $\partial_t \mu_t = \alpha_t \mu_t$ are called reaction equations because they describe, e.g., how the concentration of a chemical changes over time in reaction to the environment. Motivated by this application, we parameterize $\dot \mu$ via $\dot \mu = \alpha \mu$, in which case the norm is
\begin{align}\label{eq:fr_norm}
    \norm{\alpha}_\mu^2
    &= \frac{1}{4} \int \alpha^2 \, \D \mu\,.
\end{align}

\paragraph{Wasserstein geometry.} We recall from Section~\ref{scn:otto} that Wasserstein geometry is motivated by a completely different class of PDEs, namely \emph{transport equations} encoded by the continuity equation
\begin{align*}
    \partial_t \mu_t + \divergence(\mu_t v_t) = 0\,,
\end{align*}
which describe the evolving law of a particle $x_t$ tracing out an integral curve of the family of vector fields: $\dot x_t = v_t(x_t)$. The Riemannian structure is obtained by equipping the tangent space $T_\mu \mc P_2(\R^d)$ with the norm
\begin{align*}
    \norm v_\mu^2
    &= \int \norm v^2 \,\D \mu\,.
\end{align*}

\paragraph{Wasserstein{--}Fisher{--}Rao geometry.} Next we combine the two geometric structures, which can model transport-reaction equations such as
\begin{align}\label{eq:continuity_reaction}
    \partial_t \mu_t + \divergence(\mu_t v_t) = \alpha_t \mu_t\,.
\end{align}
The tangent space norm is then given by the combination combination
\begin{align*}
    \norm{(\alpha, v)}_\mu^2
    &= \int ( \alpha^2 + \norm v^2) \, \D \mu\,.
\end{align*}
(At this point some authors add a factor $\frac{1}{4}$ in front of the $\alpha^2$, which is natural in view of~\eqref{eq:fr_norm}. This is convenient for studying geometric properties of the space, but it is not necessary for our purposes.) As in the pure Fisher{--}Rao case, this is a metric on the space of positive measures $\mc M_+(\R^d)$.

It induces the distance
\begin{align*}
    \WFR^2(\mu_0,\mu_1)
    &\deq \inf\Bigl\{ \int_0^1 \norm{(\alpha_t,v_t)}_{\mu_t}^2 \, \D t \Bigm\vert {(\mu_t,\alpha_t,v_t)}_{t\in [0,1]}~\text{solves}~\eqref{eq:continuity_reaction}\Bigr\}\,.
\end{align*}

One can show that the tangent space to $\mc M_+(\R^d)$ consists of pairs $(\alpha, v)$ for which $\alpha = u$ and $v = \nabla u$ for some function $u : \R^d\to\R$. Thus, compared to the Wasserstein metric in which the tangent space norm is the $\dot H^1(\mu)$ norm $\norm u_{\dot H^1(u)} = \norm{\nabla u}_{L^2(\mu)}$, the Wasserstein{--}Fisher{--}Rao metric has the interpretation of completing the tangent space norm to the full Sobolev norm $H^1(\mu)$.

\paragraph{Constraining the dynamics to lie within probability measures.}
In order to have our dynamics stay on the space of probability measures, we follow~\citet{lulunolen2019birthdeath} and consider instead the equation
\begin{align*}
    \partial_t \mu_t + \divergence(\mu_t v_t) = \Bigl(\alpha_t - \int \alpha_t \, \D \mu_t\Bigr) \, \mu_t\,,
\end{align*}
which now conserves mass. The tangent space norm is modified to read
\begin{align*}
    \norm{(\alpha, v)}_\mu^2
    &= \int \Bigl[ \Bigl(\alpha - \int \alpha \, \D \mu\Bigr)^2 + \norm v^2\Bigr] \, \D \mu\,.
\end{align*}

\paragraph{Particle interpretation.}
The particle interpretation of the WFR geometry is more complicated to state than for the Wasserstein geometry, but it can be done. Instead of considering a particle $x$, we consider a pair $(x,r)$ consisting of a particle $x\in\R^d$ and a number $r > 0$ (this number is actually interpreted as the \emph{square root} of the mass of the particle). The pair $(x,r)$ should be thought of as an element of the cone space $\mf C(\R^d) \deq (\R^d\times \R_+)/(\R^d \times \{0\})$ (in other words, we take the space $\R^d\times \R_+$ and identify all of the points with zero mass which sit at the ``tip of the cone''). The cone space is the natural setting for WFR geometry; for example, one can introduce a metric on $\mf C(\R^d)$ and show that the WFR distance is an optimal transport problem w.r.t.\ this metric. We will not go into such detail, but nevertheless we introduce the cone space because is important for the particle interpretation of WFR dynamics.

Curves of measures ${(\mu_t)}_{t\in [0,1]}$ in the WFR geometry admit a particle interpretation in terms of trajectories on $\mf C(\R^d)$. Namely, the equation~\eqref{eq:continuity_reaction} can be interpreted as follows.
There exists a curve of measures $t\mapsto \widetilde\mu_t$ over the cone space $\mf C(\R^d)$, such that if $r : \mf C(\R^d) \to \R_+$ denotes the mapping $(x,r) \mapsto r$, and $x : \mf C(\R^d) \to \R^d$ maps $(x, r) \mapsto x$, then
\begin{align*}
    \mu_t
    &= x_\# (r^2 \widetilde \mu_t)\,.
\end{align*}
Moreover, if we draw $(x_0, r_0) \sim \widetilde \mu_0$ and follow the ODEs
\begin{align*}
    \dot x_t
    &= v_t(x_t)\,, \\
    \dot r_t
    &= \Bigl( \alpha_t(x_t) - \int \alpha_t \, \D \mu_t\Bigr) \, r_t\,,
\end{align*}
then $(x_t,r_t) \sim \widetilde \mu_t$. Here the notation $\sim$ is an (egregious) abuse of notation because $\widetilde \mu_t$ is not a probability measure; by $(x,r) \sim \widetilde \mu$ more precisely we mean that $\widetilde \mu_t = (\msf{ODE}_t)_\# \widetilde \mu_0$ where $\msf{ODE}_t$ is the solution mapping $(x_0,r_0) \mapsto (x_t,r_t)$ to the above system of ODEs at time $t$.

To make this interpretation more concrete, we specialize to the case of discrete measures. Suppose that we start at a probability measure
\begin{align*}
    \mu_0
    &= \sum_{i=1}^N w^{(i)}_0 \delta_{x^{(i)}_0}\,.
\end{align*}
Then, we lift to the cone space:
\begin{align*}
    \widetilde \mu_0
    &= \sum_{i=1}^N \delta_{(x^{(i)}_0, \sqrt{w^{(i)}_0})}
    = \sum_{i=1}^N \delta_{(x^{(i)}_0, r^{(i)}_0)}
\end{align*}
where we set $r_t^{(i)} = \sqrt{w_t^{(i)}}$. Next, we follow the ODEs
\begin{align*}
    \dot x_t^{(i)}
    &= v_t(x_t^{(i)})\,, \\
    \dot r_t^{(i)}
    &= \Bigl(\alpha_t(x_t^{(i)}) - \sum_{j=1}^N w_t^{(j)} \alpha_t(x_t^{(j)}) \Bigr) \, r_t^{(i)}\,.
\end{align*}
Upon projecting back to the base space, we obtain another discrete measure
\begin{align*}
    \mu_t
    &= \sum_{i=1}^N w^{(i)}_t \delta_{x^{(i)}_t}
    = \sum_{i=1}^N (r^{(i)}_t)^2 \, \delta_{x^{(i)}_t}\,.
\end{align*}
As a sanity check, we check that these dynamics ensure that $\mu_t$ is a probability measure for all $t$. The time derivative of the sum of the weights is
\begin{align*}
    \partial_t \sum_{i=1}^N w^{(i)}_t
    &= 2 \sum_{i=1}^N r^{(i)}_t \,\partial_t r^{(i)}_t
    = 2 \sum_{i=1}^N (r^{(i)}_t)^2 \, \bigl( \alpha_t(x_i^{(t)}) - \E_{\mu_t} \alpha_t\bigr) \\
    &= 2 \,\Bigl(\sum_{i=1}^N w^{(i)}_t \alpha_t(x_i^{(t)})- \E_{\mu_t}\alpha_t\Bigr)
    = 0\,.
\end{align*}

\subsection{Derivation of the gradient flow}

Next, we derive the Wasserstein{--}Fisher{--}Rao gradient flow of the functional $\mu\mapsto\eu F(\mu) \deq \KL(\msf p_\mu \mmid \pi)$ on the space $(\mc P_2(\BW(\R^d)), \WFR)$ of Gaussian mixtures equipped with the Wasserstein{--}Fisher{--}Rao metric (over the Bures{--}Wasserstein space).
The WFR gradient of $\eu F$, $\nabla_{\WFR} \eu F(\mu)$, is the pair
\begin{align*}
    \nabla_{\WFR} \eu F(\mu)
    &= \Bigl( \nabla_{\BW} \delta \eu F(\mu),\; \delta \eu F(\mu) - \int \delta \eu F(\mu) \, \D \mu\Bigr)\,.
\end{align*}
This result is essentially stated as~\citet[Proposition A.1]{lulunolen2019birthdeath}, although we have generalized the formula to hold when the base space is no longer $\R^d$. Note also that we have already calculated the first variation of $\eu F$, as well as the BW gradient, in Section~\ref{scn:pf_mixtures}.

The interpretation of the formula is that in the gradient flow of $\eu F$, we have a particle $(m,\Sigma)$ associated with some mass $w$ evolving according to
\begin{align*}
    \dot m
    &= -\E_{p_{m,\Sigma}} \nabla \ln \frac{\p_\mu}{\pi}\,, \\
    \dot \Sigma
    &= -\Sigma \E_{p_{m,\Sigma}} \nabla^2 \ln \frac{\p_\mu}{\pi} - \E_{p_{m,\Sigma}} \nabla^2 \ln \frac{\p_\mu}{\pi} \, \Sigma\,, \\
    \dot r
    &= -\Bigl( \E_{p_{m,\Sigma}}\ln \frac{\p_\mu}{\pi} - \E_{\p_{\mu}} \ln \frac{\p_\mu}{\pi} \Bigr) \, r\,,
\end{align*}
where $r = \sqrt w$. The interpretation may be clearer in the discrete case, so suppose that we initialize the dynamics at a discrete measure
\begin{align*}
    \mu_0
    &= \sum_{i=1}^N w_0^{(i)} \delta_{(m^{(i)}_0, \Sigma^{(i)}_0)}\,.
\end{align*}
Next we solve the coupled system of ODEs, for $i\in [N]$,
\begin{align*}
    \dot m^{(i)}
    &= -\E_{p_{m^{(i)},\Sigma^{(i)}}} \nabla \ln \frac{\p_\mu}{\pi}\,, \\
    \dot \Sigma^{(i)}
    &= -\Sigma^{(i)} \E_{p_{m^{(i)},\Sigma^{(i)}}} \nabla^2 \ln \frac{\p_\mu}{\pi} - \E_{p_{m^{(i)},\Sigma^{(i)}}} \nabla^2 \ln \frac{\p_\mu}{\pi} \, \Sigma^{(i)}\,, \\
    \dot r^{(i)}
    &= -\Bigl( \E_{p_{m^{(i)},\Sigma^{(i)}}}\ln \frac{\p_\mu}{\pi} - \E_{\p_{\mu}} \ln \frac{\p_\mu}{\pi} \Bigr) \, r^{(i)}\,,
\end{align*}
where $r^{(i)} = \sqrt{w^{(i)}}$ and
\begin{align*}
    \mu_t
    &= \sum_{i=1}^N w_t^{(i)} \delta_{(m^{(i)}_t, \Sigma^{(i)}_t)}\,.
\end{align*}
Since the normalization constant of $\pi$ cancels out in the above equations, they are implementable without this knowledge.

\section{Experiments for Gaussian VI}\label{scn:xp-vi}

The goal of the present section is to conduct numerical experiments that illustrate the convergence of the Gaussian distribution corresponding to the ODE~\eqref{eq:sarkka}  to an approximation of the target distribution.  We consider two kinds of targets:  a  mixture  of two Gaussians, and   a log-concave target  that corresponds to the likelihood function in logistic regression.

\subsection{Setup} \label{gauss:target}
\subsubsection{Definition of  the target distributions}
\textbf{Bimodal target: mixture of two Gaussians} \\
We define a bimodal target as a  mixture of two Gaussians $\pi=\frac{1}{2}\, \mathcal{N}(\mu_1,\Sigma_1)+\frac{1}{2} \,\mathcal{N}(\mu_2,\Sigma_2)$ where $\Sigma_1$ and $\Sigma_2$ have non isotropic covariances with a ratio of $3$ between the largest and the smallest eigenvalues.

\medskip{}

\noindent{}\textbf{Log-concave target: Bayesian logistic regression} \\
The proposed log-concave target is generated in the context of the Bayesian  treatment of logistic regression   associated with a two-class synthetic dataset $\mathcal{D}= \{(x_i, y_i) : i = 1,\dotsc,N\}$. The probability of the binary  label $y_i\in\{0,1\}$ given the  corresponding covariate $x_i$ and parameter  $z \in \R^d$ is defined by the following Bernoulli distribution:
\begin{align}\label{eq:logistic_model}
    \pi(y_i|x_i, z)=\sigma(x_i^\T z)^{y_i}\,(1-\sigma(x_i^\T z))^{1-y_i}\,,
\end{align}
where $\sigma(x)=1/(1+\exp(-x))$ is the logistic function. We define the target distribution as the posterior associated to data $\mc D$ starting from an uninformative (flat) prior on $z$, that is, 
\begin{align}
\pi(z|\mathcal{D})=\frac{1}{Z}\prod_{i=1}^N \pi(y_i|x_i, z)\,, \label{logisticTarget}
\end{align} 
with $Z$   the normalization constant.  
The Langevin dynamics are associated with the gradient of $V$ then defined by: $$-\nabla V(z) = \nabla \log \pi(z|\mathcal{D})= \sum_{i=1}^N (y_i-\sigma(x_i^\T z))\,x_i\,.$$

To generate the synthetic data $\mathcal{D}$, we randomly draw labels   $y_i\in\{0,1\}$ and for the problem to be well-specified we have drawn the class-conditional covariates $x_i$ from Gaussian distributions   $\mc N(m_{y_i}^*, \Sigma^*)$ with   $m_1^*=-m_0^*=m^*$. We call $s$ the separation factor defined by  $\norm{m_1^* - m_0^*}=\norm{2m^*} = s$. For illustrative purposes we also plot Fisher's linear discriminant vector defined by  $z^* = 2\Sigma^{*-1}m^*$~\citep[see][chapter 4]{Bishop06}. An example of the generated data is displayed in Figure~\ref{logistic2D}.

\begin{figure}[!h]
\centering
\includegraphics[scale=0.6]{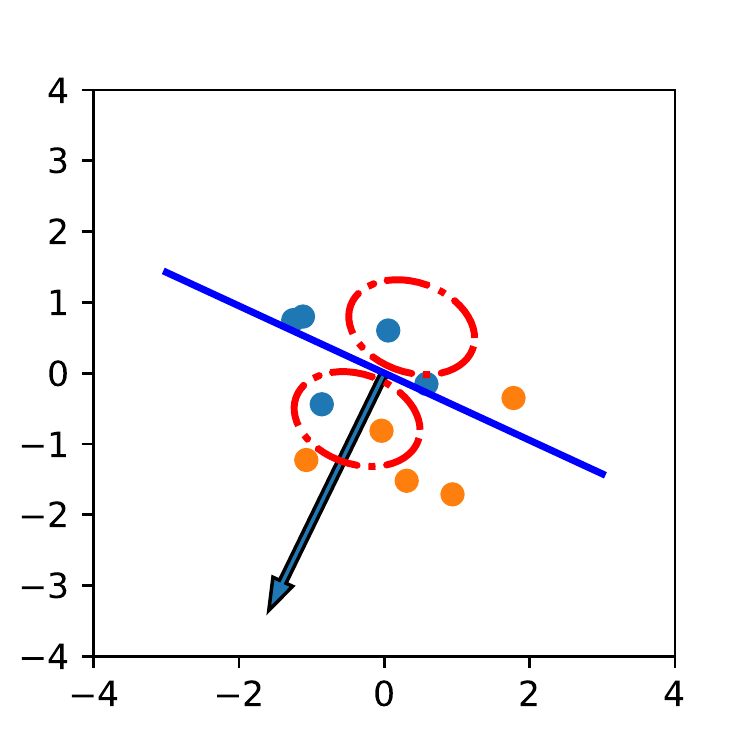}
\includegraphics[scale=0.6]{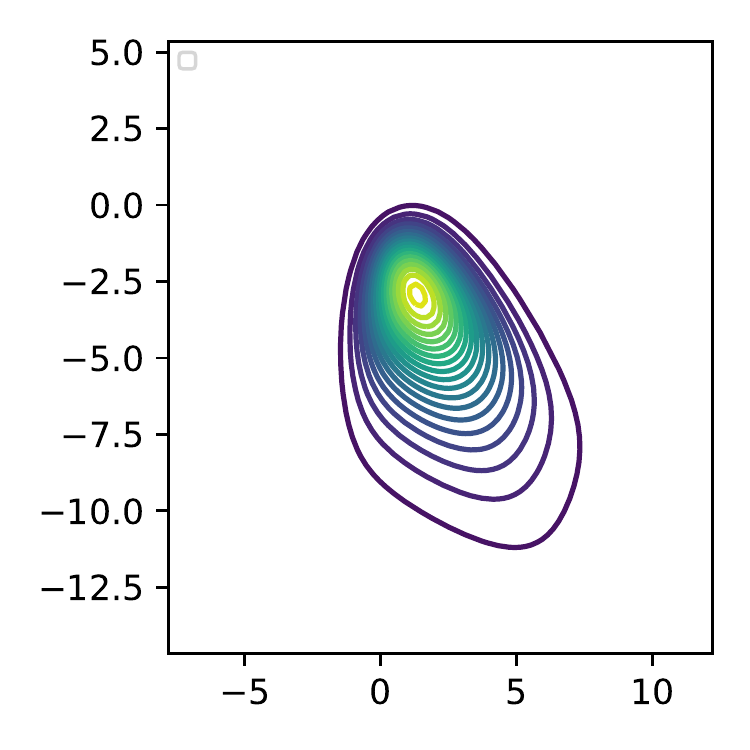}
\caption{The dataset $\mathcal{D}$ (left figure) used to generate the target distribution (right figure). The two Gaussians of equal covariance $\Sigma^*$ from which the covariates are generated are shown as red ellipsoids. The arrow represents Fisher's linear discriminant $z^*=  2\Sigma^{*-1}m^*$.}
\label{logistic2D}
\end{figure} 

\subsubsection{Evaluation of the KL divergence for the proposed log-concave targets} 
 
The target distribution~\eqref{logisticTarget} may be written $\pi (z|\mathcal{D})=\frac{1}{Z}\, \tilde\pi (z|\mathcal{D})$ where $\tilde\pi (z|\mathcal{D})=\prod_{i=1}^N\pi(y_i|x_i, z)$ is the unnormalized distribution. The divergence between any Gaussian distribution $p=\mathcal{N}(m,\Sigma)$ and the target then writes
\begin{align} 
\KL(p(z)\mmid \pi (z|\mathcal{D}))
&= \int p(z) \ln  \frac{p(z)}{\pi (z|\mathcal{D})} \, \D z=  \int p(z) \ln  \frac{p(z)}{\tilde\pi (z|\mathcal{D})} \, \D z+ \ln Z \\
&=-\int p(z) \ln \tilde\pi (z|\mathcal{D}) \, \D z + H(p) + \ln Z  
\end{align}
where $H(p)$ is the negative entropy of a Gaussian distribution for which a closed-form expression is known. We will see shortly we can   approximate the expectation under the Gaussian $p$ as follows
\begin{align}
\KL(p(z)\mmid \pi (z|\mathcal{D}))\approx - \sum_{i=1}^{K} \alpha_i \ln \tilde\pi (m + c_iRe_i|\mathcal{D}) + H(p) + \ln Z\,,\label{lnZ:eq}
\end{align}
using $K=2d$ sigma points  with cubature rules $(\alpha_i,c_i)=(\frac{1}{2d},\sqrt{d})$  for all $i$, 
and where $R$ is defined via the Cholesky decomposition $RR^\T=\Sigma$ (see Section~\ref{scn:numerical} for details).

\subsubsection{The Laplace approximation as a baseline} \label{Laplace}

We use the  widespread Laplace approximation~\citep[see][chapter 4]{Bishop06}  as a baseline for comparisons. 
In dimension 2, we compute the normalization constant $Z$   of~\eqref{logisticTarget}   using a grid. When we  turn to high dimension, normalization becomes intractable. However,
we may still compare our algorithm with  Laplace approximation  as follows.
Since our goal is mainly to illustrate the convergence of our algorithm using Laplace approximation as a baseline, we may choose an arbitrary value for the normalization constant $Z$ when evaluating the divergence to the target in equation~\eqref{lnZ:eq}.
This allows for comparison of the KL divergence between the approximating distribution---given by either Gaussian \vi{} or Laplace approximation---and the target $\pi$ up to the same additive constant for both methods.
By default, we let $Z=1$, but we sometimes use larger values of $Z$ in order to avoid plotting negative values for the unnormalized KL (albeit an  arbitrary choice).

To obtain the Laplace approximation, we first compute a mode of the target distribution $\pi$.
Once the mode $z_0$ has been found, we consider the following Taylor approximation around the mode:
\begin{align}
\ln \pi(z) \approx \ln \pi(z_0) - \frac{1}{2} \,(z-z_0)^\T H(z-z_0)\,,
\end{align}
where $H$ is the Hessian of the negative log-likelihood around $z_0$ defined by $H=\nabla^2 \log \frac{1}{\pi}(z_0)$.
Renormalizing, this yields the approximation
\begin{align}
\pi \approx \hat\pi^{\msf{Laplace}}
=\mathcal{N}(z_0,H^{-1})\,.
\end{align}
In our experiments, we use the L-BFGS algorithm \citep{Liu89} to find the mode $z_0$.

\subsection{Implementation} \label{scn:numerical}
 We follow~\citet{Sarkka07, Lambert22b} to compute   the expectations involved in  equation~\eqref{eq:sarkka}  using  quadrature rules.  We then numerically integrate the set of coupled ODEs in equation~\eqref{eq:sarkka} using a fourth-order Runge--Kutta method.
As a first step, we introduce a method to  enforce that the  covariance matrix $\Sigma$ remains symmetric and positive at all times.
\begin{itemize}
\item \textbf{Covariance matrices in square root form:} To numerically enforce that the covariance matrix $\Sigma$ remains symmetric and positive at each step, as is customary in the Kalman filtering literature, we consider a continuous-time ``square-root'' form of the covariance as developed in~\citet{Morf77} and applied in~\citet{Sarkka07}.
Let $R$ be a lower triangular matrix such that $\Sigma = RR^\T$. An ODE for $R$ is obtained as follows.  
\begin{align}
&\dot \Sigma = \dot R R^\T + R \dot R^\T 
\end{align}
Multiplying by $R^{-1}$ on the left and $R^{-\T}$ on the right yields:
\begin{align}
&R^{-1}\dot R+\dot R^\T R^{-\T}=R^{-1}\,\dot \Sigma\,R^{-\T}\,.
\end{align}As $R^{-1}\dot R+\dot R^\T R^{-\T}= R^{-1}\dot R+ (R^{-1}\dot R)^\T$, the solution is given by:
\begin{align}
R^{-1}\dot R
&=\mathrm{Tria}(R^{-1} \, \dot \Sigma\, R^{-\T})\,,\\
\dot R
&=R\, \mathrm{Tria}(R^{-1}\, \dot \Sigma\, R^{-\T})\,,
\end{align}
where $\mathrm{Tria}(A)$ gives the lower triangular matrix $L$ corresponding to $A$ such that $A=L+L^\T$ where $L_{i,i}=\frac{1}{2}\,A_{i,i}$, $L_{i,j}=A_{i,j}$ if $i>j$, and $L_{i,j}=0$ otherwise. Letting $\dot \Sigma$ be as in ~\eqref{eq:sarkka}, this yields an   ODE in terms of the square root factor $R$.
\item \textbf{Computing expectations:} 
We compute Gaussian expectations using a quadrature rule based on $2d$ sigma points $x_1,\dots,x_{2d}$~\citep{Julier04}:
\begin{align*}
&\mathbb{E}_{p_{m,\Sigma}}[f(x)] \approx  \sum_{n=1}^{2d} \alpha_n f(x_n)\,,
\end{align*}
where the sigma points are distributed according to $x_n= m + c_n Re_n$, where $RR^\T = \Sigma$, $e_n|_{n=1,\dots,d}$ is a basis, and $e_n|_{n=d+1,\dots,2d}$ is its negative. Many variants exist to choose $\alpha_n$ and $c_n$; here, we consider the cubature  points of~\citet{Arasaratnam09} defined by $\alpha_n=\frac{1}{2d}$ and $c_n=\sqrt{d}$ which are well-adapted for Gaussian integration.
\end{itemize}

\subsection{Results in dimension 2} 
We first conduct experiments in dimension $2$ to easily visualize the true posterior (normalization is performed using a discrete grid of size $100 \times 100$).

\subsubsection{Trajectories generated by numerical integration of the ODEs} 

In Figure~\ref{XP1}, we see that Gaussian \vi{} converges quickly to one mode of the bimodal target, and to the unique mode of the logistic target.
As shown in Figure~\ref{XP2}, the results still hold if we choose a larger step size for the Runge--Kutta scheme.

\begin{figure}[!h]
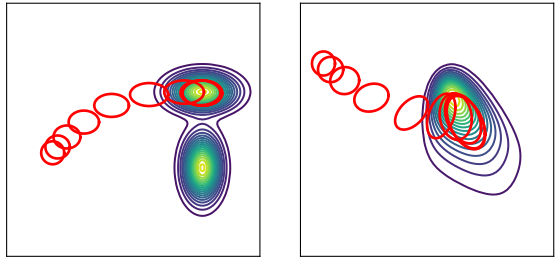

\centering
\includegraphics[scale=0.5]{imgsNeurIPS/GaussianVI-Target1-Traj2}
\includegraphics[scale=0.5]{imgsNeurIPS/GaussianVI-Target2-Traj2}
\caption{Approximation of a bimodal target (left) and a logistic target (right). We use a Runge--Kutta scheme with step size $0.1$ and a time duration of $T=30$ (i.e., $300$ steps). The ellipsoids represent the Gaussian computed at successive steps.}
\label{XP1}
\end{figure} 

\begin{figure}[!h]
\centering
\includegraphics[scale=0.5]{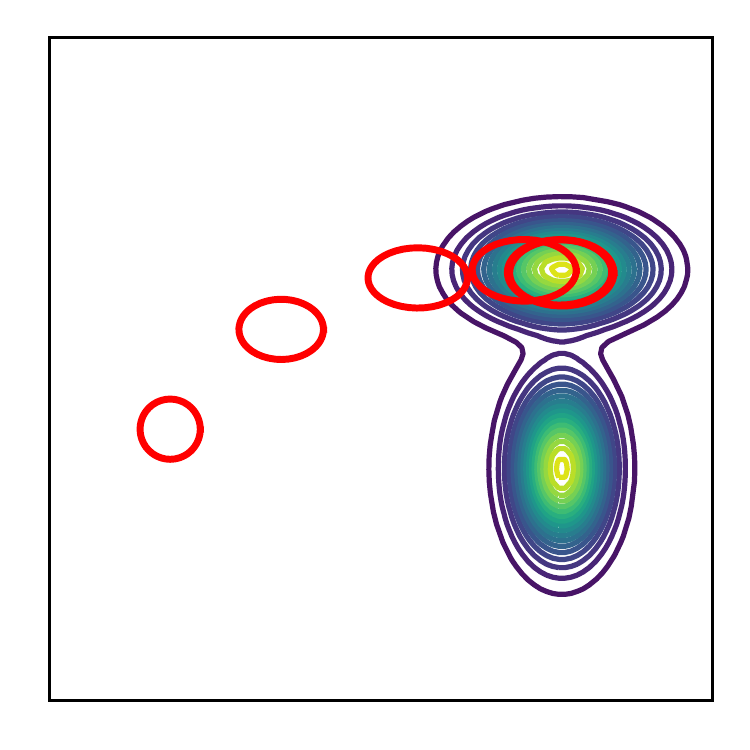}
\includegraphics[scale=0.5]{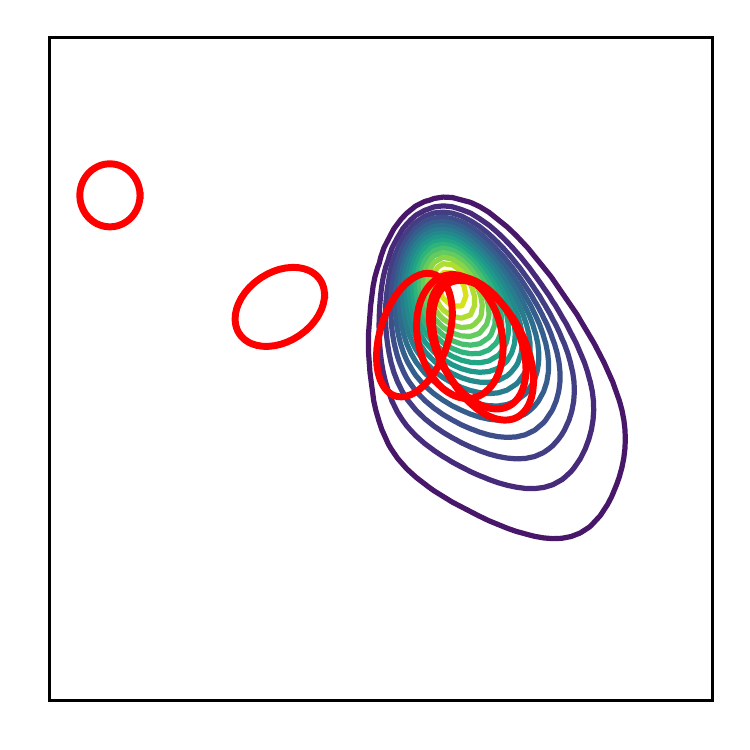}
\caption{Same as Figure~\ref{XP1} with a larger Runge--Kutta step size $1$ (i.e., 30 steps). In both cases, the algorithm converges to the same approximation as in Figure~\ref{XP1}.}
\label{XP2}
\end{figure} 

\subsubsection{Comparison with the Laplace approximation}

We compare Gaussian \vi{} with the Laplace approximation on the logistic target in dimension $2$ for the setting described in Section~\ref{gauss:target} with an arbitrary  $\Sigma^*$ and $N=10$.
We plot the convergence speed of our algorithm for Gaussian \vi{} in Figure~\ref{LapVI2d} for separation parameters $s=1.5$ and $s=2$, the latter corresponding to a sharper density. Gaussian \vi{} converges very fast and produces a better approximation of the target in terms of KL divergence than the Laplace approximation.

\begin{figure}[!h]
\centering
\includegraphics[scale=0.6]{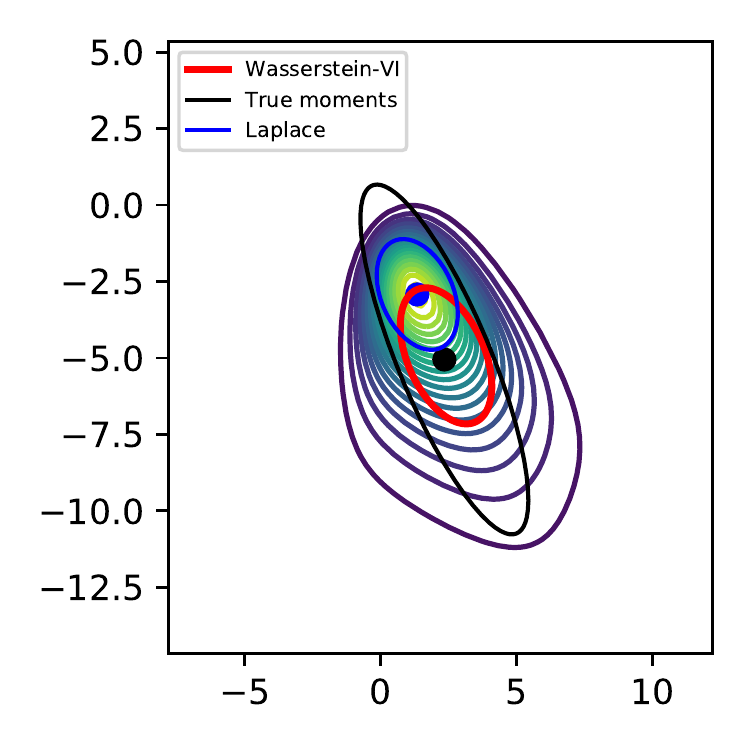}
\includegraphics[scale=0.6]{imgsNeurIPS/GaussianVI-d2-KL}
\includegraphics[scale=0.6]{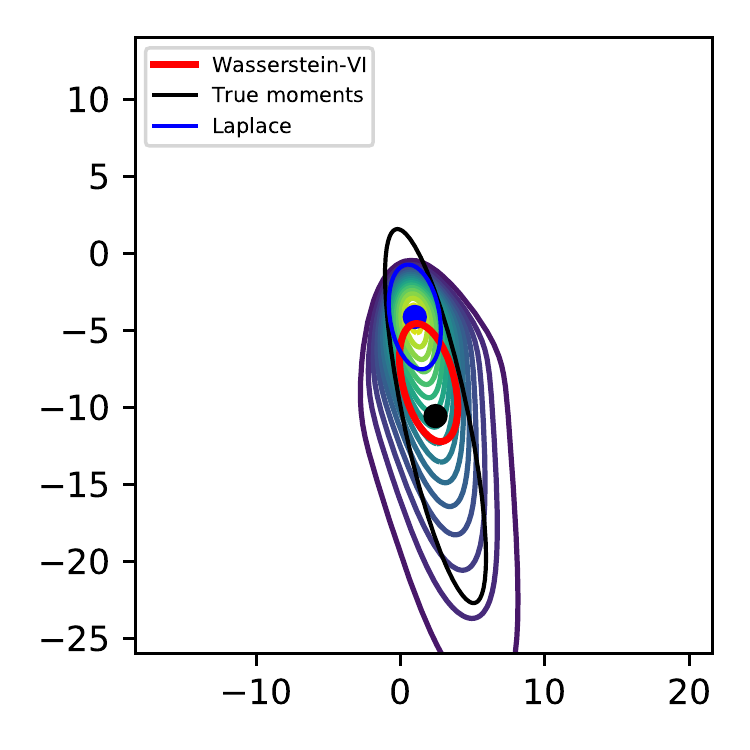}
\includegraphics[scale=0.6]{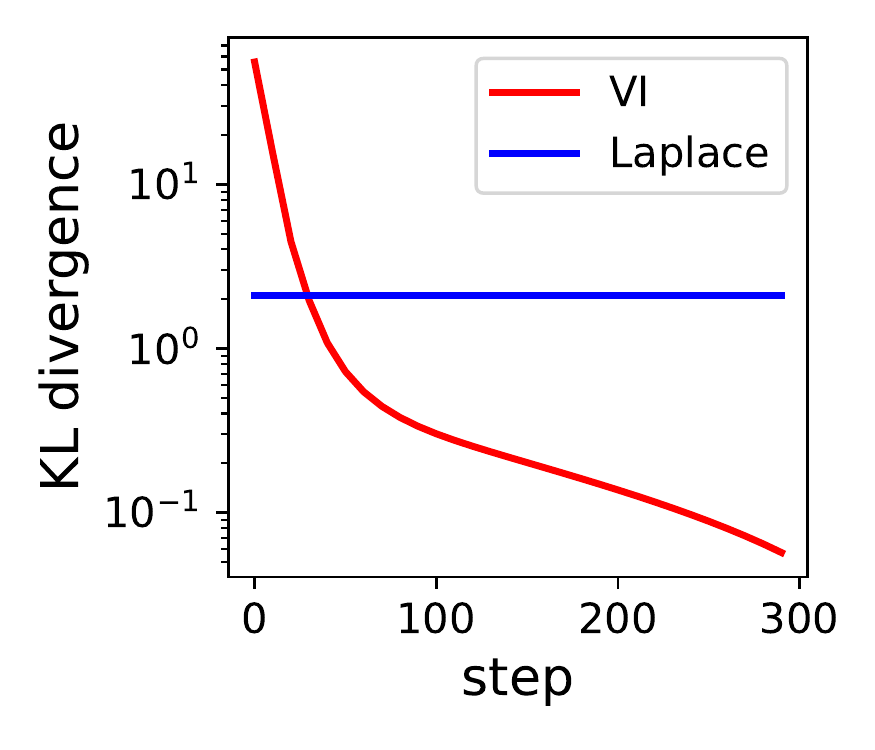}
\caption{Results in dimension $d=2$, $N=10$ for a separation factor $s=1.5$ (upper row) and $s=2$ (lower row). The left column  shows the true density via contour lines, the true mean (black dot) and covariance (black ellipsoid), and the results of the Laplace and Wasserstein \vi{} approximations as blue and red ellipsoids respectively. The right column shows the evolution of the left  KL divergence for Gaussian \vi{} on a logarithmic scale. The corresponding KL divergence obtained with Laplace approximation is shown as a blue straight line.
}
\label{LapVI2d}
\end{figure}

\subsection{Results in higher dimensions} \label{scn:xphd}
We now compare Gaussian \vi{} with the Laplace approximation on the logistic target in dimension $d=10$    and $d=100$. We consider the setting described in Section~\ref{gauss:target} where we let   $\Sigma^* =\frac{1}{d}\,I$, to have consistent norms of the inputs accross dimensions.

For Gaussian \vi{} in high dimension, we find that a step size $1$ for the Runge--Kutta integration method is too large and leads to singular covariance matrices. We thus take the step size equal  to $0.1$.
The initial Gaussian is taken to be $\mc N(0, 100 I)$, to better cover  regions of low density initially.

Results are shown in Figures~\ref{logHD1}
 and~\ref{logHD3} in dimension $d=10$ and $100$ respectively.  Gaussian \vi{} converges very fast and always produces a better approximation of the target in terms of KL divergence than the Laplace approximation. Note that the Laplace approximation can have a very high left KL divergence when the target distribution is sharp (i.e., when the two classes are well-separated). This is because the  Gaussian approximation computed with the Laplace method tends to spill out of the target distribution in region of very low densities. 

\begin{figure}[!h]
\centering
\includegraphics[scale=0.6]{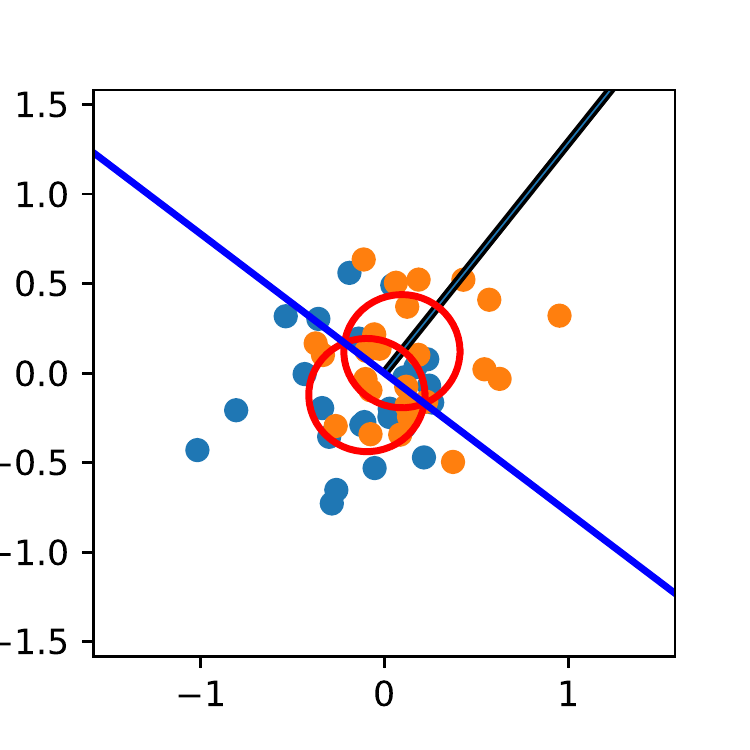}
\includegraphics[scale=0.6]{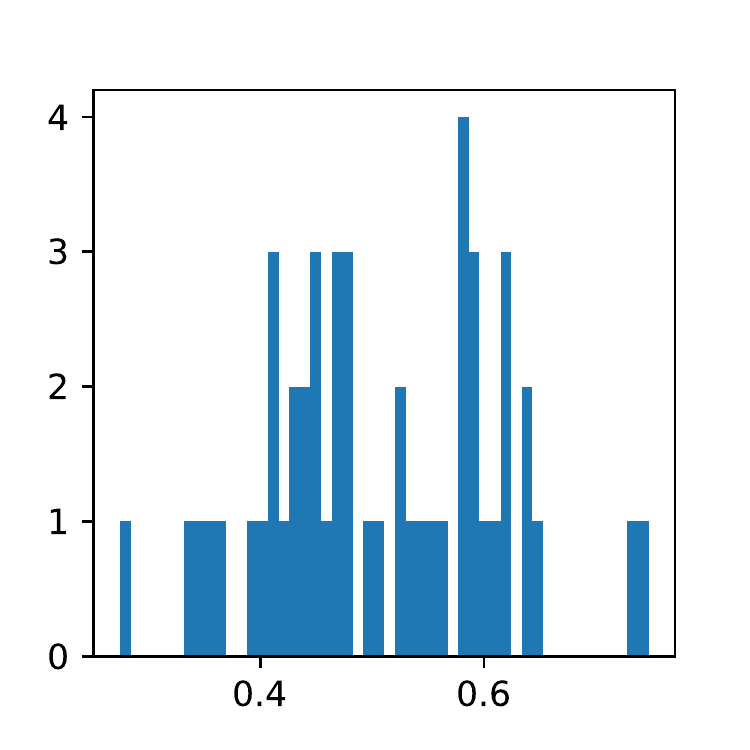}
\includegraphics[scale=0.6]{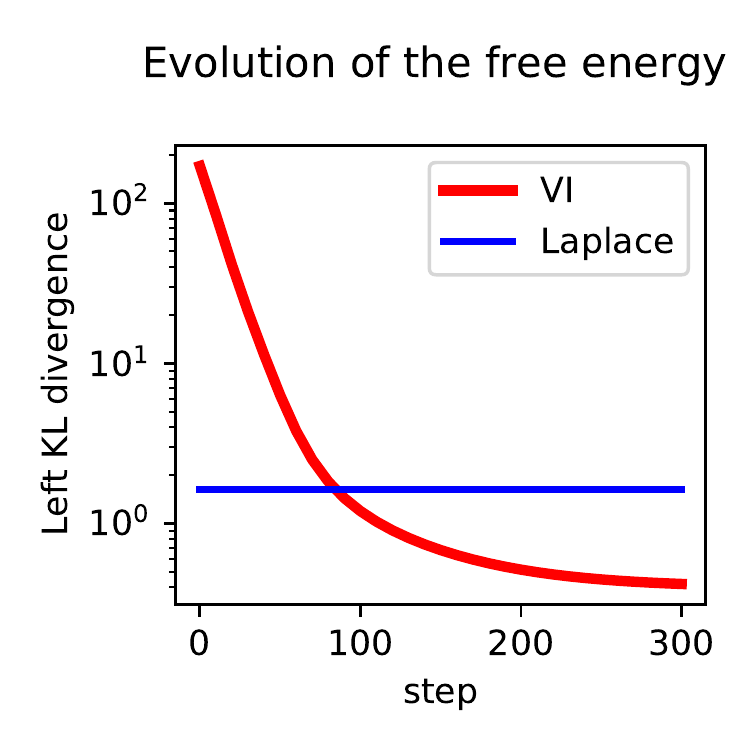}
\includegraphics[scale=0.6]{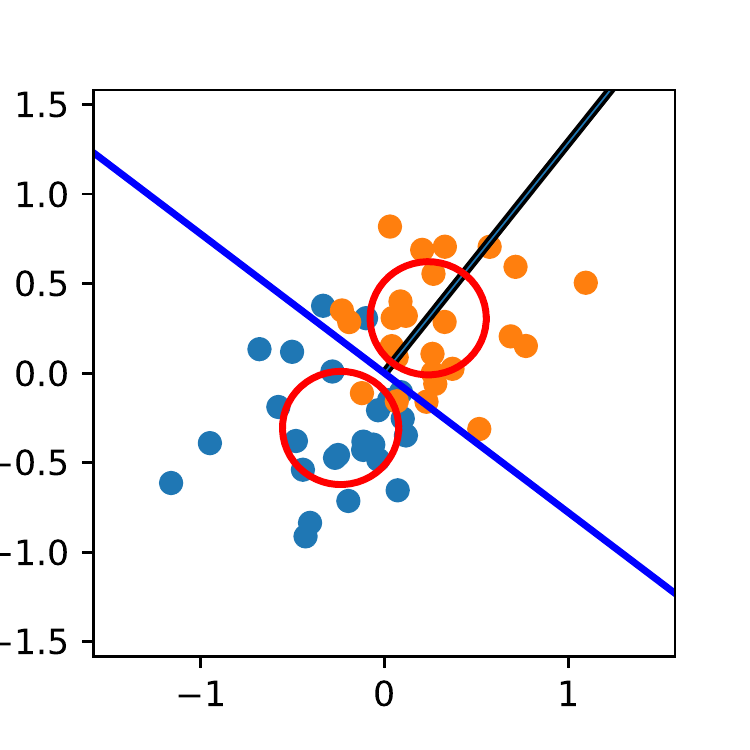}
\includegraphics[scale=0.6]{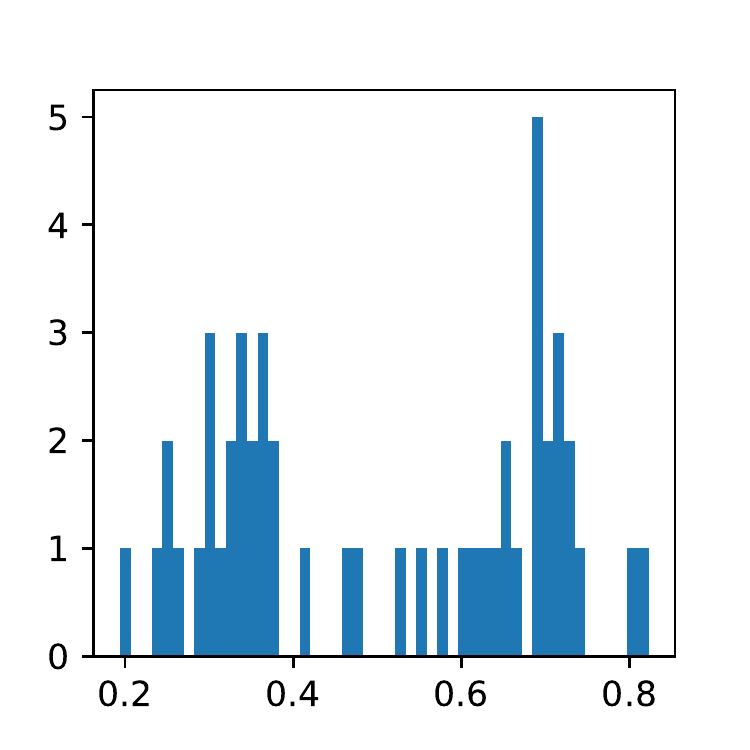}
\includegraphics[scale=0.6]{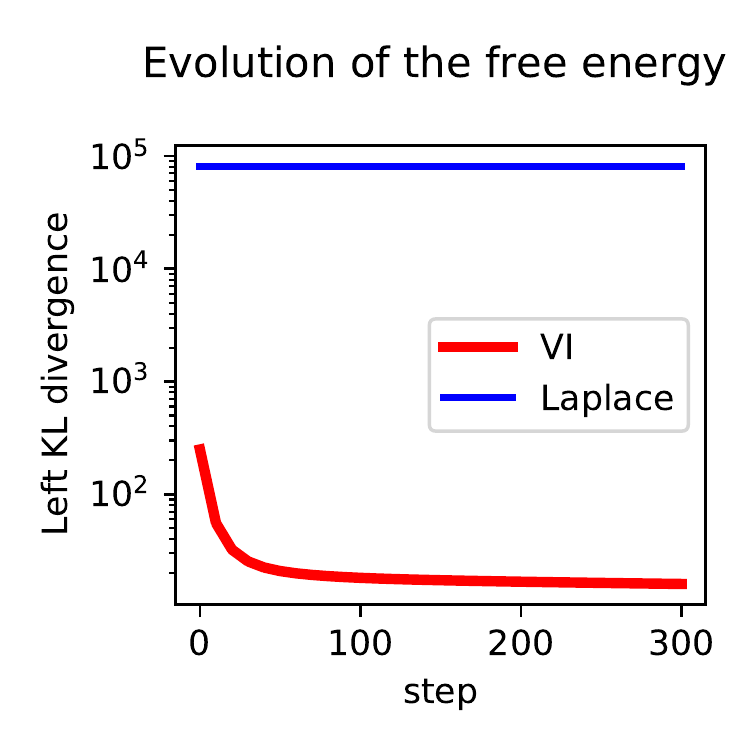}
\caption{Results in dimension $d=10$, $N=50$ for a separation factor $s=0.6$ (upper row) and $s=1.5$ (lower row). Left column:  synthetic dataset projected onto the two first  coordinates. Middle column:   histogram representing the number of examples predicted at a given probability by the obtained classifier. Right column: convergence in terms of  unnormalized KL divergence. The unnormalized KL is computed via \eqref{lnZ:eq} letting $Z=1$ (upper row) and $Z=10^{20}$ (lower row). The Runge--Kutta step size is set to $0.1$.}
\label{logHD1}
\end{figure}

\begin{figure}[!h]
\centering
\includegraphics[scale=0.6]{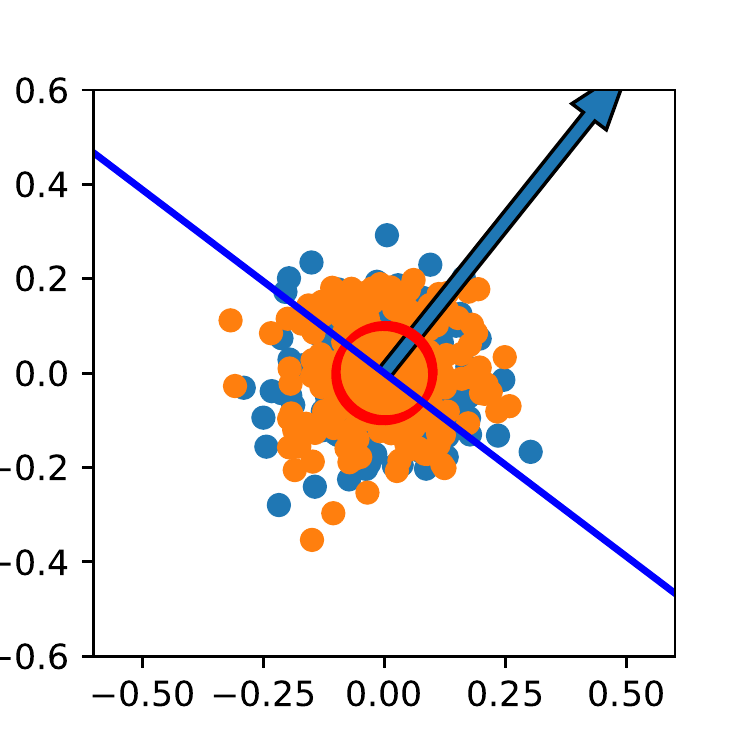}
\includegraphics[scale=0.6]{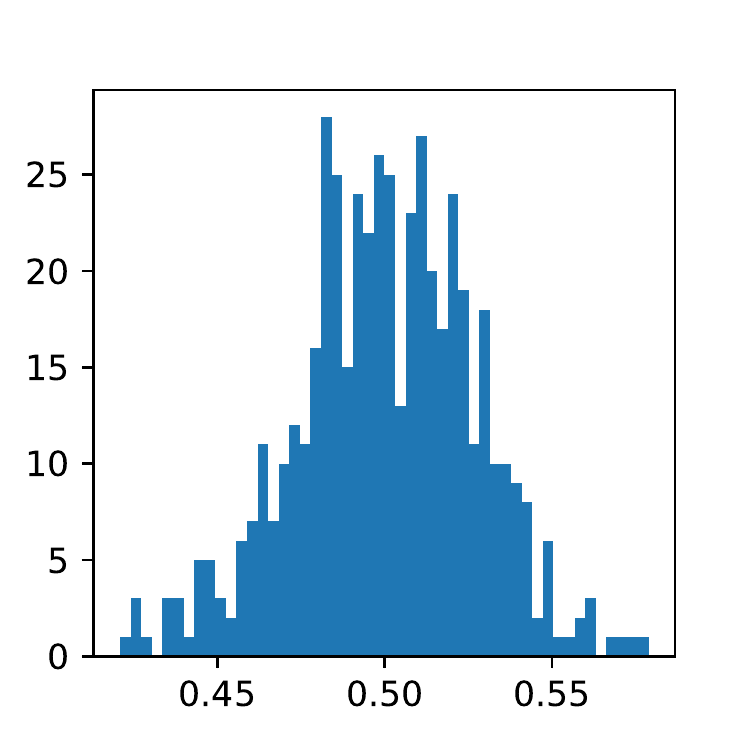}
\includegraphics[scale=0.6]{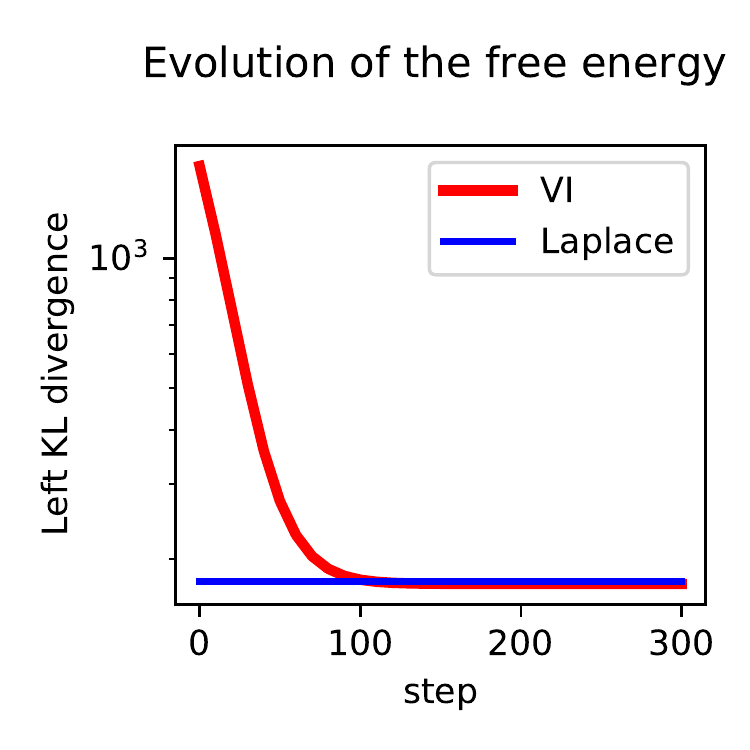}
\includegraphics[scale=0.6]{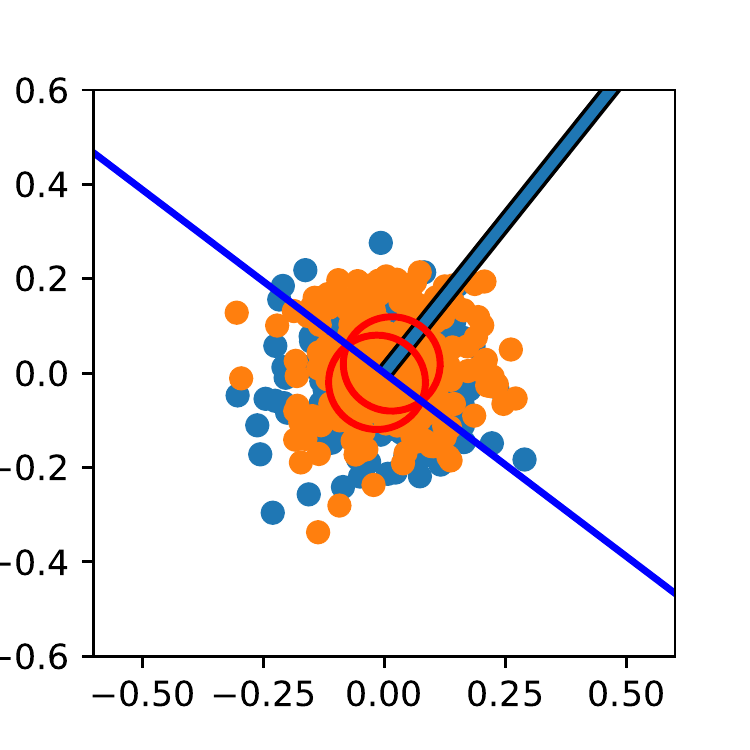}
\includegraphics[scale=0.6]{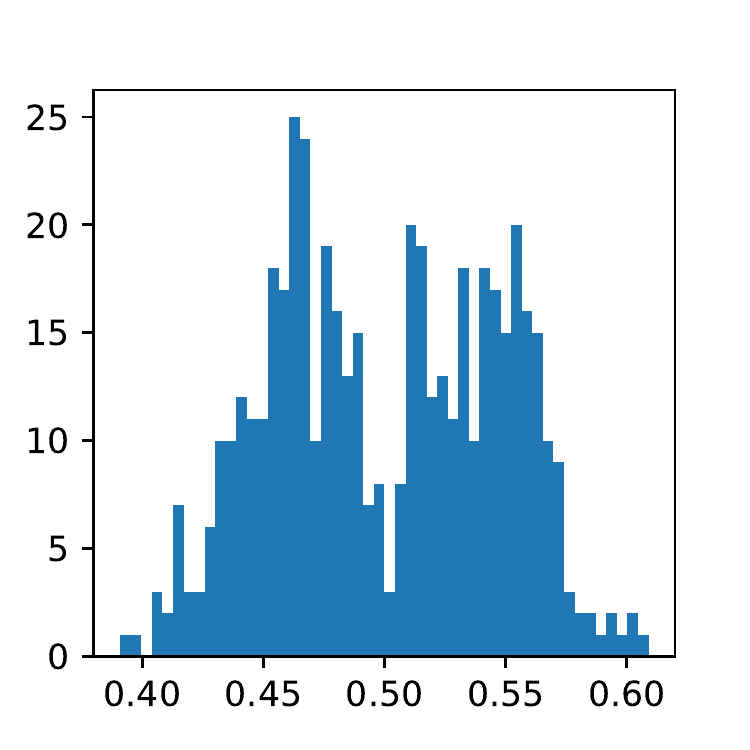}
\includegraphics[scale=0.6]{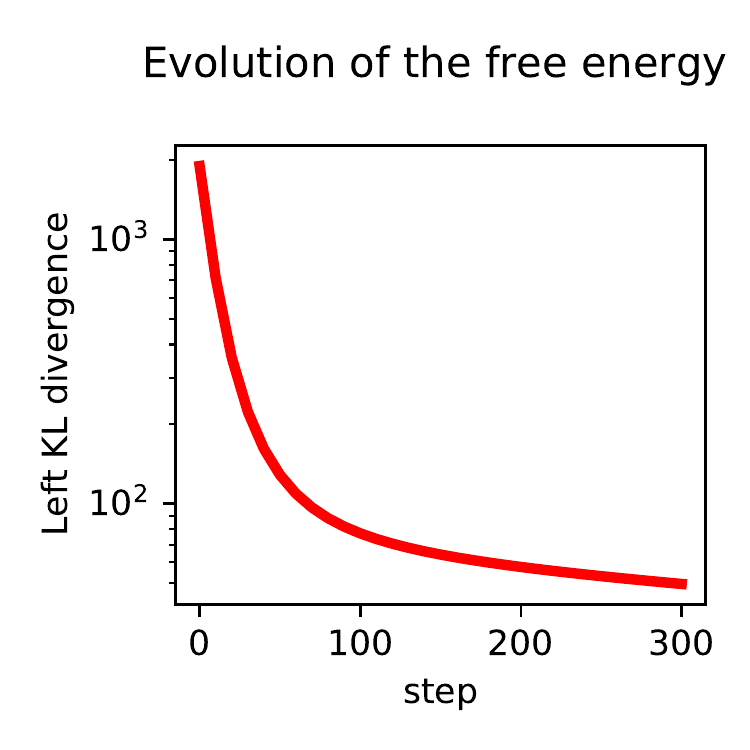}
\caption{Same as Figure~\ref{logHD1} but with dimension $d=100$, $N=500$, with separation factor $s=0.05$ (upper row) and $s=0.3$ (lower row). The unnormalized KL is computed  letting $Z=1$ (upper row) and $Z=10^{100}$ (lower row).  The unnormalized KL divergence for the Laplace method is not shown in the lower plot  because it is too large to be visualized.
}
\label{logHD3}
\end{figure}

\clearpage

\section{Experiments for mixture of Gaussians VI}\label{scn:xp-gmmvi}

In this section, we consider a mixture of Gaussians model to approximate a target distribution in the simple two-dimensional case. The goal is to illustrate the convergence of the approximating particles system \eqref{mean:particle:eq}-\eqref{cov:particle:eq} to an approximation of the target in the form of a finite mixture of Gaussians.

\subsection{Setup} 
We consider the bimodal and logistic targets defined in Section~\ref{scn:xp-vi}, as well as more complex  targets  defined as finite mixtures of Gaussians:
 \begin{align*}
 \pi=\sum_{i=1}^M w^*_i \,\mathcal{N}(m^*_i,\Sigma^*_i)\,.
 \end{align*}
The gradient $\nabla_x \log  \pi(x)$   then writes:
  \begin{align*}
\nabla_x \log \pi(x)=\frac{1}{\pi(x)}\, \nabla_x  \pi(x)=\frac{1}{\pi(x)}  \sum_{i=1}^M w^*_i \,{\Sigma^*_i}^{-1}\, (x-m^*_i)\,\mathcal{N}(x\mid m^*_i,\Sigma^*_i)\,.
 \end{align*}
 We consider $K$ Gaussian samples equally weighted such that our mixture model is $p=  \frac{1}{K} \sum_{i=1}^K p_i= \frac{1}{K} \sum_{i=1}^K \mathcal{N}(m_i,\Sigma_i)$. Even if we are using an approximation with equal weights, contrary to the target (which can be arbitrary in practice), we can hope from Theorem~\ref{THM:MIXTURE_GF}  convergence to a good approximation of $\pi$ when letting $K \gg M$.

\subsection{Implementation details}

\subsubsection{Integration of the ODEs}
Following equations \eqref{mean:particle:eq}-\eqref{cov:particle:eq}, we implement the system of ODEs
 \begin{align*}
&\dot m_k=\E_{p_k}[\nabla_x  \ln \pi] -\E_{p_k}[\nabla_x  \ln p]\,, \\
&\dot \Sigma_k= A + A^\T\,, \\
&\text{where}~A= \E_{p_k} [(x-\mu_k) \otimes \nabla_x  \ln \pi] - \E_{p_k} [(x-\mu_k)\otimes \nabla_x  \ln p]\,.
  \end{align*}
We recall that these equations arise from applying Theorem~\ref{THM:MIXTURE_GF} to a discrete mixing measure and applying integration by parts to obtain Hessian-free updates. To constrain the covariance matrix to remain definite positive along the numerical integration process, we use the same method as in the Gaussian \vi{} case (Section~\ref{scn:numerical}): we replace each ODE for a covariance matrix $\Sigma$ by an ODE for its lower triangular matrix factor $R$ where   $\Sigma = RR^\T$.
To compute the expectations, we use the sigma points with cubature rules as described in Section~\ref{scn:numerical}.

Finally, to solve the ODEs we consider a classical Runge--Kutta scheme of $4^{\rm th}$ order. The coupling between the ODEs is taken into account by applying the Runge--Kutta algorithm on the joint ODE $\dot{X}=F(X)$ where the Gaussian parameters are stacked as follows:
\begin{align*}
    X=\begin{bmatrix} m_1,\dotsc,m_K,\vvec(R_1),\dotsc,\vvec(R_K)\end{bmatrix}\,.
\end{align*}
For our problem, setting the Runge--Kutta step size to $0.1$ is sufficient.
We observe that asymptotic convergence, i.e., complete stability of the ODE system, may require many iterations when we propagate a large number of coupled Gaussian particles. On the other hand, the KL divergence is roughly stable after $30$ steps.

\subsubsection{Initialization of the Gaussian particles} 

We start by illustrating the sensitivity of the algorithm to the initialization on a simple example with one Gaussian particle and a bimodal target (Figure~\ref{XP0}). When the initial particle is close to one of the two modes and has same covariance as each mode, then it moves towards that mode and its covariance remains constant. When the particle is equidistant from the two modes, then the mean of the particle converges to the average of the two modes, and its covariance increases. Perturbing the initial condition slightly leads the particle to be attracted to one of the two modes. 

\begin{figure}[!h]
\includegraphics[scale=0.45]{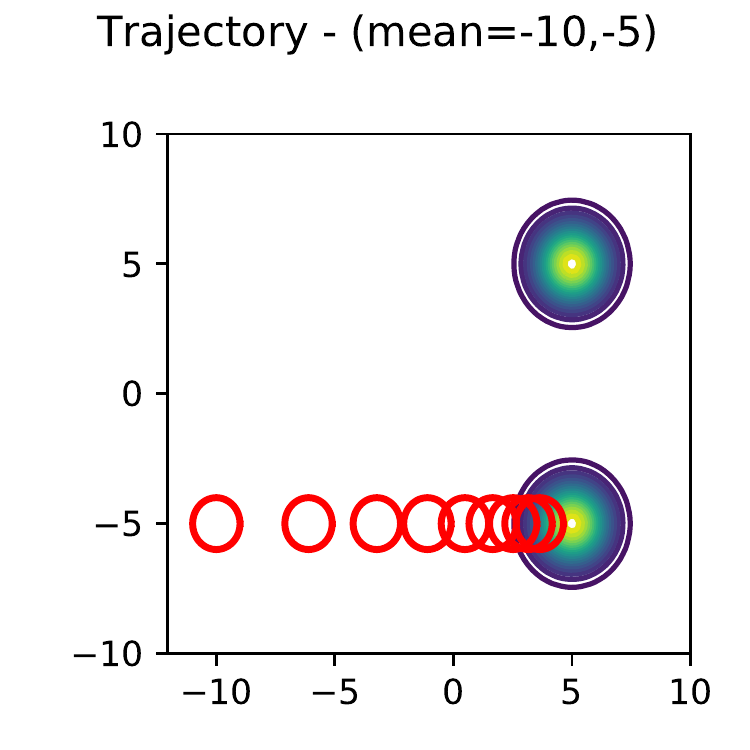}
\includegraphics[scale=0.45]{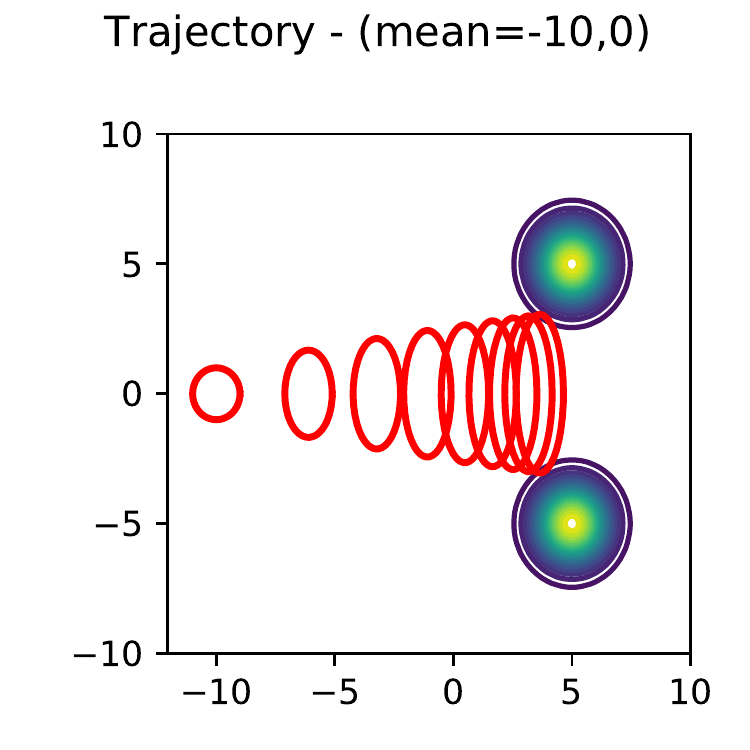}
\includegraphics[scale=0.45]{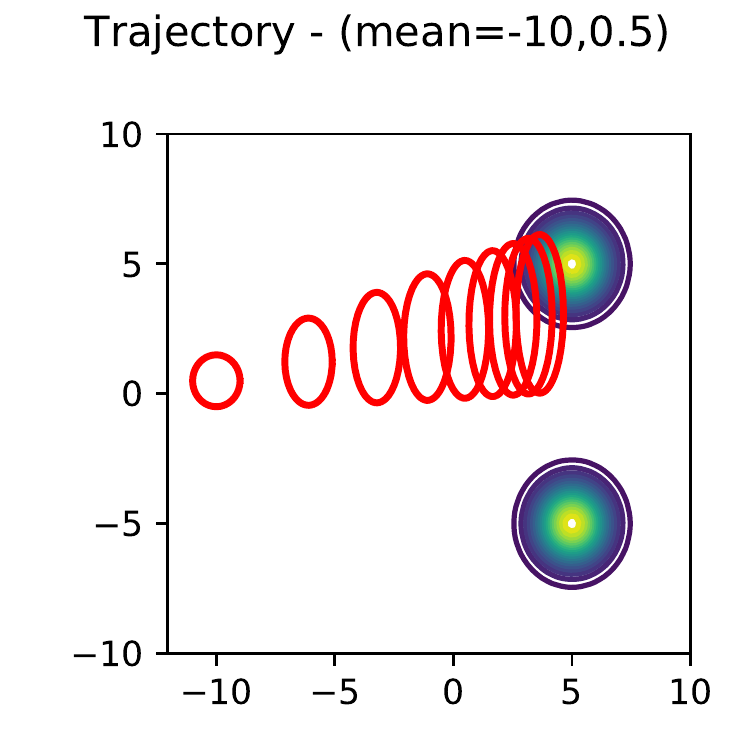}
\includegraphics[scale=0.45]{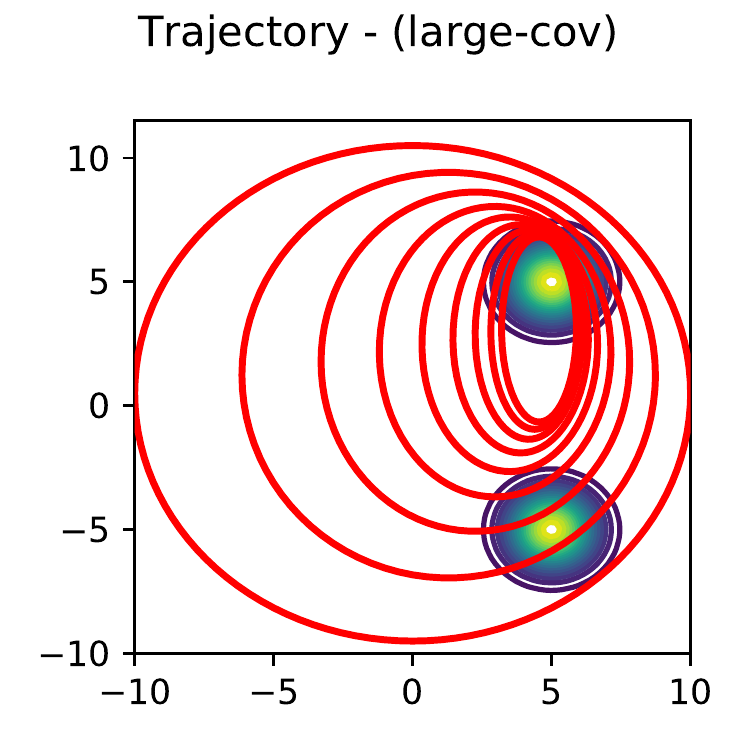}
\caption{Trajectory of a Gaussian particle for different initial conditions. In the three left plots, we initialize the particle with the same covariance as each mode, and in the right plot we initialize the particle with a large covariance.}
\label{XP0}
\end{figure} 

To avoid bad initialization, the idea is  to  generate instead more particles than the number of modes of the target. Finally, we initialize our Gaussian particles
with means randomly chosen from a Euclidean ball which covers most of the mass of the target density.

\subsection{Experimental results}

We show qualitative fits by plotting the contour lines of the approximated density (compared to the true density), as well as quantitative evaluation of the  KL divergence to the target.

The true posterior is computed using a discrete grid of size $100 \times 100$. The KL divergences are evaluated using Monte Carlo sampling.

\subsubsection{Simple targets}

We consider a mixture of $20$ Gaussians to approximate the   targets defined in Section \ref{gauss:target}. We see  in Figure~\ref{SamplingXP1}  that the algorithm captures both modes of the bimodal distribution, and approximates well the logistic target also, see  Figure~\ref{SamplingXP2}.
 
\begin{figure}[!h]
\centering
\includegraphics[scale=0.5]{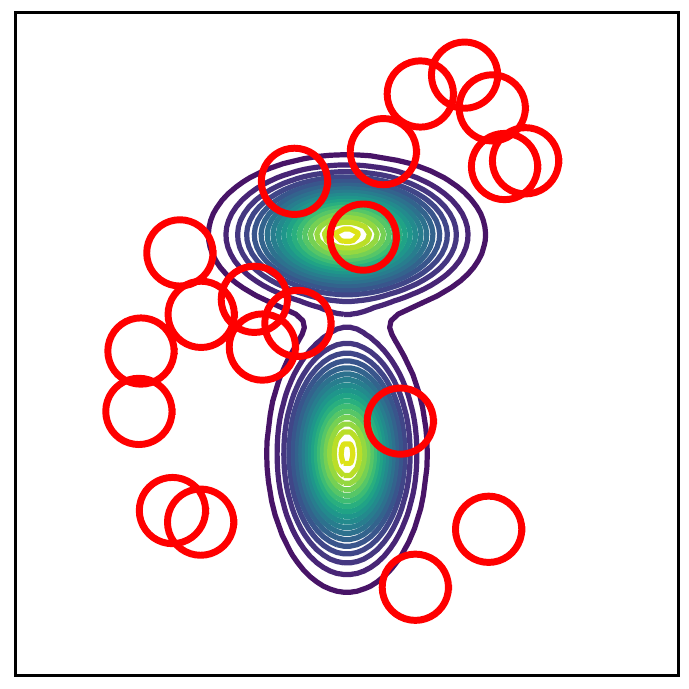}
\includegraphics[scale=0.5]{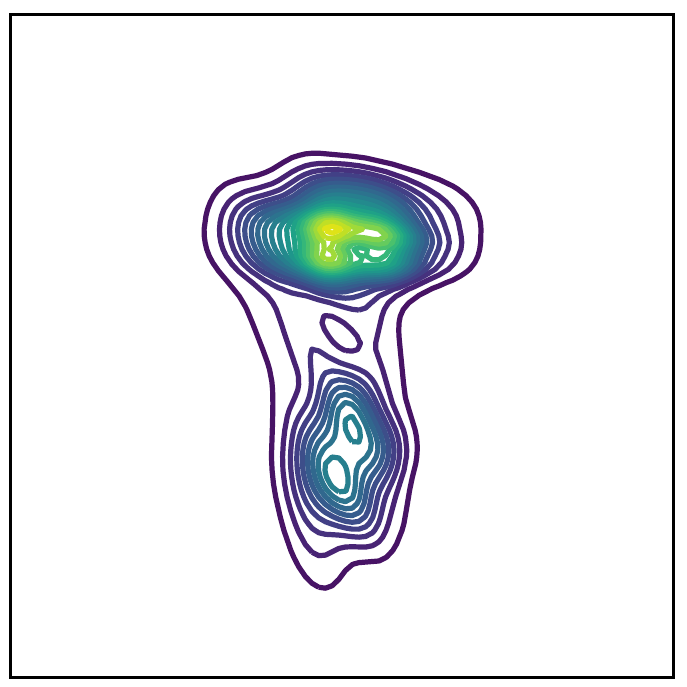}
\caption{Approximation of the  bimodal target using $20$ Gaussian particles at initialization (left) and at final step  (right). We use Runge--Kutta integration with step size $0.1$ and integrattion time $T=30$ (i.e., $300$ steps). }
\label{SamplingXP1}
\end{figure} 

\begin{figure}[!h]
\centering
\includegraphics[scale=0.5]{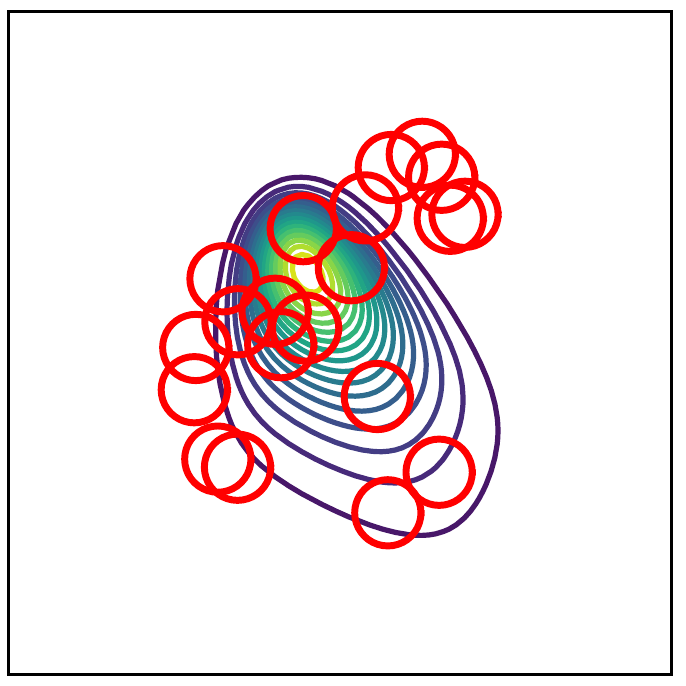}
\includegraphics[scale=0.5]{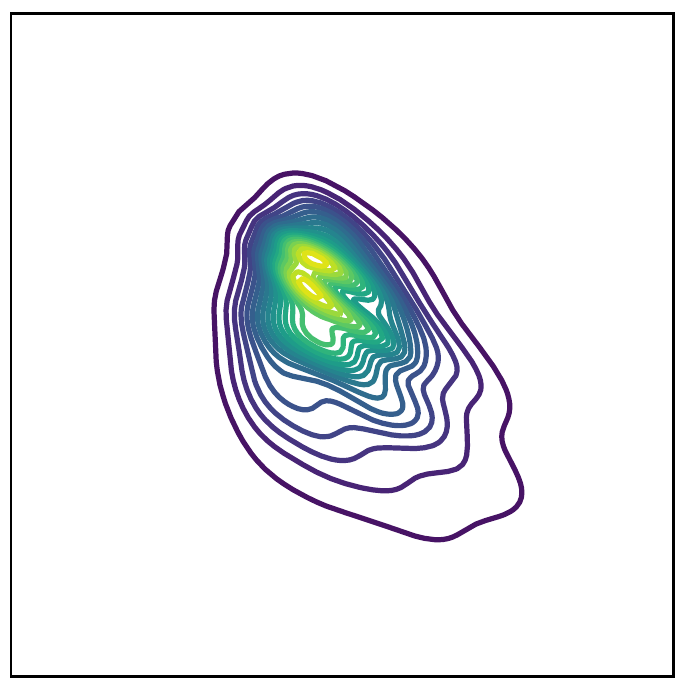}
\caption{Approximation of the logistic target with $20$ Gaussian particles. We use Runge--Kutta integration with step size $0.1$ and integration time $T=30$ (i.e., $300$ steps).}
\label{SamplingXP2}
\end{figure} 

\subsubsection{More complex targets}

We assess the sensitivity to the number of particles in Figures~\ref{XP1_nbParticles},  ~\ref{XP2_nbParticles}, and~\ref{XP3_nbParticles}. When the number of particles increases, better KL divergence is achieved and the distribution is better approximated.  We also  note that when the samples initially cover a low density mode as in Figure~\ref{XP2_nbParticles}, they tend to overestimate the local density before they escape the mode. 

\begin{figure}[!h]
\centering
\includegraphics[scale=0.4]{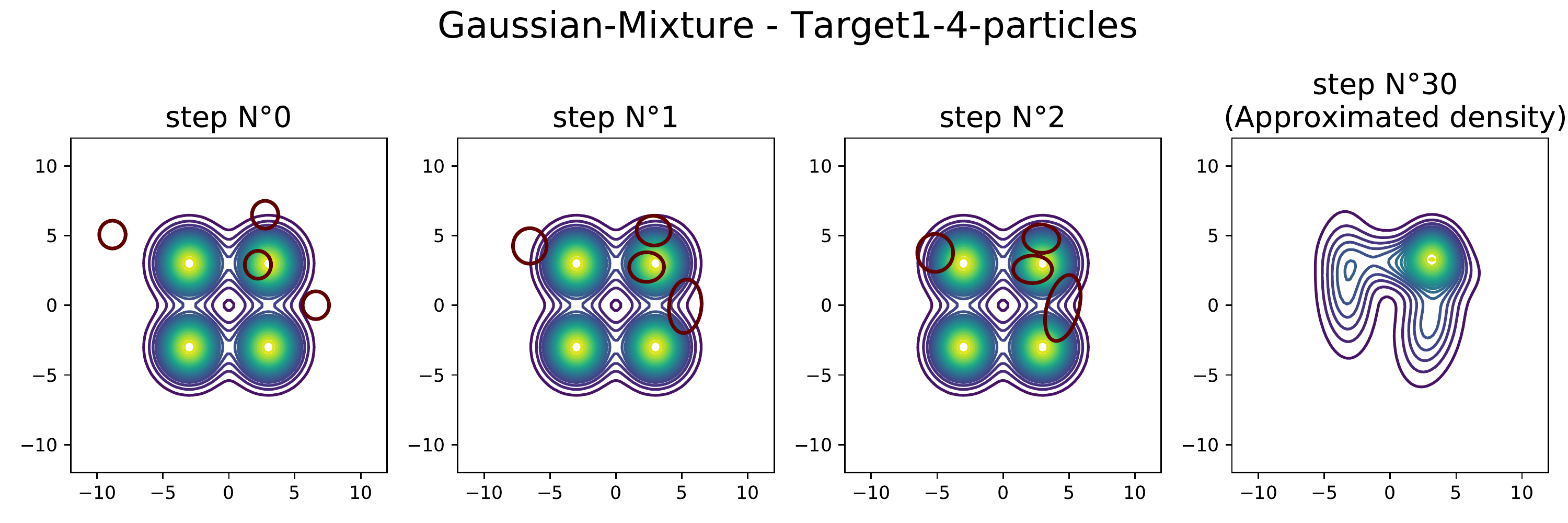}
\includegraphics[scale=0.4]{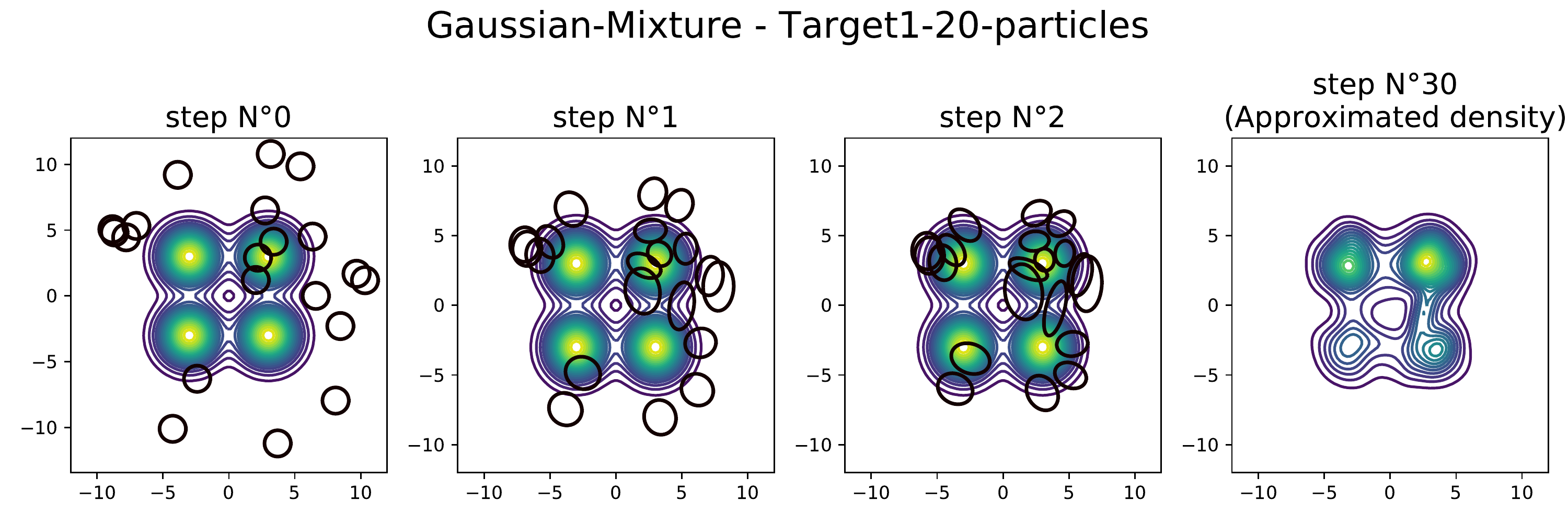}
\includegraphics[scale=0.4]{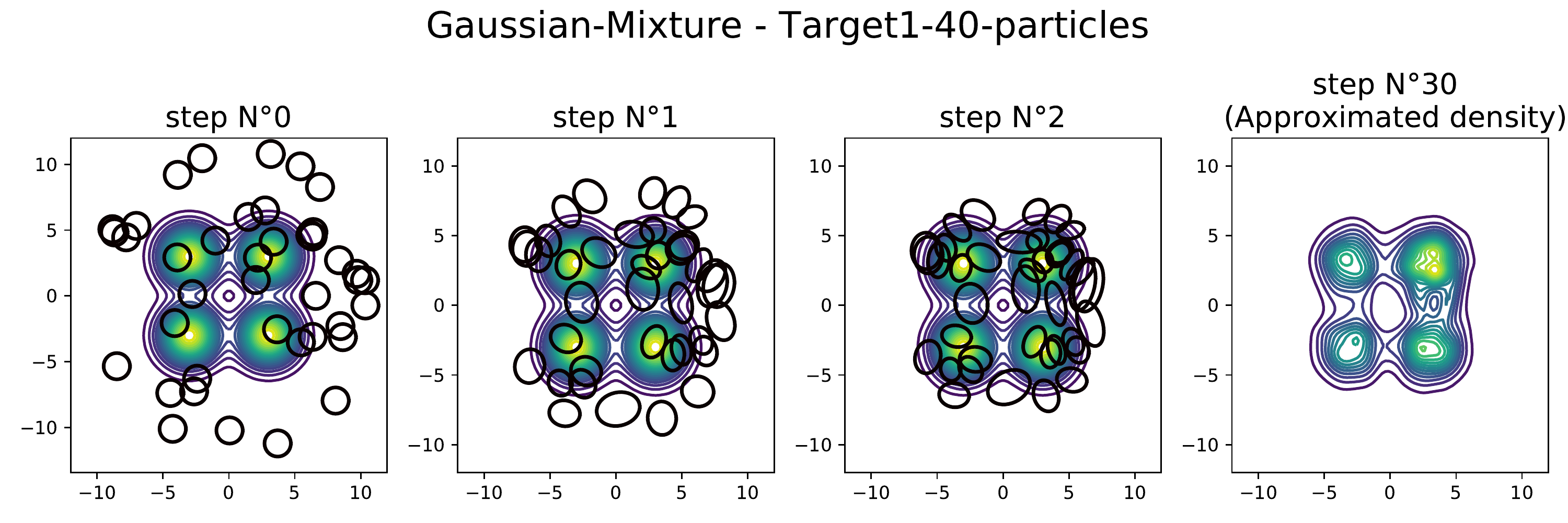}
\includegraphics[scale=0.4]{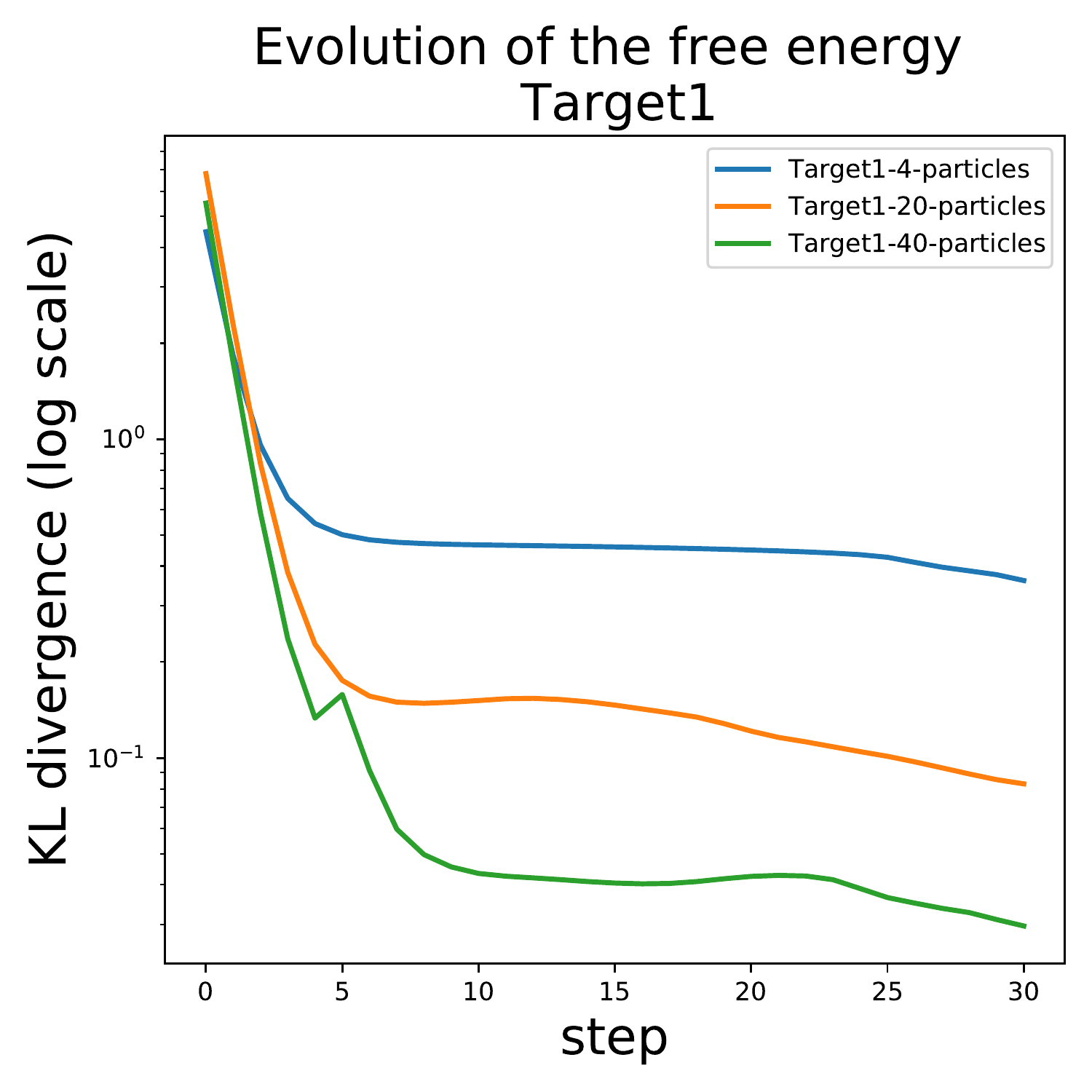}
\caption{A target with $4$ equally weighted modes and isotropic covariances.}
\label{XP1_nbParticles}
\end{figure} 

\begin{figure}[!h]
\centering
\includegraphics[scale=0.4]{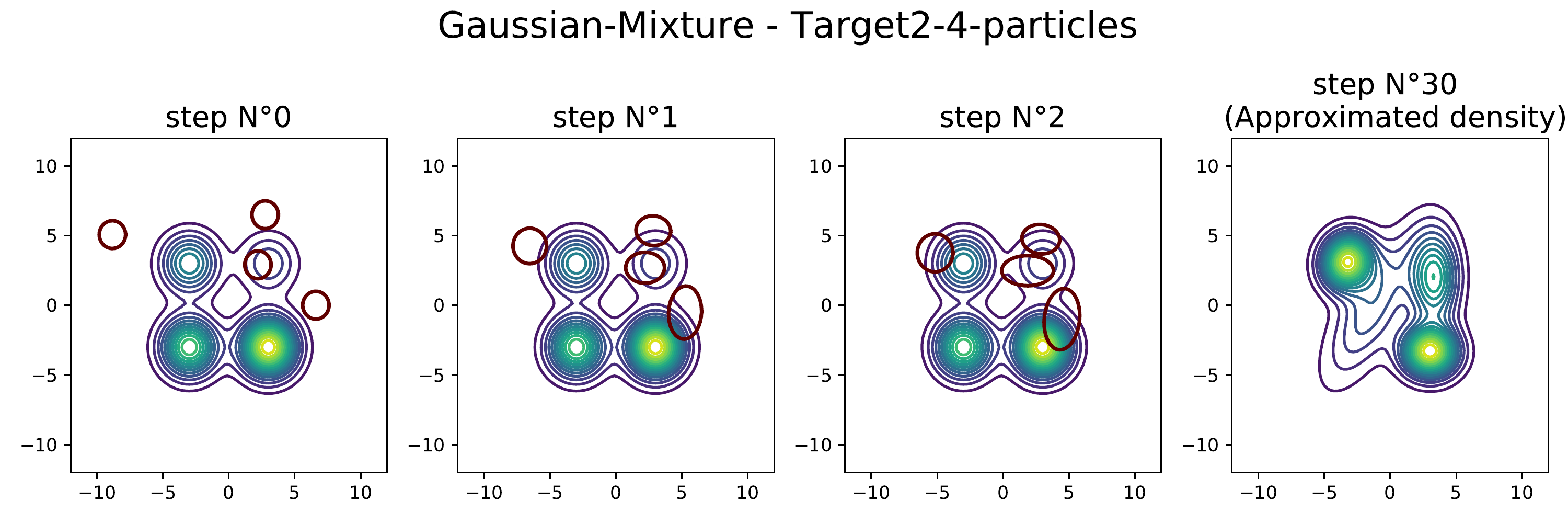}
\includegraphics[scale=0.4]{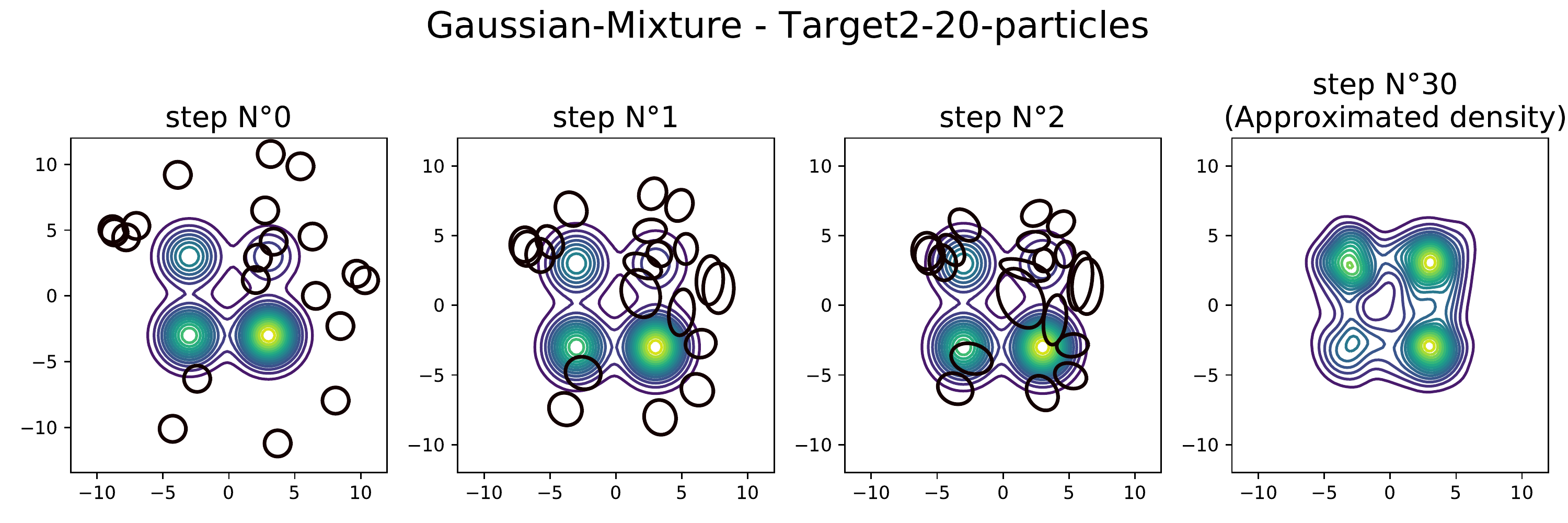}
\includegraphics[scale=0.4]{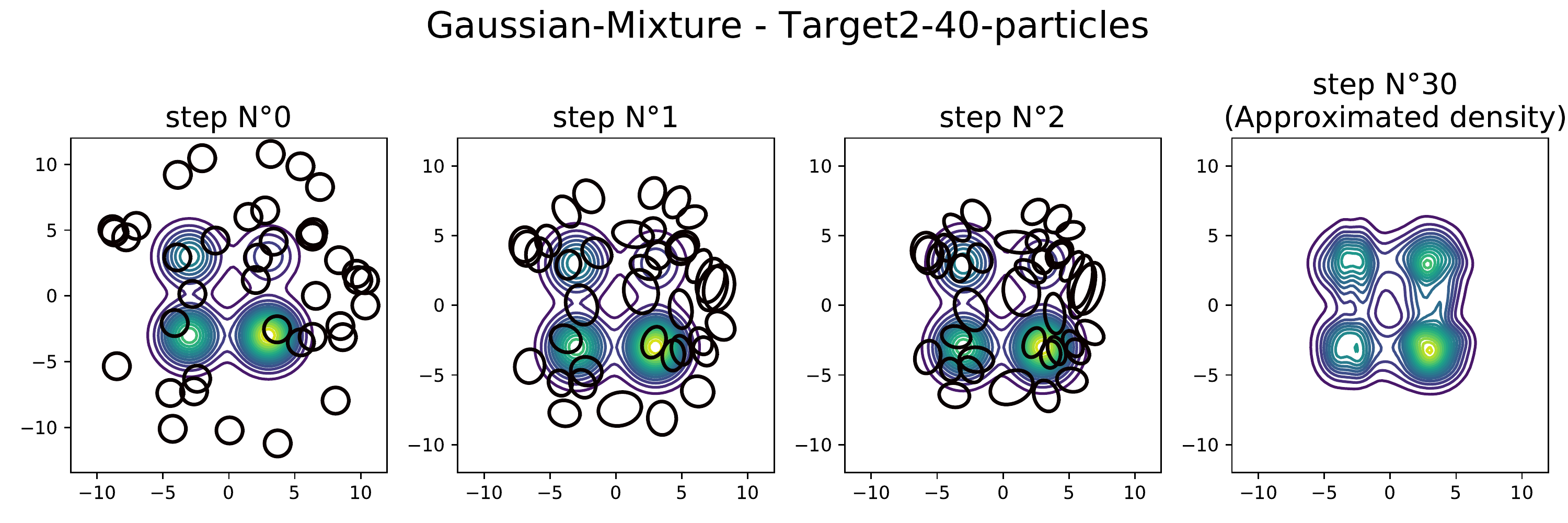}
\includegraphics[scale=0.4]{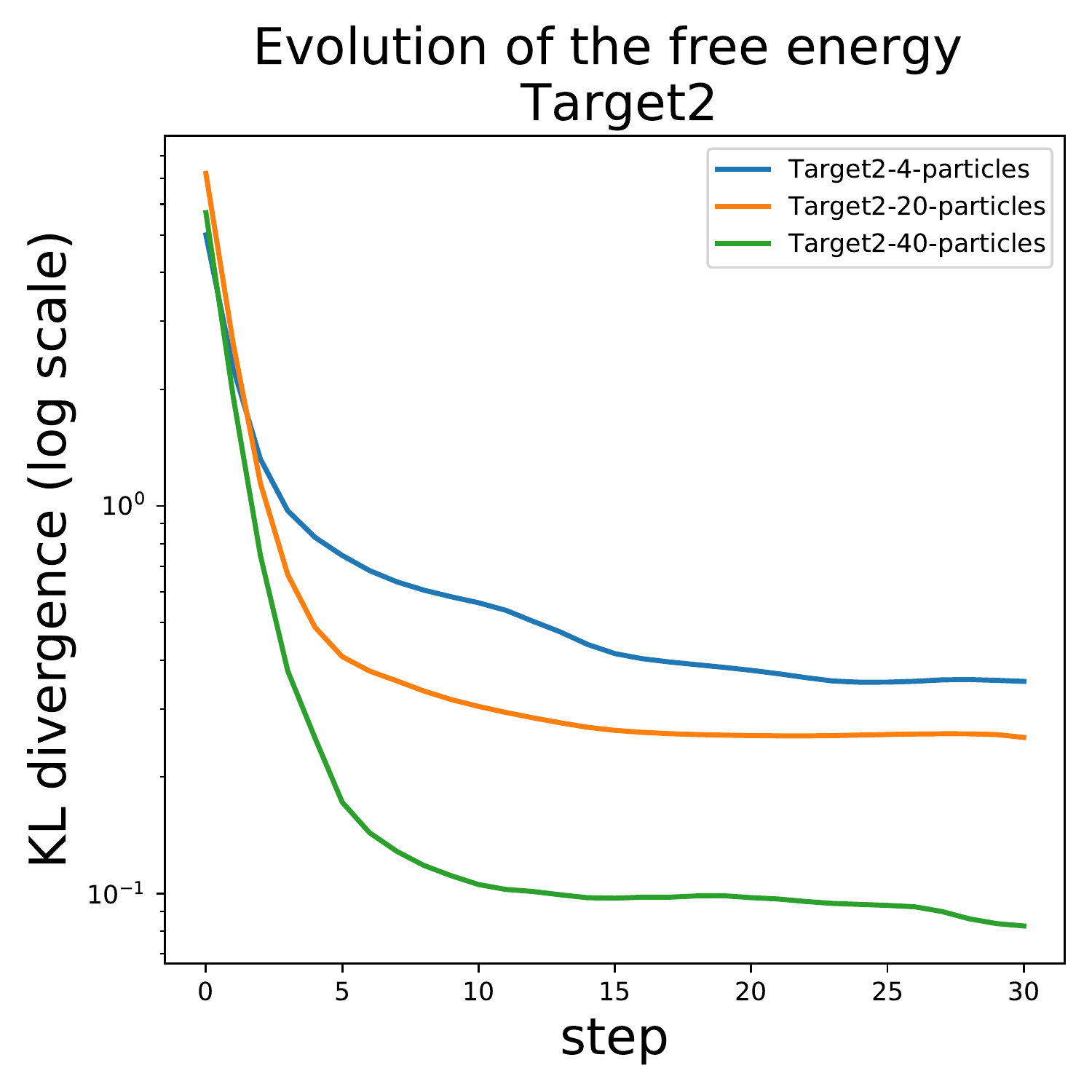}
\caption{A target with $4$ non-equally weighted modes and isotropic covariances.}
\label{XP2_nbParticles}
\end{figure} 

\begin{figure}[!h]
\centering
\includegraphics[scale=0.4]{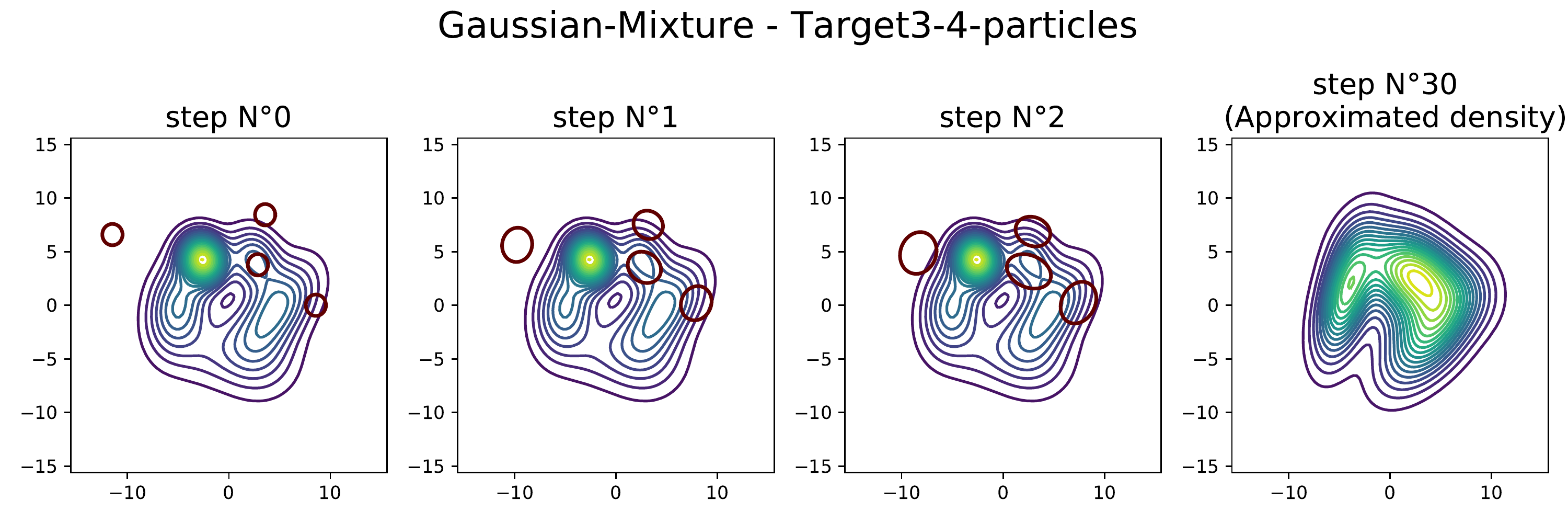}
\includegraphics[scale=0.4]{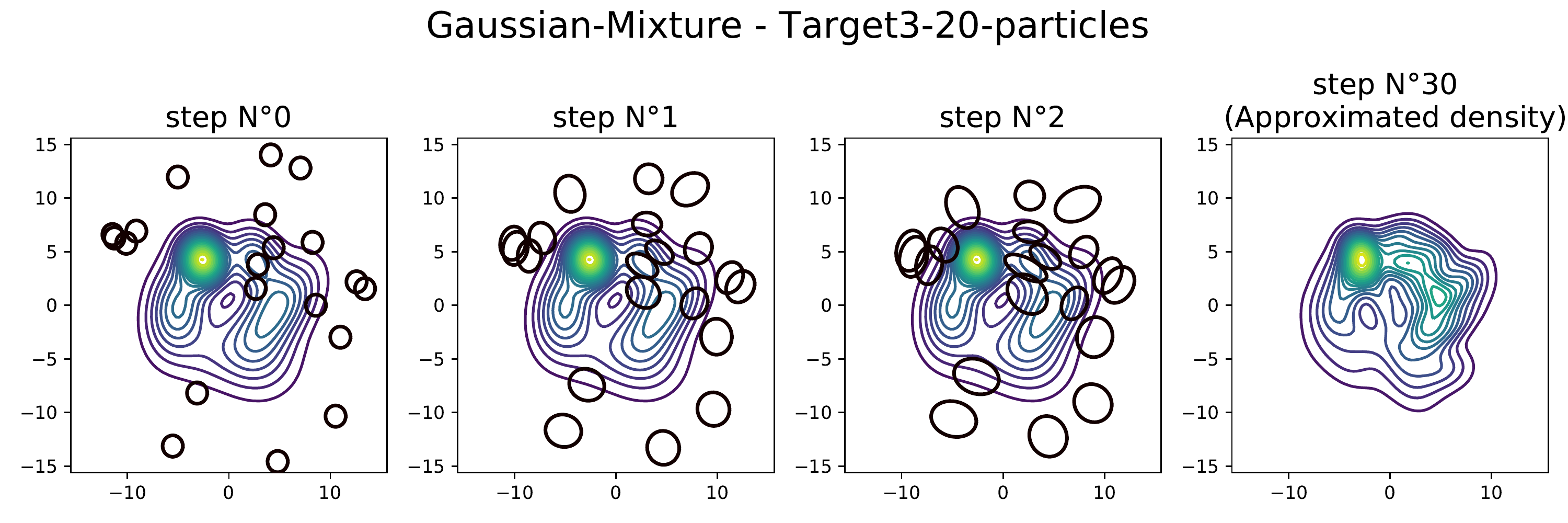}
\includegraphics[scale=0.4]{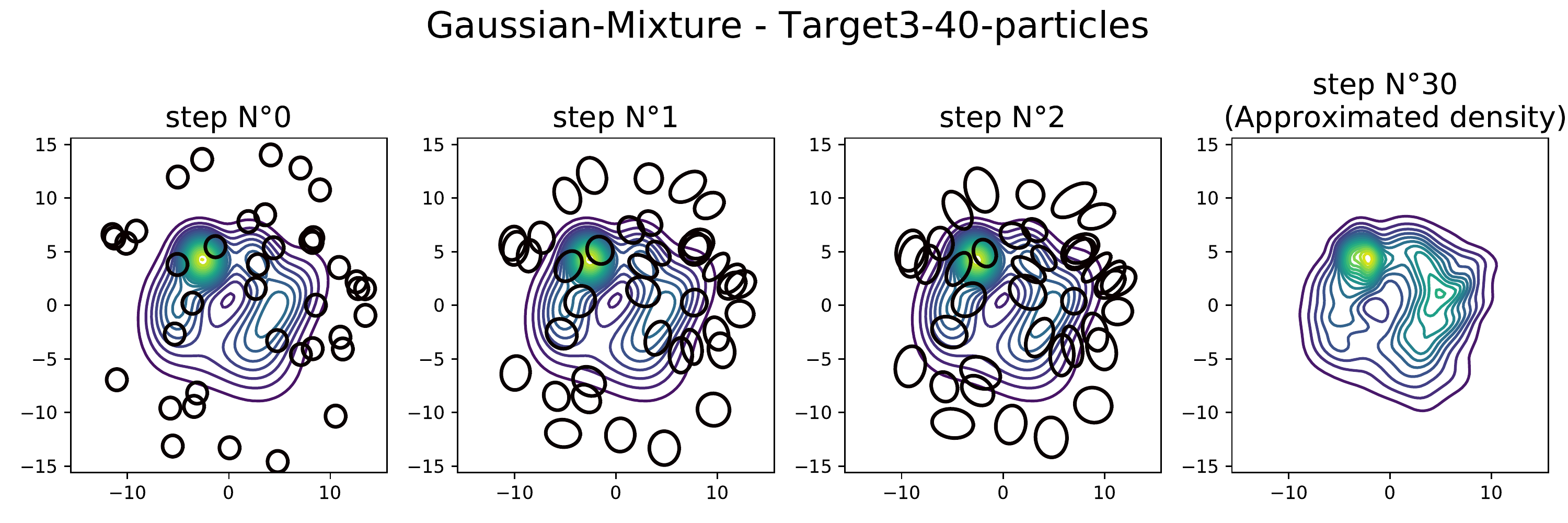}
\includegraphics[scale=0.4]{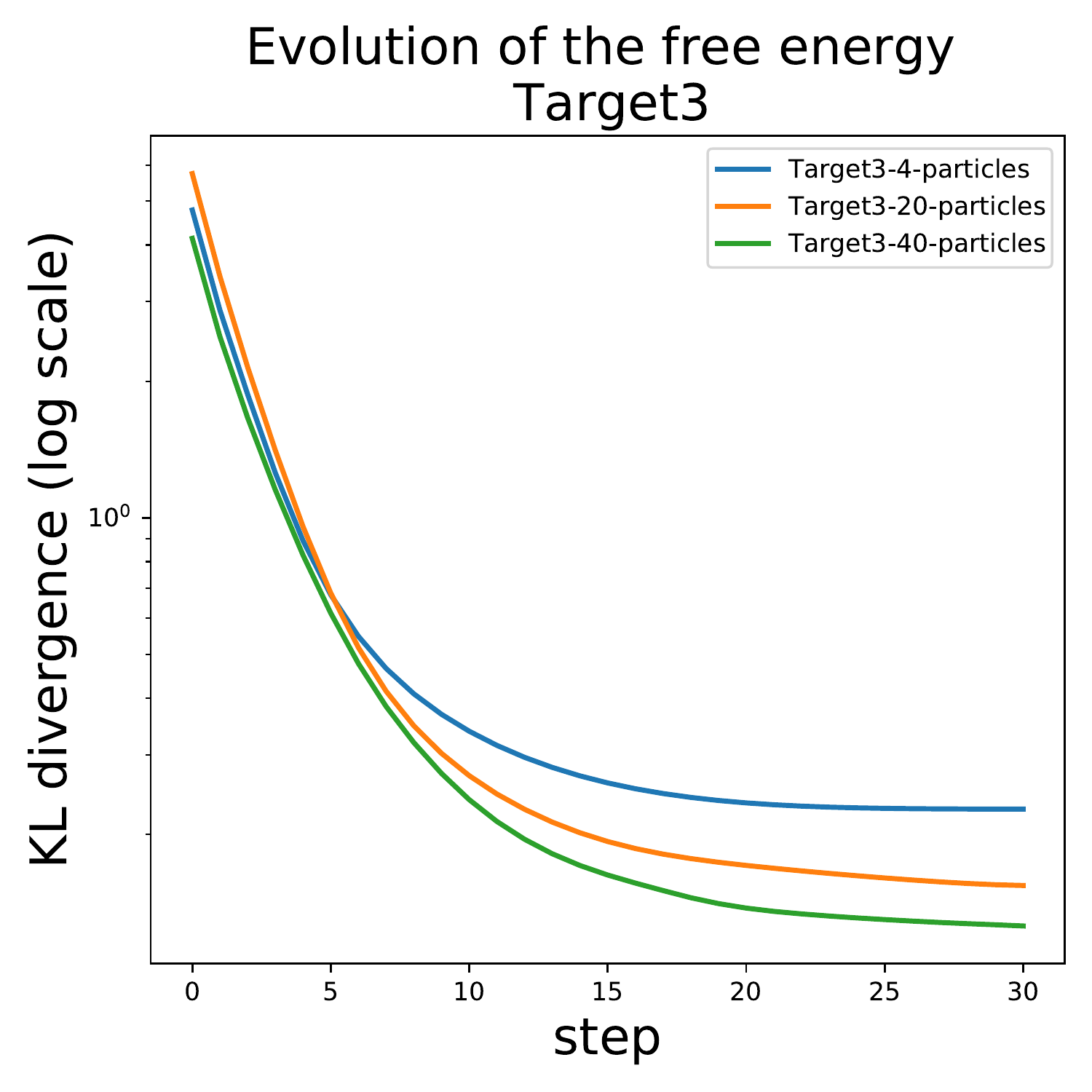}
\caption{A target with $6$ non-equally weighted modes and isotropic covariances.}
\label{XP3_nbParticles}
\end{figure}

\bibliography{Bures-JKO.bib}

\end{document}